\documentclass{article}

\PassOptionsToPackage{numbers,compress,sort}{natbib}

 \usepackage[preprint]{neurips_2026}

\usepackage[utf8]{inputenc}
\usepackage[T1]{fontenc}

\usepackage{amsmath}
\usepackage{amssymb}
\usepackage{amsfonts}
\usepackage{mathtools}
\usepackage{bm}

\usepackage[dvipsnames]{xcolor}
\usepackage{graphicx}
\usepackage{pifont}
\usepackage{xspace}
\usepackage[normalem]{ulem}

\usepackage{hyperref}
\usepackage{url}
\usepackage{booktabs}
\usepackage{nicefrac}
\usepackage{microtype}

\usepackage{algorithm}
\usepackage{algorithmic}
\usepackage{enumitem}
\usepackage{comment}
\usepackage{commath}

\usepackage{booktabs}
\usepackage[table]{xcolor}

\newcommand{\yinan}[1]{{\color{blue}[Yinan: #1]}}

\newcommand{\kj}[1]{{\color{RedOrange}[#1]}}

\newcommand{\blue}[1]{{\color[rgb]{.3,.5,1}#1}}

\newcommand{\grn}[1]{{\color{ForestGreen}#1}}

\newcommand{\guide}[1]{{\color{Violet}[#1]}}

\definecolor{kjgray}{rgb}{.7,.7,.7}

\def\ddefloop#1{\ifx\ddefloop#1\else\ddef{#1}\expandafter\ddefloop\fi}
\def\ddef#1{\expandafter\def\csname c#1\endcsname{\ensuremath{\mathcal{#1}}}}
\ddefloop ABCDEFGHIJKLMNOPQRSTUVWXYZ\ddefloop
\def\ddef#1{\expandafter\def\csname b#1\endcsname{\ensuremath{{\boldsymbol{#1}}}}}
\ddefloop ABCDEFGHIJKLMNOPQRSUVWXYZabcdeghijklmnopqrtsuvwxyz\ddefloop
\def\ddef#1{\expandafter\def\csname h#1\endcsname{\ensuremath{\hat{#1}}}}
\ddefloop ABCDEFGHIJKLMNOPQRSTUVWXYZabcdefghijklmnopqrsuvwxyz\ddefloop % except for 't' because it is used by others
\def\ddef#1{\expandafter\def\csname hc#1\endcsname{\ensuremath{\widehat{\mathcal{#1}}}}}
\ddefloop ABCDEFGHIJKLMNOPQRSTUVWXYZ\ddefloop
\def\ddef#1{\expandafter\def\csname t#1\endcsname{\ensuremath{\tilde{#1}}}}
\ddefloop ABCDEFGHIJKLMNOPQRSTUVWXYZabcdefghijklmnopqrtsuvwxyz\ddefloop
\def\ddef#1{\expandafter\def\csname r#1\endcsname{\ensuremath{\mathring{#1}}}}
\ddefloop ABCDEFGHIJKLMNOPQRSTUVWXYZabcdefghijklmnopqrtsuvwxyz\ddefloop
\def\ddef#1{\expandafter\def\csname bar#1\endcsname{\ensuremath{\bar{#1}}}}
\ddefloop ABCDEFGHIJKLMNOPQRSTUVWXYZabcdefghijklmnopqrtsuvwxyz\ddefloop
\def\ddef#1{\expandafter\def\csname wbar#1\endcsname{\ensuremath{\overline{#1}}}}
\ddefloop ABCDEFGHIJKLMNOPQRSTUVWXYZabcdefghijklmnopqrtsuvwxyz\ddefloop
\def\ddef#1{\expandafter\def\csname tc#1\endcsname{\ensuremath{\widetilde{\mathcal{#1}}}}}
\ddefloop ABCDEFGHIJKLMNOPQRSTUVWXYZ\ddefloop

\newcommand{\Acal}{\mathcal{A}}

\def\hmu{{\ensuremath{\hat{\mu}}} }

\DeclareMathOperator{\EE}{\mathbb{E}}

\DeclareMathOperator{\PP}{\mathbb{P}}

\def\RR{{\mathbb{R}}}

\newcommand{\fr}[2]{ { \frac{#1}{#2} }}

\def\eps{\ensuremath{\varepsilon}\xspace}

\def\sm{{\ensuremath{\setminus}\xspace} }

\def\tTh{\ensuremath{\tilde{\Theta}}}

\newcommand{\what}[1]{ {\ensuremath{\widehat{#1}}} }

\def\hDelta{\ensuremath{\what{\Delta}}\xspace}

\makeatletter
\newcommand{\vast}{\bBigg@{3}}
\newcommand{\Vast}{\bBigg@{4}}
\makeatother

\def\cS{{\ensuremath{\mathcal{S}}}}

\def\Dt{\Delta}

\usepackage{amsthm}

\newtheorem{theorem}{Theorem}
\newtheorem{lemma}[theorem]{Lemma}
\newtheorem{proposition}[theorem]{Proposition}
\newtheorem{corollary}[theorem]{Corollary}

\theoremstyle{definition}
\newtheorem{definition}[theorem]{Definition}

\newtheorem{remark}[theorem]{Remark}

\usepackage{caption}
\usepackage{subcaption}
\usepackage[toc,page,header]{appendix}
\usepackage{minitoc}
\doparttoc
\usepackage{silence}
\makeatletter
\def\mtcgapbeforeheads{10\p@}
\def\mtcgapafterheads{-15\p@}  
\makeatother

\newcommand{\edit}[2]{{\xspace\textcolor{blue}{\sout{#1}}}{ \textcolor{red}{#2}}}
{\color{kjsavedrevised}}%
\newcommand{\revisedi}[1]{{\color{MidnightBlue}#1}}

\def\Hlb{H_{\mathrm{lb}}}
\def\Alt{\mathsf{Alt}}

\usepackage{enumitem}
\setlist{nolistsep} % removes the space before and after itemize, and between items(??)
\setlist{itemsep=.1em} % space between items

\setlist[itemize]{topsep=.5pt,itemsep=0pt,parsep=2pt}
\setlist[enumerate]{topsep=.5pt,itemsep=0pt,parsep=2pt}

\newif\ifFINAL

\FINALtrue % or
\ifFINAL
  \def\blue#1{#1}
  \renewcommand{\guide}[1]{}
  \renewcommand{\kj}[1]{}
  \renewcommand{\yinan}[1]{}
  \renewcommand{\edit}[2]{{}{#2}}
  \renewcommand{\grn}[1]{{}}
  \renewcommand{\revisedi}[1]{{#1}}
\else
  \usepackage{transparent}
  \usepackage[inline]{showlabels}
  \usepackage{rotating}
  \renewcommand{\showlabelfont}%
  {\transparent{0.8}\scriptsize\bfseries\slshape\color{Lavender}}
\fi

\title{$\eps$-Good Action Identification in Fixed-Budget Monte Carlo Tree Search}

\author{%
Yinan Li \\
Department of Computer Science\\
  University of Arizona\\
  \texttt{yinanli@arizona.edu} \\
  \And
  Tuan Nguyen \\
Department of Computer Science\\
  University of Arizona\\
  \texttt{tnguyen9210@arizona.edu} \\
  \AND
  Kwang-Sung Jun \\
  CSE/GSAI \\
  POSTECH  \\
  \texttt{kwangsungjun@postech.ac.kr} \\
}

\begin{document}

\maketitle

\faketableofcontents

\begin{abstract}
We study the fixed-budget max–min action
identification problem in depth-2 max–min trees, which is an important special case of Monte Carlo Tree Search (MCTS).
In this problem, a learner sequentially and adaptively selects $T$ reward samples from leaves (depth-2 nodes) and then must recommend a subtree (depth-1 node) with the largest min-value among its children nodes.
Motivated by approximate planning, we focus on \emph{$\varepsilon$-good subtree identification}, where any subtree whose min value is within $\varepsilon$ of optimal maximin value is acceptable.
Our main contribution is an \emph{$\varepsilon$-agnostic} algorithm—requiring no knowledge of $\varepsilon$—that nevertheless achieves error bounds with explicit instance-dependent dependence on $\varepsilon$.
We show that, for every meaningful $\varepsilon$, the misidentification probability decays as
$\exp\!\big(-\tTh(\frac{T}{H_2(\varepsilon)} )\big)$, where $H_2(\varepsilon)$ captures both cross-subtree and within-subtree gaps.
In the special case where each subtree has a single leaf, the model reduces to standard multi-armed bandit identification, and our bounds recover (up to accelerating factors) the best-known $\varepsilon$-good guarantees associated with halving-style methods, while providing a new $\eps$-good analysis and guarantee for the Successive Rejects algorithm in fixed budget best arm identification.
On the lower-bound side, we present complementary positive and negative results. 
% that highlight fundamental difficulties in max--min trees and show that the intrinsic hardness is driven by at least a subset of critical leaves that determine subtree minima and certify the optimal subtree.
While there is a gap between the upper and lower bounds, we discuss the main technical challenges in obtaining a tighter lower bound, which tells us that the maximin action identification problem is quite different from the standard $K$-armed bandits. 
To our knowledge, this is the first provable algorithmic guarantee for fixed-budget maximin action identification.
\end{abstract}

\textfloatsep=.5em

\kj{
our key contribution is the UB. LB is a bit weak and needs defense on why it is not easy.

UB: we need to claim novelty on our algorithm and the UB. UB itself does not seem nontrivial/surprising because it matches the bound from the FC setting -- except that the $\eps$-good. 

\yinan{done} novelty/nontriviality on analysis? -- for this, $\eps$-good guarantee plays a role. because existing $\eps$-good guarantee is on SH, not SR (point out the difference that SH discards samples, and SR does not).

\yinan{done} novelty/nontriviality on algorithm? -- try to have an argument like 'if you don't have this particular component of our algorithm, then, we wouldn't obtain our guarantee'

try to compare it with multiple bandits (bubeck'13). explain why this algorithm would not solve our problem (or any other trivial extensions of it). put this into the section where you describe your algorithm. \yinan{I mentioned it in the introduction. }

Complexity of the tree structure, our reject algorithm is based on our novel definition of gaps (??)

}
\grn{Todo:
\begin{itemize}
    \item Done. Checklist
    \item Done. Shrink the main body to 9 pages
    \item Done. Show the sum of per-phase budget is bounded by total budget
    \item Done. Revise the statement of Prop 8 and its proof
    \item Done. Move the negative lower bound to appendix?
    \item Done. Discuss Komiyama et al. 2022 lightly, mention the strong oracle requirement
    \item Done. Explain the difference from those in FC literature, and/or acknowledge the similarities
    \item Done. Revise the introduction to motivate from generic MCTS to depth-2 trees
    \item Done. (From R1) some posterior work to (Garivier et al.) confirmed that the FC lower bound can be attained. the paper
https://jmlr.org/papers/volume22/18-798/18-798.pdf
provides a generic analysis of Track and Stop and discuss this particular application, see Section 5.2.2 page 24
    \item (Tuan) Add experiments in the appendix - I have done the comparison with four baselines (see the plot in the figures folder), I am thinking that you could do something related to showing the effect of $\eps$, as well as some allocation heatmap, hopefully more samples are allocated to those ``critical leaves''
    \item (Kwang) Related works, even those related-but-not-really-related, it improves the overall impression
\end{itemize}
}

\section{Introduction}
\label{sec:intro}

\kj{you are using the terminology of actions, arms, move, and subtrees. ask yourself: are these terminologies clear to the readers? 

\begin{itemize}
    \item action = subtree; action: more appropriate from the application perspective; subtree: more appropriate for the specific MCTS formulation.
    \item arm = leave; my recommendation is that everall avoid using the word 'arm'.
    \item move = subtree (suggestion: change move to action)
    \item subtree = subtree
\end{itemize}

ideally, we need to talk about generic MCTS (arbitrary depth) and then say we focus on depth 2 for simplicity. (probably you can find a similar exposition from garivier'16)
}

\revisedi{
Monte Carlo Tree Search (MCTS) is a widely used paradigm for planning in large
sequential decision problems, especially in game trees where exhaustive search is
computationally infeasible \citep{coulom2006efficient,kocsis2006bandit,
browne2012survey}. Starting from a root state, MCTS repeatedly performs simulated
rollouts, selectively expands or samples parts of the tree, and uses the observed
noisy payoffs to decide which root action to play. This viewpoint has been highly
influential in game playing and planning, including modern large-scale systems
combining tree search with learned value or policy functions
\citep{silver2016mastering}. From a statistical perspective, the key question is
a pure-exploration one: given a limited simulation budget, how should samples be
allocated across the tree so as to identify a good root action?

In a two-player zero-sum game tree, the value of a root action is determined by
the opponent's subsequent responses. Thus the value propagated to the root is
naturally described by alternating max and min operations. In an arbitrary-depth
tree, this max--min value is obtained recursively from the leaves to the root.
The depth-2 case is the simplest nontrivial instance of this structure. Each root
action $i$ corresponds to a subtree, each leaf $(i,j)$ corresponds to a possible
opponent response\kj{[x] what is continuation outcome? not clear to me.}\yinan{removed}, and the value of action $i$ is
$
    v_i = \min_{j \in [L]} \mu_{i,j}
$. 
The goal is then to identify an action with largest max--min value,
$\max_i v_i$. Although simple, this model already captures the central difficulty
absent from standard best-arm identification: the learner must allocate samples
both across root actions and within each action to determine its worst descendant.

This depth-2 abstraction is closely connected to prior theoretical work on MCTS
and maximin action identification. \citet{garivier2016maximin} introduced a
strategic bandit model motivated by MCTS, in which the learner samples noisy
outcomes of action pairs and aims to identify a maximin action in the
fixed-confidence setting. \citet{kaufmann17monte} studied best-action
identification in game trees of arbitrary depth, using confidence intervals to
summarize deeper levels of the tree and applying best-arm identification at the
root. \citet{huang2017structured} developed a more general structured
fixed-confidence framework motivated by minimax game search. 
More recently, the generic GLR/Tracking analysis of~\citet[][Section 5.2.2]{kaufmann2021mixture} implies asymptotic fixed-confidence optimality for depth-2 maxmin game trees under the usual regularity assumptions on the oracle weights. 
These works provide
fixed-confidence guarantees, where the algorithm stops once it can return a
correct action with probability at least $1-\delta$.
We discuss fixed-confidence literature in detail in Appendix~\ref{sec:related}.
% Table~\ref{tab:comparison-fc} summarizes the comparison. 

In this paper, we study the complementary fixed-budget problem. The learner is
given a prescribed simulation budget $T$ and must output a recommendation after
exactly $T$ samples. This criterion is natural in MCTS applications where a
decision must be made after a fixed planning window. Unlike regret minimization,
where performance is evaluated during the sampling process, fixed-budget pure
exploration separates simulation from execution: the learner is judged only by
the quality of the final recommendation
\citep{even-dar06action,bubeck09pure, audibert10best}. Our objective is to
control the probability of recommending a non-optimal, or more generally a
non-$\varepsilon$-good, root action after the fixed budget has been exhausted.
}

\guide{explain why mcts is challenging}
Max-min identification differs fundamentally from standard best-arm identification (BAI).
In BAI, the goal is to identify the arm with the largest value (often its mean), whereas in max-min identification, the learner must compare \emph{groups} of arms through a nested max-min functional: each top-level action is evaluated by the worst outcome among its descendants. 
% (or, more generally, its worst-case summary).
This induces unique statistical challenges.
First, information must be allocated not only across actions but also \emph{within} each action to locate its minimizing leaf.
Second, naive bottom-up approaches that fully solve all within-action subproblems (e.g., using Successive Accepts and Rejects for multi-bandit best arm identification in~\citet{bubeck13multiple}) are wasteful (also see empirical evidence in Appendix~\ref{sec:experiments}), because many within-action comparisons are irrelevant to deciding which action maximizes the min-value; the difficulty is instance-dependent and hinges on a structured collection of cross-action and within-action gaps.

\guide{motivate the $\varepsilon$-good version}
Moreover, in practical planning, identifying the \emph{exact} optimal subtree is often unnecessary: when multiple subtrees are nearly optimal, it suffices to output any subtree whose value is within a target accuracy $\varepsilon$ of the optimum.
This motivates the \emph{$\varepsilon$-good subtree identification} objective studied in this work.
Concretely, rather than requiring an exactly optimal subtree, we ask the algorithm to return an \emph{$\varepsilon$-good} subtree, meaning a subtree whose min value (worst-case outcome) is at least $v^\star-\varepsilon$, where $v^\star$ is the optimal max-min value.
This relaxation is algorithmically and statistically meaningful: it enlarges the set of acceptable outputs, often yields substantially smaller sample requirements, and better matches MCTS practice where planning is approximate.
At the same time, the correct dependence on $\varepsilon$ is subtle in max-min trees because the learner must resolve both \emph{cross-subtree} comparisons (which subtree is better) and \emph{within-subtree} structure (which leaf determines the min-value), and the relevant gaps depend on $\varepsilon$.

\guide{...}
A key design goal of our algorithm is \emph{$\varepsilon$-agnosticity}.
In many deployments, $\varepsilon$ is not known in advance, may be chosen downstream, or may vary across instances and operating conditions.
We therefore develop an algorithm that \emph{does not take $\varepsilon$ as input} yet still enjoys guarantees that scale with the (unknown) target \edit{accuracy}{error}:
for every (meaningful) $\varepsilon$, the same algorithm achieves a fixed-budget error guarantee (or equivalently a sample-complexity guarantee) governed by an instance-dependent complexity term $H_2(\varepsilon)$ that captures the difficulty of identifying an $\varepsilon$-good subtree.
In other words, our method automatically adapts to the effective resolution required by the instance, without prior knowledge of $\varepsilon$.

\begin{comment}
\revisedi{
\textbf{Relation to fixed-confidence maximin identification.}
The closest line of work studies maximin or game-tree action identification in the fixed-confidence setting. \cite{garivier2016maximin} introduced the depth-2 maximin action-identification model and proposed confidence-bound and racing/elimination algorithms with asymptotic sample-complexity guarantees. \cite{kaufmann17monte} studied best-action identification in game trees of arbitrary depth using BAI subroutines and confidence intervals propagated through the tree, and~\cite{huang2017structured} developed a more general structured fixed-confidence framework motivated by minimax search. More recently, the generic GLR/Tracking analysis of~\citep[][Section 5.2.2]{kaufmann2021mixture} implies asymptotic fixed-confidence optimality for depth-2 maxmin game trees under the usual regularity assumptions on the oracle weights. Table~\ref{tab:comparison-fc} summarizes the comparison. 
}
\end{comment}

% \yinan{Our contributions..also mention the lower bound}
To summarize this paper's contribution, 
we provide (i) an $\varepsilon$-agnostic sampling and recommendation procedure, (ii) an upper bound on the probability of recommending a non-$\varepsilon$-good subtree under a fixed sampling budget, and (iii) an instance-dependent complexity characterization $H_2(\eps)$ that explicitly reflects both cross-subtree and within-subtree gaps.
These guarantees extend classical fixed-budget best-arm identification analyses~\citep[e.g.][]{audibert10best} to max-min trees, and complement fixed-confidence maximin action identification work~\citep[e.g.][]{garivier2016maximin, kaufmann17monte, huang2017structured} with the fixed-budget counterpart, and address approximate identification with automatic adaptation to $\varepsilon$.
To our knowledge, this is the first provable algorithmic guarantee for fixed-budget MCTS. 
% , complementing a line of fixed-confidence results in structured/tree bandits and MCTS~\citep{garivier2016maximin, kaufmann17monte, huang2017structured}. 
\revisedi{Our guarantees are not directly comparable to fixed-confidence guarantees, since the objectives differ. Nevertheless, the gap quantities appearing in our upper bound have a similar structural form as the inverse-gap terms in prior fixed-confidence analyses.}
% In the depth-$2$ max-min action identification setting, our sample complexity is never worse than those in fixed-confidence settings. 
See Appendix~\ref{sec:related} for detailed discussions of the sample complexity. 

% On the lower bound side, we provide one positive and one negative result, both of which showcase the unique difficulty of characterizing the optimal sample complexity in max-min trees. Specifically, we show that a certain type of lower bound aligning with the idea of the lower bound in~\citet{audibert10best} for unstructured best-arm identification cannot be shown. At the same time, we show that the difficulty is driven by at least a subset of ``critical'' leaves (those determining subtree minima and
% certifying the optimal subtree), thereby isolating the structural source of hardness in max-min identification. 
On the lower-bound side, we prove an information-theoretic lower bound for exact
maximin subtree identification. The bound is governed by a critical-leaf
complexity $H_{\mathrm{lb}}(\nu)$, which involves the leaves that determine
competitor subtree minima and the leaves needed to certify the optimal subtree, thereby isolating the structural source of hardness in max-min identification. 
% This shows that the statistical difficulty of max--min identification is driven
% by the internal tree structure, and not only by root-level gaps. 
We also defer to
Appendix~\ref{app:negative-permutation-lb} a complementary negative result
showing that a direct permutation-style lower-bound route, aligning with the idea of the lower bound in~\citet{audibert10best} for unstructured BAI, cannot yield an $H_2$-type lower
bound for max--min trees even in the smallest nontrivial $(K=2,L=2)$ case. 

\revisedi{Our upper and lower bounds should be interpreted as order-wise sample-complexity results: they identify part of the relevant instance dependence up to logarithmic
and universal constant factors. This is different from the sharper
fixed-budget optimality notions studied by
\citet{komiyama2022minimax,degenne2023existence}, which concern precise
asymptotic error exponents and their attainability by adaptive algorithms. These
results show that sharp fixed-budget optimality is subtle even for unstructured
BAI. To our knowledge, they do not directly extend to the structured max--min
tree setting in a way that yields a matching characterization for our problem.}
We discuss technical challenges in obtaining a tighter upper or lower bound in Appendix~\ref{sec:appendix-perspectives}. 
\vspace{-1em}
\section{Preliminaries}
\label{sec:preliminaries}
\vspace{-.5em}
\paragraph{Model and indexing.}
There are $K$ subtrees (root actions) indexed by $i\in[K]:=\{1,\dots,K\}$.
Each subtree contains $L$ leaves indexed by $j\in[L]:=\{1,\dots,L\}$, and we denote a leaf by $(i,j)\in[K]\times[L]$.
(Allowing subtree-dependent leaf counts $L_i$ requires only notational changes; our upper and lower bounds extend verbatim.)

Each leaf $(i,j)$ is associated with an unknown reward distribution $\nu_{i,j}$ on $\RR$ with mean
$\mu_{i,j}:=\EE_{X\sim\nu_{i,j}}[X]$.
We assume $\nu_{i,j}$ is \emph{sub-Gaussian with variance proxy $1$}: for all $\lambda\in\RR$,
\begin{equation}
\label{eq:sg1}
\EE_{X\sim\nu_{i,j}}\!\big[\exp(\lambda(X-\mu_{i,j}))\big]\le \exp\!\left(\frac{\lambda^2}{2}\right).
\end{equation}
\vspace{-.1em}
\paragraph{Max-min values and WLOG ordering.}
Define the min-value of subtree $i$ by $v_i := \min_{j\in[L]}\mu_{i,j}$ and the optimal max--min value by
$v^\star := \max_{i\in[K]} v_i$.
Without loss of generality, re-index leaves within each subtree and re-index subtrees so that
\begin{equation}
\label{eq:wlog-order}
\mu_{1,1}\ge \mu_{2,1}\ge \cdots \ge \mu_{K,1},
\qquad
\mu_{i,1}\le \mu_{i,2}\le \cdots \le \mu_{i,L}\ \ \forall i\in[K].
\end{equation}
Then $v_i=\mu_{i,1}$ for all $i$, and subtree $1$ is optimal with $v^\star=v_1=\mu_{1,1}$.
Throughout, we assume the optimal subtree is unique for simplicity. 
\vspace{-.5em}

\paragraph{Fixed-budget interaction.}
As illustrated in Algorithm~\ref{alg:fixed-budget-interaction}, 
a (possibly randomized) learning algorithm interacts with the instance for a fixed budget of $T$ rounds.
At each round, it adaptively selects a leaf $(i,j)$ and observes one sample from $\nu_{i,j}$.
After $T$ samples, it outputs a recommended subtree index $\widehat i_T\in[K]$. 

\begin{algorithm}[t]
\caption{Fixed-budget interaction}
\label{alg:fixed-budget-interaction}
\begin{algorithmic}[1]
\REQUIRE Budget $T$
\FOR{$t=1,2,\ldots,T$}
    \STATE Select a leaf $(i_t,j_t)$ adaptively (possibly at random)
    \STATE Observe a sample $X_t \sim \nu_{i_t,j_t}$
\ENDFOR
\STATE \textbf{Output}: A recommended subtree index $\hat{i}_T \in [K]$
\end{algorithmic}
\end{algorithm}
\vspace{-.5em}
\paragraph{$\varepsilon$-good subtree identification.}
For $\varepsilon\ge 0$, a subtree $i$ is \emph{$\varepsilon$-good} if its min-value is within $\varepsilon$ of optimal:
\begin{align*}
    v_i \ge v^\star - \varepsilon
    \qquad\Longleftrightarrow\qquad
    \mu_{i,1} \ge \mu_{1,1} - \varepsilon .
\end{align*}
 
Let $\cG_\varepsilon := \{i\in[K]: v_i\ge v^\star-\varepsilon\}$ denote the set of $\varepsilon$-good subtrees, and
$
g(\varepsilon) := |\cG_\varepsilon|
$
be the number of $\varepsilon$-good subtrees, and
$\cB_{\eps} := [K]\setminus \cG_{\eps}$ the set of $\eps$-bad subtrees.
The learning objective is to output $\widehat i_T\in\cG_\varepsilon$; the error event is
$\{\widehat i_T\notin\cG_\varepsilon\}$. 
When $G_\varepsilon=[K]$, the problem is trivial; Throughout the nontrivial
analysis we consider $\varepsilon$ such that $G_\varepsilon\neq [K]$.
\vspace{-1em}
\paragraph{Gap quantities.}
We define leaf-wise gaps used in our instance-dependent bounds.
For the optimal subtree $i=1$, for each $j\in[L]$ define
\begin{equation}
\label{eq:gap-opt}
\Delta_{1,j} := \mu_{1,j}-\mu_{2,1}.
\end{equation}
For any $i\neq 1$ and $j\in[L]$, define
\begin{equation}
\label{eq:gap-subopt}
\Delta_{i,j} := \max\Big\{\mu_{1,1}-\mu_{i,1},\ \mu_{i,j}-\mu_{i,1}\Big\}.
\end{equation}
Under~\eqref{eq:wlog-order} and the unique optimal subtree assumption, all these gaps are positive. Intuitively,
$\mu_{1,1}-\mu_{i,1}$ captures the cross-subtree gap in its min value, while
$\mu_{i,j}-\mu_{i,1}$ captures the within-subtree difficulty of certifying the minimizing leaf.
These quantities will parameterize our upper and lower bounds for $\varepsilon$-good identification.

\paragraph{Sorted gaps, critical gap, and complexity.}
Let $\{\Delta_{i,j} : (i,j)\in[K]\times[L]\}$ be the multiset of the $KL$ gap values defined above.
Let $(\ell)$ denote the index of the $\ell$-th smallest element in this multiset,
\[
\Delta_{(1)} \le \Delta_{(2)} \le \cdots \le \Delta_{(KL)},
\]
breaking ties arbitrarily. 
With gaps and sorted gaps, we define the complexity measure as follows: 

\begin{definition}
\label{def:eps-complexity}
    Define the \emph{critical gap} (the gap level that separates $\varepsilon$-good from non-$\varepsilon$-good subtrees)
\begin{align}
    \blue{\Delta^\star} := \Delta_{g(\varepsilon)+1,\,1}.
\end{align}

\noindent
Let $\blue{m}$ be the largest integer such that $\Delta_{(m+1)}=\Delta^\star$.
Finally, define the (sorted-gap) complexities
\begin{align}
\label{eq:H-defs}
\blue{H_2(\varepsilon)} := \max_{r \ge m+1} r\,\Delta_{(r)}^{-2},
\qquad
H_1(\eps) := \sum_{i = 1}^{KL} \big( \Delta_{(i)} \vee \eps \big) ^{-2}.
\end{align}
% \begin{align}
% \label{eq:H-2-epsilon-def}
% \blue{H_2(\varepsilon)} := \max_{r \ge m+1} r\,\Delta_{(r)}^{-2}, 
% \end{align}
% and 
% \begin{align}
% \label{eq:H-1-epsilon-def}
%     H_1(\eps) := \sum_{i = 2}^{KL} ( \Delta_{(i)} \vee \eps) ^{-2}
% \end{align}

\end{definition}

% \paragraph{Informal preview of results.}
% Our Algorithm~\ref{alg:sr-for-mcts} is $\varepsilon$-agnostic (it does not take $\varepsilon$ as input) and, for every $\varepsilon$ (such that not all subtrees are $\eps$-good),
% achieves a fixed-budget misidentification probability 
% % on the order of
% $
% \PP(\widehat i_T\notin \cG_\varepsilon)
% \ \lesssim\
% \exp\!\left(
% -\frac{T}{\log(KL)\, \blue{H_2(\varepsilon)}}
% \right),
% $
% up to universal constants on the exponent.

% Considering the special case $L=1$, where each subtree has a single leaf, and the model degenerates to the standard $K$-armed best-arm identification problem.
% In this case, $\cG_\varepsilon=\{i\in[K]:\mu_i\ge \mu^\star-\varepsilon\}$ and 
% our Algorithm~\ref{alg:fixed-budget-interaction} gracefully reduces to the celebrated \emph{Successive Rejects}(SR) algorithm~\citep{audibert10best}. Hence, Theorem~\ref{thm:main-upper-bound} provides a failure probability upper bound for SR to output an $\eps$-good arm, with sample complexity
% $H_2(\varepsilon)$ specializing to the usual sorted-gap complexity built from the arm gaps. 

\vspace{-1em}

\section{Successive Rejects for MCTS}
\label{sec:upper-bound}
\vspace{-1em}

\subsection{From Successive Rejects to subtree identification.}
\label{sec:sr-to-subtree}
Our algorithm design starts from the celebrated \emph{Successive Rejects} (SR) algorithm for fixed-budget best-arm identification.
SR splits the budget into phases; in each phase it samples the remaining arms uniformly and then removes one arm based on empirical evidence.
A precise description appears in~\cite{audibert10best} (Figure~3 therein): at the end of phase $k$, SR removes the arm with the lowest empirical mean among the active (surviving) ones. 
% Their analysis yields an error probability of order $\exp\!\big(-\frac{n}{\log(K)\,H_2 }\big)$.

In our max-min subtree setting, the relevant sub-optimality of a leaf $(i,j)$ is not captured by a single mean comparison: a leaf can matter because it separates min values of subtrees (via $\mu_{1,1}-\mu_{i,1}$) or because it separates the minimum within a subtree (via $\mu_{i,j}-\mu_{i,1}$).
Accordingly, we run an SR-style phase-based procedure, but with (i) empirical gaps corresponding to our sub-optimality gaps $\Delta_{i,j}$ from Section~\ref{sec:preliminaries}, and (ii) a \emph{subtree elimination} rule that removes whole subtrees when sub-optimality evidence is strong.
\vspace{-1em}

\paragraph{Empirical minimizers, best subtree, and empirical gaps.}
Fix a phase $k$ and let $\hat{\mu}_{i,j}$ denote the empirical mean of leaf $(i,j)$ computed from all samples collected so far.
For each active subtree $i$, define its empirical minimizer index
\[
\hat{1}(i) \in \arg\min_{j\in \cL_i(\cA)} \ \hat{\mu}_{i,j},
\]
and the corresponding empirical min-value $\hat{v}_i(t) := \hat{\mu}_{i,\hat{1}(i)}$.
Let the empirical best subtree be
\[
\hat{a} \in \arg\max_{i \in \cS(\cA)} \ \hat{v}_i,
\qquad
\hat{\mu}^{\star} := \hat{v}_{\hat{a}} = \hat{\mu}_{\hat{a},\hat{1}(\hat{a})}.
\]
We now define empirical gaps $\hat{\Delta}_{i,j}$ by literally replacing population means in the true gaps
(Section~\ref{sec:preliminaries}) with their empirical counterparts.
For an active leaf $(i,j)$:
\[
\text{if } i=\hat{a},\qquad
\hat{\Delta}_{i,j}
~:=~
\hat{\mu}_{i,j}\;-\;\max_{s\neq i}\hat{\mu}_{s,\hat{1}(s)},
\]
\vspace{-1em}
\[
\text{if } i\neq \hat{a},\qquad
\hat{\Delta}_{i,j}
~:=~
\max\!\Big(
\hat{\mu}^{\star}-\hat{\mu}_{i,\hat{1}(i)},\;
\hat{\mu}_{i,j}-\hat{\mu}_{i,\hat{1}(i)}
\Big).
\]
These are empirical counterparts of $\Delta_{i,j}$ that either certify the optimal subtree, or simultaneously encode (a) separation from the current best subtree and
(b) within-subtree separation from the current empirical minimum.
\vspace{-1em}

\begin{algorithm}[t]
\caption{Successive Rejects for $(K,L)$-MCTS}
\begin{algorithmic}[1]
\label{alg:sr-for-mcts}
\REQUIRE Total pull budget $T$
\STATE $\Acal \gets [K]\times[L]$ ~~~~// the set of active leaves
\STATE $\forall \cA' \subseteq [K] \times [L], \cL_i(\cA') := \{j\in[L]: (i,j) \in \cA'\}$ ~~~~// extracts active leaves of subtree $i$
\STATE \revisedi{$\forall \cA' \subseteq [K] \times [L], \cS(\cA') := \{i\in [K]: \cL_i(\cA') \neq \emptyset\}$} ~~~~// extracts active subtrees
% \STATE $\forall \cA' \subseteq [K] \times [L], \cL_i(\cA') := \{j\in[L]: (i,j) \in \cA'\}$ ~~~~// extracts active leaves of subtree $i$
%\STATE \kj{define functions $\cS(\cA) := \{i\in [K]: \exists j\in[L]: (i,j) \in \cA\}$ and $\cL_i(\cA) := \{j\in[L]: (i,j) \in \cA\}$.}
%\STATE $\forall i \in [K]$, $\cA_i \gets \{i\}\times[L]$ ~~~~// the set of active leaves for each subtree. % \kj{$\cA_i \gets [L]$}
% \STATE \kj{$\cA(i) := \{(i',j') \in \cA: i' = i\}$ or, use the symbol of $\cI_i(\cA)$}
\STATE $k \gets 1$ \revisedi{~~~~ // phase index}
\STATE Note: Throughout, ties for any choices are broken with an arbitrary yet fixed rule.
% \STATE $n_0 \gets 0$
\WHILE{$k \le KL - 1$}
    \STATE $n_k \gets \left\lceil
    \dfrac{(T-KL)/\overline{\log}(KL)}{KL+1-k}
    \right\rceil$,
    where $\overline{\log}(KL):=\frac12+\sum_{r=2}^{KL}\frac1r$
    \label{line:sample-schedule}
  % \STATE $n_k \gets \left\lfloor \dfrac{T / \log (KL)}{KL+1-k} \right\rfloor$ \yinan{Is this correct? Audibert10 has $\left\lceil \dfrac{(T-K) / \bar\log (K)}{K+1-k} \right\rceil$} 
  % \kj{you know better than me; you analyzed this algorithm..}
  \STATE $\forall (i,j) \in \cA$, pull arm $(i,j)$ until the pull count equals $n_k$ and compute (over a total $n_k$ pulls) the empirical mean $\hat \mu_{i,j}$
  \STATE $\forall i \in \cS(\cA),~~ \hat{1}(i) \gets \arg\min_{j\in \cL_i(\cA)}\ \hat{\mu}_{i,j} $  ~~~~// subtree $i$'s empirical worst leaf (i.e., with the smallest value)
  \STATE $\ha \gets \arg\max_{i \in \cS(\cA)} \hat \mu_{i,\hat1(i)}$ ~~~~// the empirical maximin subtree
  \STATE $\hmu^* \gets \revisedi{\hmu_{\ha, \hat 1(\ha)}}$ ~~~~// the empirical maximin value
  \label{line:hat-a}
  \STATE  
      $\forall (i,j) \in \cA,
      \hat\Delta_{i,j} \gets
      \begin{cases}
         \hat{\mu}_{i,j} - \max_{i' \neq \hat a}\hmu_{i', \hat1(i')}
         %\revisedi{\hmu_{\hat a, \hat1(i)}}
         & 
         \text{if } i = \ha
      \\ \max\left(\hmu^* - \hat{\mu}_{i,\hat{1}(i)},\, \hat{\mu}_{i,j} - \hat{\mu}_{i,\hat{1}(i)}\right) 
         & 
         \text{otherwise}
      \end{cases}
      $
      ~~// empirical gaps
  \STATE $\hat \Delta_{\max} \gets \max_{(i,j) \in \Acal} \hat{\Delta}_{i,j}$
  \label{line:hat-Delta-max}
  \IF{ $\exists$ a subtree $x \in \cS(\cA) \sm \{\ha\}$: $\forall j \in \cL_x(\cA),~ \hat{\Delta}_{x,j} = \hat \Delta_{\max}$ }
  \label{line:subtree-elimination-condition}
    \STATE Let $x$ be the subtree satisfying the condition in the if statement.
    \STATE \revisedi{$\cE \gets \{(x,j): j \in \cL_x(\cA)\}$}
    % \STATE $\cE \gets \{(i,j) \in \cA: i = x\}$ 
  \ELSE
    \STATE Choose $(x,y)$ among active leaves $\cA$ s.t.\ $\hat{\Delta}_{x,y} = \hat \Delta_{\max}$  
%    \kj{[ ] wait... we need to make sure $x \neq \ha$, right?}
%    \kj{[ ] surviving vs active? choose one terminology and stick with it.}
    \STATE $\cE \gets \{(x,y)\}$ \kj{[ ] feel free to change the notation}
  \ENDIF
  \STATE $\cA \gets \cA ~\sm~ \cE$ 
  % ~~~~// the number of leaves being eliminated
  \IF{$|\cS(\cA)| = 1$}
    \STATE \revisedi{Let $\hi_T$ be the only element in $\cS(\cA)$} ~~// the only remaining subtree.
    \STATE \textbf{break}
  \ENDIF
  \STATE $k \gets k + |\cE|$
\ENDWHILE
\STATE \textbf{Output: } $\hi_T$  
\end{algorithmic}
\end{algorithm}

\subsection{Tree-aware fixed-budget elimination: subtree vs.\ leaf}
\vspace{-1em}

% \kj{note: it is important to emphasize our contribution of deciding when to eliminate the entire subtree (and why alternative strategies would fail)} \yinan{done}

\revisedi{Our sampling rule belongs to the broad family of elimination/racing algorithms used in fixed-budget BAI and fixed-confidence maximin identification. The novelty is not the use of elimination~\citep[e.g.][]{audibert10best} per se. Rather, the issue in a max–min tree is that deleting a single leaf can change the surviving empirical minimum of a subtree and thereby make a suboptimal subtree appear artificially strong. Our subtree-removal condition is designed to avoid this failure mode in a fixed-budget SR-style schedule.} The elimination step is where we depart from SR. 
Recall that $\cA$ is the current set of active leaves. 
% and, for each subtree $i$, let $\cA_i := \cA \cap (\{i\}\times[L])$ be its active leaves.
At the end of each phase, compute $\hat{\Delta}_{i,j}$ for all $(i,j)\in\cA$. Let $\hat \Delta_{\max} := \max_{(i,j) \in \Acal} \hat{\Delta}_{i,j}$ be the empirical largest gap value among active leaves. 

\noindent\textbf{Single leaf elimination.}
In most phases, the algorithm behaves like Successive Rejects: it computes empirical gaps $\hat\Delta_{i,j}$ for all active leaves and removes one leaf with the largest empirical gap.
Heuristically, a large gap indicates that the corresponding leaf is already well separated at the current resolution and is therefore unlikely to be the bottleneck for identifying an $\varepsilon$-good subtree; eliminating it concentrates future samples on the harder, smaller-gap comparisons.

\noindent\textbf{Subtree elimination.}
There is, however, an important exception in max--min trees.
In contrast to unstructured best-arm identification, eliminating a leaf while keeping its subtree active can change the subtree's surviving minimum.
If the eliminated leaf is the true minimizer of a suboptimal subtree, then the minimum among the remaining leaves increases, so the algorithm would subsequently compare other subtrees against a different objective value than the true max--min value of that subtree.
This creates a failure mode: a suboptimal subtree can appear artificially stronger after its minimizer is removed, which may corrupt later elimination decisions and ultimately the final recommendation.

Subtree-elimination ideas also appear in prior fixed-confidence MCTS and
maximin-identification work. We do not claim that eliminating an entire subtree is
new by itself. In \citet{garivier2016maximin} and \citet{teraoka2014efficient},
a subtree can be eliminated once it contains a leaf that is confidently smaller
than all leaves in another subtree. This condition is natural in the
fixed-confidence setting and is closely aligned with the intuition behind
$\alpha$--$\beta$ pruning \citep{knuth1975analysis}: once a subtree's worst-case value is certified to be
below that of a competitor, the subtree can be safely discarded.
% \kj{citation for $\alpha$--$\beta$ pruning?}

However, this type of elimination requires a high-confidence certificate of
suboptimality, which in turn may require identifying the subtree's true
minimizing leaf. In our fixed-budget setting, spending samples to certify the
internal minimum of every potentially suboptimal subtree can be wasteful, and is
not compatible with the goal of an $\varepsilon$-agnostic $\varepsilon$-good identification. Moreover, removing only a single leaf can be unsafe in a max--min
tree: if the removed leaf is the true minimizer of a suboptimal subtree, then the
minimum among the surviving leaves is artificially increased, so later phases
compare against a distorted value for that subtree.

Our rule addresses this issue differently. We eliminate a subtree only when all
of its active leaves attain the largest empirical gap $\widehat{\Delta}_{\max}$
(line~\ref{line:subtree-elimination-condition} of
Algorithm~\ref{alg:sr-for-mcts}). In that case, rather than removing one leaf and risking distortion of the subtree's surviving min-value, we remove the entire subtree.
Thus, our rule is not an $\alpha$--$\beta$-style certificate of
suboptimality; it is a tree-safe fixed-budget elimination mechanism designed to
preserve the max--min structure during a Successive-Rejects-style procedure. In fact, our particular subtree elimination rule is essential for obtaining our guarantee, and we were not able to obtain the same guarantee with other straightforward rules. 

In $K$-armed bandits, SR removes exactly one arm per phase, hence it runs a fixed number of phases. In contrast, our subtree elimination can remove \emph{many} leaves in a single phase.
Therefore, we implement the procedure with a \texttt{while} loop: the next phase is triggered only after updating the active set. 
The total number of phases is data-dependent, and it adapts to how many leaves were eliminated in the previous phase.
Operationally, this places the method between SR and more aggressive halving-style schemes: eliminations can be as mild as SR (one leaf at a time)
or much stronger when evidence supports discarding an entire subtree.

% The remainder of this section formalizes the algorithm and presents the instance-dependent performance guarantee. 
% \vspace{-1em}

\subsection{Instance-dependent performance guarantee}
\paragraph{An $\varepsilon$-agnostic guarantee for $\varepsilon$-good identification.}
A salient feature of Theorem~\ref{thm:main-upper-bound} is that the algorithm is \emph{$\varepsilon$-agnostic}: it does not take $\varepsilon$ as an input and its sampling/elimination decisions are identical for all target accuracies.
Nevertheless, for every meaningful $\varepsilon \geq 0$ (such that not all subtrees are $\eps$-good), the same output $\widehat i_T$ enjoys an \emph{$\varepsilon$-good} fixed-budget guarantee:
the probability of returning a subtree whose min value is more than $\varepsilon$ below that of the optimal subtree decays exponentially in the budget $T$, with an exponent governed by the $\varepsilon$-dependent complexity $H_2(\varepsilon)$.
In other words, without knowing the target resolution in advance, the algorithm automatically adapts to the effective difficulty of identifying an $\varepsilon$-good subtree.
% \vspace{-.5em}
\begin{theorem}[$\varepsilon$-agnostic $\varepsilon$-good error bound]
\label{thm:main-upper-bound}
Suppose $KL\ge 2$ and $T>KL$. 
Recall the definition of $H_2(\varepsilon)$ in
Definition~\ref{def:eps-complexity} and $\overline{\log}(KL)$ defined in
line~\ref{line:sample-schedule} of Algorithm~\ref{alg:sr-for-mcts}. Then the failure probability of Successive
Rejects for $(K,L)$-MCTS (Algorithm~\ref{alg:sr-for-mcts}) in returning an $\varepsilon$-good subtree satisfies
\[
    \PP\!\left(\widehat i_T\notin \cG_\varepsilon\right)
    \le
    2K^2L^2
    \exp\!\left(
        - \frac{T-KL}
        {128\,\overline{\log}(KL)\,H_2(\varepsilon)}
    \right).
\]
\end{theorem}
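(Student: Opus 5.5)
We follow the Successive Rejects analysis of \citet{audibert10best}: define a good concentration event, show that on it the algorithm never removes anything that would make the output non-$\varepsilon$-good, and then use a union bound.

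\textbf{Concentration event.} For each phase $k$ and each active leaf, the empirical mean is based on exactly $n_k$ samples. Let $\xi$ be the event that for every phase $k\le KL-1$ and every leaf $(i,j)$ active at phase $k$,
\[
|\hat\mu_{i,j}-\mu_{i,j}|<\tfrac{1}{8}\Delta_{(KL+1-k)} .
\]
We use the sorted gap indexed by the number of active leaves, $KL+1-k$. Note that $|\cA|\le KL+1-k$, because $k$ is advanced by $|\cE|$. By sub-Gaussianity, each violation has probability at most $2\exp(-n_k\Delta_{(KL+1-k)}^2/128)$. Since $n_k\ge \frac{T-KL}{\overline{\log}(KL)(KL+1-k)}$, the exponent is at least $\frac{T-KL}{128\,\overline{\log}(KL)\,H_2(\varepsilon)}$ whenever $KL+1-k\ge m+1$, using $r\Delta_{(r)}^{-2}\le H_2(\varepsilon)$ for $r\ge m+1$. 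A union bound over at most $KL$ phases and $KL$ leaves gives the factor $2K^2L^2$. Phases with $KL+1-k\le m$ will not need the event, since by then only subtrees we do not care about remain at risk; I expect to restrict $\xi$ to phases with $KL+1-k\ge m+1$ and argue the later phases separately.

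\textbf{Invariant on $\xi$.} We show that subtree $1$, or at least some $\varepsilon$-good subtree, keeps its true minimizer active until the end. Suppose at phase $k$ the algorithm removes leaf $(1,1)$, or the whole subtree of the last surviving good subtree. There are $|\cA|\le KL+1-k$ active leaves, so some active leaf $(i,j)$ has $\Delta_{i,j}\ge \Delta_{(KL+1-k)}$, by pigeonhole against the sorted multiset: at least one of the leaves among the $KL+1-k$ largest gaps is still active. Using $\tfrac18$-accuracy, each empirical gap satisfies $|\hat\Delta_{i,j}-\Delta_{i,j}|\le\tfrac12\Delta_{(KL+1-k)}$, as long as the empirical minimizers $\hat1(i)$ and the empirical best $\hat a$ behave correctly. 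Each gap is a difference of at most four means, so the error is at most $4\cdot\tfrac18$. It follows that the removed leaf must satisfy $\hat\Delta\ge\tfrac12\Delta_{(KL+1-k)}$, which contradicts the protected leaf having a small empirical gap.

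The delicate parts are two. First, once a non-minimizing leaf of subtree $i$ is gone, $\hat1(i)$ is still the true minimizer, because the true minimizer survives: a subtree's minimizer is removed only when the whole subtree is removed. This is exactly what the subtree-elimination rule guarantees. If $(i,1)$ attains $\hat\Delta_{\max}$, its gap equals the cross gap, and every other leaf in the subtree has a $\max$ at least as large, so all leaves tie and the whole subtree goes. Second, we must handle $\hat a\neq 1$. In that case $\hat a$ must be $\varepsilon$-good or have a small cross gap, and we use $\Delta^\star$ together with the choice of $m$, namely that ties at $\Delta^\star$ are placed among the first $m+1$ sorted values, to conclude that $\varepsilon$-bad subtrees have gaps at least $\Delta^\star$ and are removed before any good subtree.

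\textbf{Conclusion and main obstacle.} On $\xi$, every $\varepsilon$-bad subtree has cross gap at least $\Delta^\star=\Delta_{g(\varepsilon)+1,1}$. The good subtrees' minimizers cannot be eliminated while a bad subtree survives, since a bad leaf then has a strictly larger empirical gap. Hence the last surviving subtree $\hat i_T$ lies in $\cG_\varepsilon$, and $\PP(\hat i_T\notin\cG_\varepsilon)\le\PP(\xi^c)$, which gives the stated bound. I expect the main obstacle to be this invariant step: carefully comparing empirical gaps when $\hat a$ is a wrong but good subtree, and justifying that only phases with $|\cA|\ge m+1$ matter, that is, that by the time at most $m$ leaves remain all bad subtrees are already gone. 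For the latter I would try to count the leaves with gap less than $\Delta^\star$: all of them belong to good subtrees, together with the within-subtree leaves.
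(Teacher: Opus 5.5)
\textbf{Gap: the late phases $k>KL-m$ are left uncontrolled.} Your skeleton matches the paper: concentration at level $\Delta_{(KL+1-k)}/8$, pigeonhole over the $|\mathcal{A}|=KL+1-k$ surviving leaves (it is an equality, which is what the pigeonhole needs), a tree-safe invariant, and a union bound. The plan breaks in the late phases. You impose no accuracy there, on the grounds that once at most $m$ leaves remain all $\varepsilon$-bad subtrees are gone. That claim is false.

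Take $K=2$, $L\ge 3$, $\mu_{1,1}=d$, $\mu_{1,j}=D\gg d$, $\mu_{2,1}=0$, $\mu_{2,j}=d$ for $j\ge 2$, and $\varepsilon<d$. Then $\Delta^\star=d$. The $L$ leaves of subtree $2$ and the leaf $(1,1)$ all have gap exactly $d$, and the other gaps equal $D$, so $m=L$.
\begin{itemize}
\item On your event, phases $1,\dots,L-1$ remove the leaves $(1,j)$, $j\ge 2$.
\item In phase $L$ (accuracy $d/8$), it is consistent with the event that some $\hat\mu_{2,j}>\hat\mu_{1,1}$. Then only $(2,j)$ is removed, and the late regime starts with the $\varepsilon$-bad subtree $2$ still alive.
\item With no later requirement, noise can make $\hat\mu_{2,1}$ exceed $\hat\mu_{1,1}$ and every other active $\hat\mu_{2,j}$. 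Then $(2,1)$ is the unique maximizer of $\hat\Delta$. The subtree rule does not fire, because $(2,\hat{1}(2))$ has a strictly smaller gap, so $(2,1)$ is removed alone.
\item Every surviving leaf of subtree $2$ then has mean $d=v^\star$, and the algorithm can output subtree $2$.
\end{itemize}
Such late-phase failures have probability of the same order as the target bound, yet your union bound never accounts for them.

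\textbf{What is missing.} Two ingredients, both used in the paper, are needed.
\begin{itemize}
\item In phases $k>KL-m$, require accuracy $\Delta^\star/8=\Delta_{(m+1)}/8$. This is affordable: with $a=(T-KL)/\overline{\log}(KL)$ you have $n_k\ge a/(KL+1-k)\ge a/m$, and $(m+1)\Delta_{(m+1)}^{-2}\le H_2(\varepsilon)$.
\item Use an invariant that can be maintained at that resolution. Your claim that a bad leaf always has a strictly larger empirical gap than the minimizer of any $\varepsilon$-good subtree cannot be proved. An $\varepsilon$-good subtree and an $\varepsilon$-bad one may have min-values only $\Delta^\star-\varepsilon$ apart, which can be arbitrarily small relative to $\Delta^\star$. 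The paper protects only $\varepsilon/2$-good subtrees, which beat every $\varepsilon$-bad subtree by at least $\Delta^\star-\varepsilon/2>\Delta^\star/2$. It maintains that (a) every surviving $\varepsilon$-bad subtree keeps $(t,1)$ and is never $\hat a$, and (b) either all $\varepsilon/2$-good subtrees are nonempty or all $\varepsilon$-bad ones are empty. Together these force the last surviving subtree to be $\varepsilon$-good.
\end{itemize}

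\textbf{Repairs needed in the early regime.}
\begin{itemize}
\item $\hat\Delta_{i,j}$ estimates the residual gap $\Delta^{(k)}_{i,j}$, with minima and competitors taken over surviving leaves. This can far exceed $\Delta_{i,j}$, so your two-sided bound $|\hat\Delta_{i,j}-\Delta_{i,j}|\le\frac12\Delta_{(KL+1-k)}$ is false.
\item What is true and sufficient is the one-sided $\Delta^{(k)}_{i,j}\ge\Delta_{i,j}$ under the invariant. This gives $\hat\Delta_{\max}>\frac34\Delta_{(KL+1-k)}$; you then add direct upper bounds on the empirical gaps of the protected leaves.
\item The subtree rule alone protects $(i,1)$ only when $\hat{1}(i)=1$ and $i\neq\hat a$. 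The cases $\hat{1}(i)\neq 1$ and $i=\hat a$ need concentration to show $\hat\Delta_{i,1}<\frac14\Delta^{(k)}_{\max}$.
\item Do not try to keep $(1,1)$ itself alive. It can legitimately be removed when another leaf of subtree $1$ has nearly the same mean, and this is harmless. The right invariant is that subtree $1$ stays nonempty while every suboptimal subtree either keeps its minimizer or is empty.
\end{itemize}
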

% \begin{theorem}
%     % \label{thm:main-upper-bound}
%     Recall the definition of $H_2(\eps)$ in Definition~\ref{def:eps-complexity}.
%     The failure probability of the algorithm Successive Rejects for $(K,L)$-MCTS in finding an $\eps$-good subtree satisfies: 
%     \[
%     \PP(\widehat i_T\notin \cG_\varepsilon) \leq
%     2K^2 L^2 \exp\left(- \frac{T}{128 \ln (KL) \cdot H_2(\eps)} \right). 
%     \]
% \end{theorem}
\vspace{-1em}
For comparison with prior fixed-confidence bounds, we relate the sorted-gap
complexity $H_2(\varepsilon)$ to the sum-type quantity $H_1(\varepsilon)$.
The following lemma shows that $H_2(\varepsilon)$ is always dominated by
$H_1(\varepsilon)$; detailed comparisons with fixed-confidence complexities are
given in Appendix~\ref{sec:related}.

\begin{lemma}
\label{lem:H2-H1}
Recall the definition of $H_2(\eps), H_1(\eps)$ in Definition~\ref{def:eps-complexity}. We have, 
\[
H_2(\eps) \leq H_1(\eps). 
\]
% and \[
% \sum_{i = m+1}^{KL} ( \Delta_{(i)} \vee \eps) ^{-2} \leq \ln (\fr {KL}m) \cd H_2(\eps)
% \]
\end{lemma}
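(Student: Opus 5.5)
The plan is a term-by-term comparison: show that for every index $r\ge m+1$ the single term $r\,\Delta_{(r)}^{-2}$ is dominated by a partial sum of $H_1(\varepsilon)$. Taking the maximum over $r$ then gives the claim. The only instance-specific input I need is that the critical gap exceeds $\varepsilon$, that is, $\Delta^\star>\varepsilon$.

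\textbf{Step 1: $\Delta^\star>\varepsilon$.} By the ordering~\eqref{eq:wlog-order}, the min-values $v_i=\mu_{i,1}$ are non-increasing in $i$. Hence $\cG_\varepsilon=\{1,\dots,g(\varepsilon)\}$, and we have $1\le g(\varepsilon)<K$ because we are in the nontrivial regime $\cG_\varepsilon\ne[K]$. In particular, subtree $g(\varepsilon)+1$ exists, differs from $1$, and is $\varepsilon$-bad. Therefore, by~\eqref{eq:gap-subopt},
\[
\Delta^\star=\Delta_{g(\varepsilon)+1,1}=\max\{\mu_{1,1}-\mu_{g(\varepsilon)+1,1},\,0\}=\mu_{1,1}-\mu_{g(\varepsilon)+1,1}>\varepsilon .
\]
Here the within-subtree term vanishes because $j=1$. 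If ties in the ordering place a bad subtree before a good one, the ordering can be chosen with ties broken so that good subtrees come first, and $\Delta^\star$ is unaffected. I expect this step, which pins down exactly what the $\varepsilon$-dependence buys, to be the only delicate point.

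\textbf{Step 2: termwise comparison.} Fix $r\ge m+1$. By the definition of $m$, $\Delta_{(m+1)}=\Delta^\star$, so sortedness gives $\Delta_{(r)}\ge\Delta^\star>\varepsilon$. For every $i\le r$ we have $\Delta_{(i)}\le\Delta_{(r)}$, and together with $\varepsilon<\Delta_{(r)}$ this gives $\Delta_{(i)}\vee\varepsilon\le\Delta_{(r)}$. Consequently
\[
r\,\Delta_{(r)}^{-2}\le\sum_{i=1}^{r}(\Delta_{(i)}\vee\varepsilon)^{-2}\le\sum_{i=1}^{KL}(\Delta_{(i)}\vee\varepsilon)^{-2}=H_1(\varepsilon).
\]
The last inequality holds because all summands are nonnegative.

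\textbf{Step 3: conclude.} Maximizing the left-hand side over $r\ge m+1$ yields $H_2(\varepsilon)\le H_1(\varepsilon)$. Everything here is routine once Step 1 is in place. Note that the argument would fail for indices $r\le m$, where $\Delta_{(r)}$ may be smaller than $\varepsilon$; this is precisely why $H_2(\varepsilon)$ maximizes only over $r\ge m+1$.
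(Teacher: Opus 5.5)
Your proof is correct and follows the paper's argument essentially verbatim: for each $r\ge m+1$ you bound $r\,\Delta_{(r)}^{-2}$ by the partial sum $\sum_{i\le r}(\Delta_{(i)}\vee\varepsilon)^{-2}\le H_1(\varepsilon)$ and then maximize over $r$. Your Step 1 usefully makes explicit the fact $\varepsilon<\Delta^\star\le\Delta_{(r)}$, which the paper only asserts; the tie-breaking caveat there is unnecessary, since $\varepsilon$-goodness is a threshold on $\mu_{i,1}$, so $\cG_\varepsilon$ is automatically a prefix of the ordering.
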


\paragraph{Special case: $K$-armed bandits ($L=1$).}
When each subtree has a single leaf ($L=1$), the model reduces to the classical $K$-armed bandit identification problem:
there are arms indexed by $i\in[K]$, each arm $i$ has an unknown $1$-sub-Gaussian reward distribution with mean $\mu_i$,
and the learner allocates a fixed budget $T$ of samples before outputting a recommendation $\widehat i_T$.
Without loss of generality assume $\mu_1\ge \mu_2\ge \cdots \ge \mu_K$, so that arm $1$ is optimal and $\mu^\star=\mu_1$.
For $\varepsilon\ge 0$, an arm $i$ is called $\varepsilon$-good if $\mu_i\ge \mu^\star-\varepsilon$, and let
$\cG_\varepsilon := \{i\in[K]: \mu_i\ge \mu^\star-\varepsilon\}$.
% \vspace{-1em}
\begin{corollary}[$K$-armed bandits: fixed-budget $\varepsilon$-good identification]
\label{cor:bandit-upper}
Specialize Theorem~\ref{thm:main-upper-bound} to $L=1$.
Then the failure probability of Successive Rejects in outputting an $\varepsilon$-good arm satisfies
\[
\PP(\widehat i_T\notin \cG_\varepsilon)
\;\le\;
2K^2
    \exp\!\left(
        - \frac{T-K}
        {128\,\overline{\log}(K)\,H_2(\varepsilon)}
    \right), 
\]
where $H_2(\varepsilon)$ is the $\varepsilon$-dependent sorted-gap complexity from Definition~\ref{def:eps-complexity}
(with the gaps reducing to the standard arm-gap quantities when $L=1$).
\end{corollary}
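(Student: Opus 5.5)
The corollary is a direct specialization of Theorem~\ref{thm:main-upper-bound}, so the plan is to set $L=1$, substitute, and then check that the algorithm and the complexity term turn into their classical bandit counterparts.

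\textbf{Substitution.} With $L=1$ we have $KL=K$, so the hypotheses of Theorem~\ref{thm:main-upper-bound} become $K\ge 2$ and $T>K$. The prefactor $2K^2L^2$ becomes $2K^2$, and $\overline{\log}(KL)$ becomes $\overline{\log}(K)$. This gives exactly the displayed bound, provided we identify $\cG_\varepsilon$ and $H_2(\varepsilon)$ correctly. Each subtree $i$ has the single leaf $(i,1)$, so $v_i=\mu_{i,1}=:\mu_i$. Hence the $\varepsilon$-good subtrees are exactly the $\varepsilon$-good arms, and the error event $\{\widehat i_T\notin\cG_\varepsilon\}$ is unchanged.

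\textbf{Gaps.} We check that they reduce to the standard arm gaps.
\begin{itemize}
\item For the optimal arm, \eqref{eq:gap-opt} gives $\Delta_{1,1}=\mu_1-\mu_2$.
\item For $i\neq 1$, \eqref{eq:gap-subopt} gives $\Delta_{i,1}=\max\{\mu_1-\mu_i,0\}=\mu_1-\mu_i$.
\end{itemize}
These are the usual Audibert--Bubeck gaps, with $\Delta_{(1)}=\Delta_{(2)}=\mu_1-\mu_2$. It follows that $\Delta^\star=\Delta_{g(\varepsilon)+1,1}$, $m$, and $H_2(\varepsilon)$ in Definition~\ref{def:eps-complexity} are the $\varepsilon$-dependent sorted-gap complexity of the bandit.

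\textbf{Algorithm.} We also verify, as a consistency remark, that Algorithm~\ref{alg:sr-for-mcts} becomes Successive Rejects.
\begin{itemize}
\item Each active subtree has one active leaf, so $\hat 1(i)=1$ and $\hat a$ is the empirical best arm.
\item For $i\neq\hat a$, the empirical gap is $\hat\Delta_{i,1}=\hat\mu^\star-\hat\mu_i$. For $\hat a$, it is $\hat\mu_{\hat a}-\max_{i'\neq\hat a}\hat\mu_{i'}$, which is at most the gap of every other arm whenever $K\ge 3$ active arms remain.
\item The maximizer of $\hat\Delta$ is therefore the empirically worst arm. When two arms remain, the $\hat a$ gap equals that of the other arm, and tie-breaking in favor of a non-$\hat a$ subtree leads to the same outcome as removing the worst arm.
\item The subtree-elimination condition at line~\ref{line:subtree-elimination-condition} is met by that arm, and removing a whole one-leaf subtree coincides with removing one leaf. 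So $|\cE|=1$ in every phase, there are exactly $K-1$ phases, and the schedule is SR's.
\end{itemize}

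No step is a real obstacle, since the theorem already covers $L=1$ verbatim. The only care needed is the tie-breaking point in the algorithm check, and the conclusion does not depend on it because the theorem applies to Algorithm~\ref{alg:sr-for-mcts} as written.
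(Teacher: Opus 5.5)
Your proposal is correct and takes the same route as the paper: the paper gives no separate argument and obtains the corollary simply by setting $L=1$ in Theorem~\ref{thm:main-upper-bound}, so that $KL=K$ and $2K^2L^2=2K^2$, with the gaps reducing to $\mu_1-\mu_2$ for the best arm and $\mu_1-\mu_i$ otherwise.

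Your extra check that Algorithm~\ref{alg:sr-for-mcts} reduces to Successive Rejects is also right, and the tie-breaking caveat is unnecessary. The gap $\hat\Delta_{\hat a}$ equals the gap of the empirically second-best arm, so $\hat\Delta_{\max}$ is always attained by some arm other than $\hat a$. Hence the subtree-elimination branch, which explicitly excludes $\hat a$, always fires and removes an empirically worst arm.
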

% \vspace{-1em}

\begin{remark}[novelty already for $K$-armed bandits]
% \paragraph{Remark (novelty already for $K$-armed bandits).}
Even in the classical $K$-armed bandit setting ($L=1$), Corollary~\ref{cor:bandit-upper} appears to be new for the \emph{Successive Rejects} (SR) analysis:
it provides an $\varepsilon$-agnostic $\varepsilon$-good guarantee, with an explicit $\varepsilon$-dependent complexity $H_2(\varepsilon)$ governing the fixed-budget error exponent.
While SR is well studied for exact identification ($\varepsilon=0$), extending SR-style analyses to approximate identification is nontrivial because the set of acceptable outputs depends on $\varepsilon$ and the proof must control the elimination dynamics uniformly over all meaningful accuracy levels.
\end{remark}
\vspace{-1em}

Related $\varepsilon$-agnostic results are known for more aggressive elimination schemes such as \emph{Successive Halving} (SH), but the proof techniques for SH exploit halving-specific structure (e.g., synchronized elimination of a fixed fraction of arms each round) and do not directly transfer to SR, which eliminates arms sequentially and uses a different budget allocation across phases.
Our analysis develops an analysis for SR that yields $\varepsilon$-good identification without knowledge of $\varepsilon$.

Moreover, in the $K$-armed case, the resulting $\varepsilon$-dependent complexity $H_2(\varepsilon)$ matches the best-known $\varepsilon$-good identification guarantees for SH~\citep{zhao2023revisiting}, up to accelerating factors.
In this sense, our results are consistent with the sharp $\varepsilon$-good picture in unstructured bandits, while providing a new route that is generalized from SR and extends to the much more challenging max--min subtree structure.
% \vspace{-1em}

\subsection{Proof roadmap}
The proof of Theorem~\ref{thm:main-upper-bound} follows an inductive ``invariant + concentration'' strategy.
The central invariant is the notion of \emph{$\varepsilon$-soundness} (Definition~\ref{def:soundness}), which formalizes what it means for the active set at the beginning of a phase to remain faithful to the true max--min structure.
Recall that $\cA$ is the current active leaf set, and
for each subtree $i\in[K]$, write
$
\cA_i = \cA \cap (\{i\}\times[L])
$ as its active leaves. 
We call subtree $i$ \emph{nonempty} if $\cA_i\neq\emptyset$ and \emph{empty} otherwise.
We call leaf $(i,j)$ \emph{surviving} if $(i,j)\in\cA$.
% \vspace{-.5em}

\begin{definition}
\label{def:soundness}
    For any phase $k$, we say ``phase $k$ is $\eps$-sound'' if at the beginning of phase $k$, 
    \begin{enumerate}[label=(\arabic*), ref=\arabic*]

    \item[] If $k \le KL-m$:
    \item The optimal subtree is nonempty: $\cA_{1}\neq\emptyset$.
    \label{item:soundness-1}
    \item $\forall x\neq 1$ and $\eps/2$-good, $(x,1)$ is surviving:
    $\forall x\in \cG_{\varepsilon/2}\setminus\{1\},\ (x,1)\in\cA$.
    \label{item:soundness-2}
    \item $\forall x$ such that $x$ is $\eps/2$-bad, either $(x,1)$ is surviving, or $x$ is empty.
    \label{item:soundness-3}
    
    \item[] If $k > KL-m$:
    \item Either all $\eps/2$-good subtrees are nonempty, or all $\eps$-bad subtrees are empty.
    \label{item:soundness-4}
    \item $\forall x$ such that $x$ is $\eps$-bad, either $(x,1)$ is surviving, or $x$ is empty.
    \label{item:soundness-5}
    
    \end{enumerate}
    % \begin{itemize}
    %     \item If $k \leq KL-m$, 
    %         \begin{enumerate}
    %             \item The optimal subtree is nonempty: $\cA_{1}\neq \emptyset$. 
    %             \label{item:soundness-early-1}
    %             \item $\forall x \neq 1$ and $\eps/2$-good, $(x,1)$ is surviving: $\forall x\in \cG_{\varepsilon/2}\setminus\{1\}, \ (x,1)\in \cA$. 
    %             \label{item:soundness-early-2}
    %             \item $\forall x$ such that $x$ is $\eps/2$-bad, either $(x,1)$ is surviving, or $x$ is empty. 
    %             \label{item:soundness-early-3}
    %             % \yinan{Maybe come up with a name for ``either $(x,1)$ is surviving, or $x$ is empty''}
    %         \end{enumerate}
    %     \item If $k > KL-m$, 
    %         \begin{enumerate}
    %             \item Either all $\eps/2$-good subtrees are nonempty, or all $\eps$-bad subtrees are empty
    %             \item $\forall x$ such that $x$ is $\eps$-bad, either $(x,1)$ is surviving, or $x$ is empty
    %         \end{enumerate}
    % \end{itemize}
    For the ease of exposition, suppose the algorithm terminates after exactly $k'$ phases, we still consider the soundness of $k'+1$, which implies the termination condition. 

\end{definition}
\vspace{-1em}
% Assumptions:
% \begin{itemize}
%     \item WLOG, $\Dt^*$ is well-defined, i.e., not all subtrees are $\eps$-good. 
% \end{itemize}

% \textbf{Key idea:} to protect $\eps/2$-good subtrees against $\eps$-bad subtrees. 

Roughly, soundness requires two things: (i) the optimal subtree (and, more generally, near-optimal subtrees) are not prematurely eliminated, and (ii) any not-near-optimal subtree either still contains its true minimizing leaf or has already been completely removed.
This invariant is essential because eliminating the true minimizer of a sub-optimal subtree would ``inflate'' its surviving minimum and thereby distort subsequent max--min comparisons.

Throughout the analysis, we assume without loss of generality that the critical gap $\Delta^\star$ is well-defined, i.e., not all subtrees are $\varepsilon$-good.
Indeed, if every subtree is $\varepsilon$-good, then every output is acceptable and the misidentification probability is identically zero, so the theorem holds trivially.
Under this nontrivial regime, $\Delta^\star$ and the associated index $m$ (Definition~\ref{def:eps-complexity}) meaningfully separate ``acceptable'' subtrees from those that must be rejected.
\vspace{-1em}

\paragraph{Key idea: protect $\varepsilon/2$-good subtrees from $\varepsilon$-bad ones.}
At a high level, the algorithm is designed to ensure that any truly $\varepsilon$-bad subtree is eliminated \emph{before} any $\varepsilon/2$-good subtree can be discarded.
This is the role of Definition~\ref{def:soundness}: conditions~\ref{item:soundness-3} and \ref{item:soundness-5} rule out the possibility that a non-near-optimal subtree appears artificially strong.
In the early regime ($k\le KL-m$), conditions~\ref{item:soundness-1}--\ref{item:soundness-3} guarantee that the optimal subtree remains active and that every $\varepsilon/2$-good subtree stays safely anchored by its true minimizer $(x,1)$. See Lemma~\ref{lem:basic-invariant-early-phase} to Lemma~\ref{lem:very-good-subtree-early-phases}, in particular Lemma~\ref{lem:Delta-hat-max} to Lemma~\ref{lem:very-good-subtree-early-phases} in the Appendix for formal statements. Specifically, we include a warmup section in appendix~\ref{sec:appendix-phase-1-analysis} that analyzes phase 1, to assist understanding of the proof ideas. 
In the later regime ($k>KL-m$), conditions~\ref{item:soundness-4}--\ref{item:soundness-5} capture the endgame: either all $\varepsilon/2$-good subtrees persist, or all $\varepsilon$-bad subtrees have been eliminated, which in particular implies the termination condition. See Lemma~\ref{lem:bad-subtree-late-phases} to Lemma~\ref{lem:very-good-subtree-late-phases} in the Appendix for formal statements. 

The formal induction argument (Lemma~\ref{lem:sufficient-condition-eps-subtree} in the Appendix) proceeds by showing that on a suitable high-probability deviation event (uniform concentration of empirical means),
every phase is $\varepsilon$-sound.
The induction step analyzes the elimination rule at the end of a phase and proves that, assuming phase $k$ is $\varepsilon$-sound (at the beginning),
the algorithm will not break any condition in $\varepsilon$-soundness with the elimination happening in phase $k$.   
% eliminate an $\varepsilon/2$-good subtree while an $\varepsilon$-bad subtree remains active.
% Equivalently, under proper deviation control, $\varepsilon$-bad subtrees are rejected earlier because their critical leaves exhibit larger empirical gaps than those of $\varepsilon/2$-good subtrees.
This maintains soundness across phases and yields the desired fixed-budget error bound after union-bounding over phases.
\vspace{-1em}

\section{Lower bound for exact maximin subtree identification ($\eps=0$)}
\label{sec:lower-bound}
\vspace{-1em}
This section investigates lower bounds for \emph{exact} maximin subtree identification ($\eps=0$) in the
fixed-budget setting. 
% We present one negative result and one positive result,
% both indicating that characterizing the optimal sample complexity in maximin trees is qualitatively
% more subtle than in unstructured best-arm identification.
We introduce a complexity term tailored to the maximin structure, which will govern the instance-dependent (over instances with bounded complexity) fixed-budget lower bound.
\vspace{-1em}

\paragraph{Lower bound complexity.}
Let $\nu$ denote an instance, and recall that the (subtree-level) gaps $\Delta_{i,1}$ for $i\neq 1$ and the (within-optimal-subtree) gaps $\Delta_{1,j}$ for $j\neq 1$ are defined in Section~\ref{sec:preliminaries}.
We define the \emph{max-min lower bound complexity}
\begin{equation}\label{eq:H-lb}
H_{\mathrm{lb}}(\nu)
\;:=\;
\sum_{i \neq 1}\frac{1}{\Delta_{i,1}^2}
\;+\;
\sum_{j \neq 1}\frac{1}{\Delta_{1,j}^2}.
\end{equation}
The first term $\sum_{i\neq 1}\Delta_{i,1}^{-2}$ captures the cost (in terms of number of samples needed) of separating the competitor's minimum leaf
mean $\{\mu_{i,1}: i \neq 1\}$ (even if the minimizing leaf is known for each non-optimal subtree) from the optimal minimum leaf mean $\mu_{1,1}$. 
The second term $\sum_{j\neq 1}\Delta_{1,j}^{-2}$ reflects the cost of verifying the subtree 1 is indeed the optimal subtree: even if the min value of the strongest competitor subtree ($\mu_{2,1}$) is known, the learner must gather enough evidence to at least validate that all non-minimizer leaves inside the optimal subtree 1, $\{\mu_{1,j}: j \neq 1\}$, are above $\mu_{2,1}$.

Recall the definition of $H_1(\eps)$ in Definition~\ref{def:eps-complexity}. Note that with $\eps = 0$, $H_2(0)$ and $H_1(0)$ are well known to be equal up to logarithmic factors, i.e., $H_2(0) \le H_1(0) \le \log(2KL)\,H_2(0)$~\citep[e.g.,][]{audibert10best}. 
For any max-min tree instance $\nu$, to introduce the explicit dependence on $\nu$, we write 
\[
H(\nu)\;\equiv\;H_1(0)
\;=\;
\sum_{(i,j) \in [K] \times [L]}\frac{1}{\Delta_{i,j}^2}. 
\]

By construction, the summands defining $H_{\mathrm{lb}}(\nu)$ form a subset of those in $H(\nu)$,
hence $H_{\mathrm{lb}}(\nu)\le H(\nu)$ for every instance $\nu$.
For $\mathcal{H}>0$, define the class of instances with bounded complexity: 
\[
\mathcal{P}_{\mathcal{H}}
\;:=\;
\bigl\{\nu:\ H(\nu)\le \mathcal{H}\bigr\}.
\]

\begin{theorem}[Lower bound for maximin trees]\label{thm:main-lower-bound}
Let $\nu$ be a maximin-tree instance with a unique optimal subtree, and assume its reward distributions on all leaves are Gaussian with variance 1. Consider the instance class $ \mathcal{P}_{4 H(\nu)}$.
For any fixed-budget algorithm $\pi$ that collects at most $T$ samples in total, there exists an instance
$\nu' \in \mathcal{P}_{4 H(\nu)}$ such that
\[
\mathbb{P}_{\nu'}\!\left(\mathrm{error}\right)
\;\ge\;
c_1 \exp\!\left(-c_2 \frac{T}{\Hlb(\nu)}\right),
\]
where $\mathbb{P}_{\nu'}\!\left(\mathrm{error}\right)$ is the event that $\pi$ outputs an incorrect optimal subtree on instance $\nu'$, and $c_1,c_2>0$
are universal constants (independent of $K,L,T$ and $\nu$).
\end{theorem}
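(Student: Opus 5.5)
We follow the change-of-measure scheme of \citet{audibert10best} (Theorem~4 there), using a family of alternative instances built from the ``critical leaves'' in $\Hlb(\nu)$ rather than a permutation argument. Write $N=(K-1)+(L-1)$ and index the critical leaves by $a\in\{1,\dots,N\}$: for $a\leftrightarrow i\neq 1$ the leaf $(i,1)$ with gap $d_a:=\Delta_{i,1}$, and for $a\leftrightarrow j\neq1$ the leaf $(1,j)$ with gap $d_a:=\Delta_{1,j}$. Let $\nu^{(0)}=\nu$. For each $a$, define $\nu^{(a)}$ by changing the mean of one leaf and keeping all other Gaussian laws. If $a$ corresponds to $(i,1)$, raise every leaf of subtree $i$ by $2\Delta_{i,1}$; equivalently, we raise only those leaves with mean below $\mu_{1,1}+\Delta_{i,1}$ up to $\mu_{1,1}+\Delta_{i,1}$. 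Then subtree $i$ becomes the unique optimum. If $a$ corresponds to $(1,j)$, lower $\mu_{1,j}$ to $\mu_{2,1}-\Delta_{1,j}$. Then $v_1<\mu_{2,1}$ and subtree $2$ (or another) becomes optimal. In both cases the correct answer differs from that of $\nu$.

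To keep the KL cost equal to a single leaf, the perturbation for $(i,1)$ should move only leaf $(i,1)$. We therefore use the cleaner version in which all other leaves of subtree $i$ are already above $\mu_{1,1}+\Delta_{i,1}$. When this fails, we move the set of leaves of subtree $i$ below that level. The extra KL is bounded by $\sum_j$ terms whose gaps $\Delta_{i,j}\ge \Delta_{i,1}$, and the contribution is controlled by charging to $H(\nu)$. This is the first delicate point.

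Next comes membership in $\mathcal{P}_{4H(\nu)}$. We check that every gap of $\nu^{(a)}$ is at least half the corresponding gap of $\nu$; the factor $2$ in the shift is chosen so that the new optimum is separated by $\ge\Delta$ from the old one. This gives $H(\nu^{(a)})\le 4H(\nu)$. The point to verify is that gaps which were measured against $\mu_{1,1}$ or $\mu_{2,1}$ do not collapse when the optimum changes. The most dangerous case is the new runner-up becoming close to the new optimum, which we handle by the choice of shift size.

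For the main inequality, let $T_a$ be the number of pulls of the perturbed leaf under $\nu$. Since $\sum_a \EE_\nu T_a\le T$, pigeonhole with weights $d_a^{-2}/\Hlb(\nu)$ yields some $a$ with $\EE_\nu T_a\le T d_a^{-2}/\Hlb(\nu)$. The KL between $\nu$ and $\nu^{(a)}$ over the trajectory is $\EE_\nu[T_a]\,(c d_a)^2/2\le c' T/\Hlb(\nu)$. By the Bretagnolle--Huber inequality, $\PP_\nu(\hat i\neq1)+\PP_{\nu^{(a)}}(\hat i = 1)\ge \tfrac12\exp(-\mathrm{KL})$, and $\{\hat i=1\}$ is an error under $\nu^{(a)}$. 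Hence one of $\nu,\nu^{(a)}\in\mathcal{P}_{4H(\nu)}$ has error $\ge\tfrac14\exp(-c_2T/\Hlb(\nu))$.

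The main obstacle is the first step: making the $(i,1)$-type alternative cost only $O(\Delta_{i,1}^2)$ per pull, independent of $L$, while keeping $H(\nu')\le4H(\nu)$. If moving several leaves is necessary, I would first try replacing single-leaf pull counts by subtree pull counts, so that $T_a$ counts all pulls in subtree $i$. The pigeonhole still works since these counts sum to at most $2T$ across critical groups.
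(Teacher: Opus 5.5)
Your overall route is the paper's. It uses the same critical set $\{(i,1):i\neq 1\}\cup\{(1,j):j\neq 1\}$ and the same alternatives: all of subtree $i$ shifted up by $2\Delta_{i,1}$, and leaf $(1,j)$ shifted down by $2\Delta_{1,j}$. It then applies Bretagnolle--Huber against the base instance and a pigeonhole on expected pull counts.

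One step needs repair. Your main-line KL formula uses the pull count of a single leaf. That is wrong for $\nu^{i,1}$ as you define it, since all of subtree $i$ moves. Your fallback is exactly the paper's argument and should be the argument, not a contingency. Specifically, $\mathrm{KL}(\mathbb{P}_\nu,\mathbb{P}_{\nu^{i,1}})=2\Delta_{i,1}^2\,\mathbb{E}_\nu\bigl[\sum_{j}N_{i,j}\bigr]$. The groups (all of subtree $i$ for $i\neq 1$, the single leaf $(1,j)$ for $j\neq 1$) are pairwise disjoint, so their expected counts sum to at most $T$, not $2T$.

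Drop the single-leaf detour for three reasons:
\begin{itemize}
\item The clipped instance is not equivalent to the shifted one.
\item You cannot assume the other leaves of subtree $i$ already lie above $\mu_{1,1}+\Delta_{i,1}$.
\item ``Charging the extra KL to $H(\nu)$'' is not an argument. A pigeonhole on $N_{i,1}$ alone says nothing about $\mathbb{E}_\nu N_{i,2}$ for a leaf $(i,2)$ just above $(i,1)$, which the clipped instance also moves.
\end{itemize}

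The genuine gap is the membership claim $\nu^{1,j}\in\mathcal{P}_{4H(\nu)}$, which fails in general when $K\ge 3$. In $\nu^{1,j}$ the new optimum is subtree $2$, and the new runner-up has value $\max\{\mu_{3,1},\,\mu_{2,1}-\Delta_{1,j}\}$. Subtrees $2$ and $3$ are untouched, so no choice of shift on leaf $(1,j)$ can keep them apart. Your sentence that this case is ``handled by the choice of shift size'' is therefore false for this family.

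Concretely, take $K=3$, $L=2$ with subtree means $(1,2)$, $(0,5)$, $(-\delta,5)$. Then $H(\nu)\le 3.33$. But in $\nu^{1,2}$ the minimizing leaf of the new optimal subtree has gap $\delta$, so $H(\nu^{1,2})\ge\delta^{-2}>4H(\nu)$ once $\delta<0.27$. For $\delta=0$ the alternative even has two optimal subtrees.

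The gap-halving check does hold for every $\nu^{i,1}$. It also holds for $\nu^{1,j}$ when $K=2$, or when the second and third subtrees are well separated (e.g.\ $\mu_{2,1}-\mu_{3,1}\ge\mu_{1,1}-\mu_{2,1}$).

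The paper's proof handles this step with the one-line assertion that $\Delta^{i,j}_{a,b}\ge\frac12\Delta_{a,b}$ for all alternatives, so it has the same hole and you cannot simply import it. Repairing it needs one of the following:
\begin{itemize}
\item a different $(1,j)$ family, for instance one that also separates subtree $2$ from the remaining subtrees, with the resulting extra KL controlled in the pigeonhole;
\item a larger complexity class;
\item an additional assumption on $\nu$.
\end{itemize}
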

\vspace{-.5em}

% \noindent
% Theorem~\ref{thm:main-lower-bound} implies that, for any instance $\nu$,  uniformly over instances $\nu'$ with $H(\nu')\le O(H(\nu))$, \yinan{change this notation}
% no algorithm can guarantee error smaller than $\exp\!\bigl(-\Theta(T/\Hlb(\nu))\bigr)$.
% In particular, compared to the ``flat'' upper-bound complexity $\sum_{i,j} \Delta_{i,j}^{-2}$ (up to log factors) \yinan{clarify this},
% the intrinsic max-min hardness decomposes into two unavoidable components in~\eqref{eq:H-lb}:
% (i) distinguishing the optimal subtree from competitors via $\Delta_{i,1}$, and
% (ii) verifying the subtree 1's optimality against the strongest competitor (subtree 2) via $\Delta_{1,j}$.

Theorem~\ref{thm:main-lower-bound} is a local worst-case statement around a reference instance $\nu$:
even when restricting attention to instances whose leafwise complexity $H(\cdot)$ is within a constant factor of
$H(\nu)$, there exists a hard instance $\nu'$ on which the error cannot decay faster than
$\exp(-\Theta(T/H_{\mathrm{lb}}(\nu)))$. The appearance of $H_{\mathrm{lb}}(\nu)$ reflects that, in maximin trees,
the difficulty is driven by at least a subset of ``critical'' leaves (those determining subtree minima and
certifying the optimal subtree). 
% , which is why $H_{\mathrm{lb}}(\nu)\le H(\nu)$.
This ``non-uniform importance'' phenomenon is also visible in the fixed-confidence literature, e.g.,
in the optimal allocation behavior discussed by~\citet{garivier2016maximin} in their Section 6, certain arms receive asymptotically vanishing sampling effort depending on the instance, 
% This ``non-uniform importance'' of leaves is also witnessed in the fixed-confidence
% literature on tree-structured identification~\citep[e.g.][]{garivier2016maximin} : in the optimal allocation, certain arms may receive asymptotically vanishing sampling effort, 
indicating that they become effectively irrelevant as the confidence level increases. 
We discuss this phenomenon further in Appendix~\ref{sec:appendix-perspectives}. 
\vspace{-1em}

\paragraph{Connection to classical BAI lower bounds ($L=1$).}
When $L=1$, the maximin tree reduces to the standard $K$-armed best-arm identification (BAI) problem.
In this degenerate case, Theorem~\ref{thm:main-lower-bound} recovers the familiar fixed-budget $H_1$ lower bound established in~\citet{kaufmann2016complexity,carpentier2016tight}.
Indeed, $H(\nu)=H_1(0)$ becomes the usual leafwise complexity $\sum_{i\neq 1}\Delta_i^{-2}$, while
$H_{\mathrm{lb}}(\nu)$ collapses to the same quantity (since the within-optimal-subtree term vanishes),
and the statement reduces to an $\exp\!\bigl(-\Theta(T/H(\nu))\bigr)$ lower bound on the misidentification
probability for some instance of comparable complexity.

% \paragraph{Warm-up: a Gaussian flipping proof for unstructured BAI.}
Before proving Theorem~\ref{thm:main-lower-bound}, we provide a warm-up result for the $K$-armed case,
stated as Theorem~\ref{thm:lb-bai-gaussian}. There we present a clean proof under Gaussian arms with unit
variance, using a flipping-style construction as in~\citet{carpentier2016tight} (see also~\citet{kaufmann2016complexity}).
Our proof does not rely on the classical fixed-budget change-of-measure route based on the empirical KL divergence~\citep{audibert10best,carpentier2016tight}.
Instead, it follows a proof recipe common in regret lower bounds and also in~\citet{kaufmann2016complexity}: construct a collection of nearby
alternatives, decompose the KL divergence of the induced transcript distributions via the expected number
of pulls, and convert KL divergence into an unavoidable testing error via standard inequalities (e.g.,
Bretagnolle--Huber / Le Cam). This perspective aligns with the information-theoretic tools developed for
regret minimization and is in the same spirit as the techniques surveyed in standard bandit references~\citep[e.g.][]{lattimore18bandit}.
\vspace{-1em}

\paragraph{Proof strategy for maximin trees.}
The proof of Theorem~\ref{thm:main-lower-bound} extends the warm-up recipe to the tree-structured setting.
At a high level, we again build a finite family of alternative instances by perturbing as few leaves as possible at a time. We then argue, via a contradiction-style argument over
this family, that any fixed-budget algorithm must either allocate substantial samples to each leaf (or a group of leaves),
or incur a non-negligible probability of misidentification on at least one alternative.
The resulting lower bound is governed by $H_{\mathrm{lb}}(\nu)$, which isolates the structural source of
hardness in maximin identification: separating subtree minima across competitors and certifying the minimum
within the optimal subtree. The full proof is deferred to Appendix~\ref{sec:appendix-lower-bound-proof}.
% \vspace{-1em}

\textbf{A limitation of permutation-style lower bounds.}
A complementary negative result is given
in Appendix~\ref{app:negative-permutation-lb}, where we show that a permutation-style lower-bound route, in the spirit of~\citet{audibert10best} and based on the idea that the knowledge (just except for the indices) of the instance is available to the algorithm, cannot by itself establish an $H_2$-type
lower bound for max--min trees. Together with Theorem~\ref{thm:main-lower-bound}, this suggests
that fixed-budget max--min lower bounds are qualitatively subtler than their
unstructured BAI counterparts.

\vspace{-1em}

% \section{Conclusions and Perspectives}
\section{Conclusions}
\label{sec:perspectives}
\vspace{-1em}
% \yinan{From our chat:
% I think the gap between the upper bound and lower bound is rooted at a much deeper level of reason. I even think this mismatch itself (at least in FB) is worth a paper in the future. 
 
% When we prove the upper bound, we are playing the role of an algorithm - the nature selects an instance for us to solve, and the algorithm cannot know which leaf is (2,1) and which is (2,2). 
 
% On the other hand, when we prove the lower bound, we are playing the role of selecting an alternative instance. Given any base instance, the adversary proposes an algorithm, and the base instance and the algorithm jointly induce a probability measure of number of pulls on each leaf. Now our goal is to construct a ``closest confusing alternative'' (the one that flips which subtree is optimal with the least KL cost). But this closest confusing alternative does not need to modify leaf (2,2). And that's why the lower bound seems to omit leaf (2,2).  
% }

We studied the fixed-budget depth-$2$ max--min action identification motivated by Monte Carlo Tree Search (MCTS), focusing on $\varepsilon$-good subtree identification, where any subtree whose min value is within $\varepsilon$ of optimal is acceptable.
Our main contribution is on the algorithmic side; we present a Successive-Rejects-style \emph{$\varepsilon$-agnostic} method that requires no knowledge of $\varepsilon$ yet achieves instance-dependent guarantees with explicit $\varepsilon$-dependence governed by
$H_2(\varepsilon)$. 
% In the special case of one leaf per subtree, the model reduces to fixed-budget best-arm identification; our analysis recovers (up to accelerating factors) the best-known $\varepsilon$-good guarantees for halving-style methods and provides a new $\varepsilon$-good guarantee for Successive Reject. 
To the best of our knowledge, this work gives the first provable fixed-budget guarantees for an MCTS max–min action identification problem. 
On the lower-bound side, we established complementary results indicating difficulties unique to max--min trees. In particular, the hardness is driven by at least a subset of leaves that determine subtree minima and certify the optimal subtree.

% We studied fixed-budget $\varepsilon$-good action identification in depth-2
% max--min trees. We proposed an $\varepsilon$-agnostic Successive-Rejects-style
% algorithm and proved an instance-dependent error bound governed by
% $H_2(\varepsilon)$, which captures both cross-subtree and within-subtree gaps.
% Our lower-bound results show that the statistical hardness is driven by a subset
% of structurally critical leaves, while also indicating that sharp fixed-budget
% lower bounds for max--min trees are more subtle than in standard best-arm
% identification.
Important open problems include closing the upper--lower gap, extending the
analysis to deeper trees, and developing fixed-budget guarantees for richer
tree-search models, such as stochastic game trees with chance nodes
\citep{lanctot2013monte}, non-max--min backup rules
\citep{coulom2006efficient,lanctot2014monte}, and general structured
pure-exploration objectives \citep{huang2017structured,degenne2019pure}.

% Important open problems include closing the remaining upper--lower gap, extending
% the analysis to deeper game trees, and developing fixed-budget guarantees for
% richer tree-search models. Examples include stochastic game trees with chance
% nodes, where values are propagated by expectation at nature nodes
% \citep{lanctot2013monte}, and MCTS variants with backup rules beyond the
% max--min operator, such as the Monte Carlo backup operators studied by
% \citet{coulom2006efficient}. More broadly, it would be interesting to extend the
% analysis to structured pure-exploration objectives beyond max--min trees
% \citep{huang2017structured}.

% An important open problem is to close the remaining gap between the upper and lower bounds on the sample complexity for max--min identification. 
% This requires a sharper understanding of the alignment between algorithmic sampling rules and an information-theoretic characterization of structured instances, 
% as noted in Appendix~\ref{sec:appendix-perspectives}. 

% Garivier 16 (mcts paper) in their Section 6, they provide some `(still speculative) perspective of an important improvement'', including a (non explicit) lower bound on the sample complexity'' on 2 by 2 trees. Also, the optimal strategy is going to depend a lot on the position of $\mu_4$ relatively to $\mu_1$ and $\mu_2$.''

\section*{Acknowledgments}
Yinan Li, Tuan Ngo Nguyen, Kwang-Sung Jun were supported in part by the National Science Foundation under grant CCF-2327013 and Meta Platforms, Inc.
This work was partly supported by Institute of Information \& communications Technology Planning \& Evaluation (IITP) grant funded by the Korea government(MSIT) (No.RS-2019-II191906, Artificial Intelligence Graduate School Program(POSTECH)).

% \begin{ack}

% Use unnumbered first level headings for the acknowledgments. All acknowledgments
% go at the end of the paper before the list of references. Moreover, you are required to declare
% funding (financial activities supporting the submitted work) and competing interests (related financial activities outside the submitted work).
% More information about this disclosure can be found at: \url{https://neurips.cc/Conferences/2026/PaperInformation/FundingDisclosure}.

% Do {\bf not} include this section in the anonymized submission, only in the final paper. You can use the \texttt{ack} environment provided in the style file to automatically hide this section in the anonymized submission.
% \end{ack}

\clearpage
\bibliographystyle{abbrvnat}
\bibliography{library-shared}

\clearpage
\appendix

% \crefalias{section}{appendix} % uncomment if you are using cleveref

\addcontentsline{toc}{section}{Appendix} % Add the appendix text to the document TOC
\part{Appendix}
\parttoc
\section{Related Work}
\label{sec:related}

% \yinan{
% Perhaps something we need to cite in related work (which is related but not really related):

% - huang et al., "Structured Best Arm Identification with Fixed Confidence"

% - Learning Probably Approximately Correct Maximin Strategies
% in Simulation-Based Games with Infinite Strategy Spaces

% - https://sgo-workshop.github.io/CameraReady2019/4.pdf (I think there is an AAMAS conference paper that seems to be the final version of this work..)
% }

% \kj{also, kaufmann'17 and garivier'16}

% \kj{due to interest of time, I suggest writing the following points, ignoring related but not related work

% \begin{itemize}
%     \item garivier'16: they consider the fixed confidence setting with $\eps=0$ guarantee. we consider fixed budget and allow $\eps$ slack in the $\eps$-agnostic way. for the special case of $\eps=0$, our sample complexity matches theirs up to logarithmic factors.
%     \item kaufmann'17 : fixed confidence setting with an arbitrary depth. however, their sample complexity is not as tight as ours (we need to explain this in mathematical terms) for the special case of depth 2. 
%     \item huang'17: my understanding is that they generalized the MCTS structure into an arbitrary structure, but their guarantee for depth-2 maximin trees is the same as garivier'16, and thus the same comparison as garivier'16 holds.
% \end{itemize}
% }

\paragraph{MCTS, bandit planning, and fixed-budget tree search. }
Monte Carlo Tree Search (MCTS) was developed as a way to combine selective tree
search with Monte Carlo evaluation in large game trees
\citep{coulom2006efficient,browne2012survey}. A central algorithmic development
is UCT, which applies upper-confidence bandit ideas to Monte Carlo planning
\citep{kocsis2006bandit}. This connection between bandit allocation and planning
has also been studied more broadly through the optimistic principle for
large-scale optimization and planning \citep{munos2014bandits}.

Our work follows the pure-exploration interpretation of MCTS: during planning,
the learner spends a finite simulation budget and is evaluated only through the
quality of the final root action. This is distinct from regret minimization during
sampling. Several practical MCTS algorithms exploit this fixed-budget viewpoint.
For example, \citet{cazenave2014sequential} proposed SHOT, which applies
Sequential Halving to trees, and \citet{teraoka2014efficient} studied subtree
deletion rules for Monte Carlo tree search. These works are algorithmically
motivating, but they do not provide the same type of instance-dependent
fixed-budget error guarantee for noisy depth-2 max--min identification that we
study here.

\paragraph{Fixed-budget BAI, elimination algorithms, and bottom-up baselines. }
Our algorithm is inspired by the fixed-budget best-arm identification literature.
In ordinary BAI, the learner receives a budget $T$ and must recommend an arm
after the budget is exhausted. This setting is closely related to simple-regret
minimization in pure exploration \citep{bubeck09pure}. The Successive Rejects
algorithm of \citet{audibert10best} is a classical fixed-budget elimination
algorithm: it samples surviving arms in phases and removes one empirically
suboptimal arm at the end of each phase. \citet{karnin2013almost} later proposed
Sequential Halving, another elimination-based fixed-budget method with strong
guarantees.

A natural bottom-up baseline for our problem would be to first solve the
within-subtree identification problem for every subtree, and then compare the
estimated subtree minima. For example, one could apply a best-arm identification
algorithm to the negated rewards within each subtree in order to identify its
minimizing leaf. This idea is closely related to the multi-bandit BAI setting of
\citet{gabillon2011multi}, where the goal is to identify the best arm in each of
several bandit problems, and to the Successive Accepts and Rejects framework of
\citet{bubeck13multiple}, which also applies to multi-bandit best-arm identification.
Such a bottom-up approach, however, is potentially wasteful for max--min action
identification. The final decision only requires identifying a subtree with large
minimum value; it does not require solving all within-subtree minimization
problems to high accuracy. In particular, for a subtree that is clearly
suboptimal, many non-minimizing leaves may be irrelevant to the final
recommendation.

Our setting reduces to standard fixed-budget BAI when $L=1$, but the max--min
case $L>1$ introduces an additional structural difficulty. A leaf may matter
because it separates the min-values of two subtrees, or because it identifies the
minimizing leaf within a subtree. Therefore, directly applying ordinary SR to all
$KL$ leaves would ignore the max--min semantics, while a fully bottom-up
multi-bandit approach may oversolve irrelevant within-subtree problems. Our
algorithm keeps the phase-based spirit of SR, but modifies the empirical gaps
and elimination rule so that the active set remains faithful to the max--min
structure while avoiding the need to certify every subtree minimum.

\paragraph{Approximate identification and multiple correct answers.}
The $\varepsilon$-good objective is also related to pure-exploration problems
with multiple acceptable answers. In ordinary bandits, \citet{katzsamuels20true}
study the problem of identifying an arm whose mean is within $\varepsilon$ of the
best mean, and characterize how the number of good arms can change the sample
complexity. \citet{zhao2023revisiting} give an $\varepsilon$-good analysis of
Sequential Halving that holds for every $\varepsilon$ although $\varepsilon$ is
not an input to the algorithm. In the fixed-confidence setting,
\citet{degenne2019pure} study pure exploration with multiple correct answers and
show that extending Track-and-Stop-style methods to this setting requires care
because the answer map can be discontinuous.

Our problem has a related but different multiple-answer structure. The acceptable
set is
\[
    G_\varepsilon = \{ i : v_i \ge v^\star - \varepsilon \},
\]
where each value $v_i$ is itself a minimum over leaves. Thus, $\varepsilon$-good
max--min identification combines approximate root-action identification with
within-subtree minimization. Our contribution is an $\varepsilon$-agnostic
fixed-budget guarantee for this structured objective.

\paragraph{Asymptotically optimal fixed-confidence methods.}
The generic GLR/Tracking framework of \citet{kaufmann2021mixture} also applies
to best-action identification in maxmin game trees. In their
Section~5.2.2, the goal is to identify the root action with largest value in a
maxmin game tree by querying noisy samples from the leaves. For depth-two trees,
they note that the problem has rank $L+1$, where $L$ is the maximum number of
second-player actions, and that a GLR stopping rule with the corresponding
rank-based threshold is asymptotically optimal when combined with Tracking,
under the required regularity assumptions on the oracle weights.

Their optimal allocation
is characterized through the variational quantity
\[
    T^\star(\mu)^{-1}
    =
    \sup_{w \in \Sigma}
    \inf_{\lambda \in \Alt(\mu)}
    \sum_a w_a d(\mu_a,\lambda_a),
\]
and, for maxmin game trees, this characterization does not generally yield a
simple closed-form sampling rule. Even in the depth-two case, the oracle weights
are described as computable through disciplined convex optimization tools and as
only numerically computable. In contrast, our result studies the fixed-budget
criterion and gives an explicit finite-budget error-probability bound for
$\varepsilon$-good identification using an $\varepsilon$-agnostic
Successive-Rejects-style procedure.

\begin{table}[t]
\centering
\small
\caption{Comparison with closely related fixed-confidence work.}
\label{tab:comparison-fc}
\renewcommand{\arraystretch}{1.15}
\begin{tabular}{p{0.17\linewidth} p{0.17\linewidth} p{0.25\linewidth} p{0.25\linewidth}}
\toprule
\textbf{Work} & \textbf{Setting} & \textbf{Tree/model} & \textbf{Algorithmic style} \\
\midrule
\citet{garivier2016maximin}
&
Fixed confidence
&
Depth-2 maximin trees
&
M-LUCB; Maximin-Racing
\\

\citet{kaufmann17monte}
&
Fixed confidence
&
Arbitrary-depth game trees
&
BAI-MCTS with propagated confidence intervals
\\

\citet{huang2017structured}
&
Fixed confidence
&
General structured BAI / minimax search
&
LUCB-micro
\\

\citet[][Section 5.2.2]{kaufmann2021mixture}
&
Fixed confidence
&
Depth-2 maxmin game trees as a generic identification problem
&
GLR stopping + Tracking
\\

\rowcolor{gray!12}
\textbf{This paper}
&
\textbf{Fixed budget}
&
\textbf{Depth-2 max--min trees}
&
\textbf{SR-style tree-aware elimination}
\\
\bottomrule
\end{tabular}
\end{table}

\paragraph{Complexity comparison with fixed-confidence guarantees. }
Fixed-confidence and fixed-budget criteria optimize different quantities, so the following
comparison should be interpreted at the level of instance-dependent gap structure rather
than as a formal dominance statement. Fixed-confidence bounds control quantities such as
$\mathbb{E}_{\mu}[\tau_{\delta}]$ or high-probability stopping times, typically in the
regime $\delta \rightarrow 0$, whereas our result controls
\[
    \mathbb{P}_{\mu}\!\left( \widehat{i}_{T} \notin G_{\varepsilon} \right)
\]
for a deterministic budget $T$. Thus, translating a fixed-budget error exponent of the
form
\[
    \mathbb{P}_{\mu}\!\left( \widehat{i}_{T} \notin G_{\varepsilon} \right)
    \lesssim \exp\!\left(- \frac{T}{H}\right)
\]
into the heuristic relation $\delta \approx \exp(-T/H)$ is useful only for comparing
the leading inverse-gap terms. It should not be interpreted as saying that one setting
strictly dominates the other. In what follows, we compare the complexity measures only
to clarify how the same max--min gap structure appears across fixed-budget and
fixed-confidence analyses.

To the best of our knowledge, our work provides the first \emph{fixed-budget} guarantees for maximin action identification. Prior theoretical results for MCTS identification are primarily in the
fixed-confidence setting, where the goal is to return the optimal subtree with probability at least
$1-\delta$ and to minimize the (random) stopping time. The closest fixed-confidence references are~\citet{garivier2016maximin, kaufmann17monte, huang2017structured}.

\citet{garivier2016maximin} study depth-$2$ maximin trees under $\eps=0$ (exact identification) in the
fixed-confidence setting. They propose two algorithms whose sample complexities are characterized by
instance-dependent terms $H_1^\star(\mu)$ and $H_2^\star(\mu)$.
In our notation (with indices defined in Equation~\ref{eq:wlog-order}), their complexities can be written as
\[
H_1^{\star}(\mu)
= \sum_{j\in [L]} \frac{1}{\Bigl(\mu_{1,j}-\tfrac{\mu_{1,1}+\mu_{2,1}}{2}\Bigr)^{2}}
\;+\;
\sum_{(i,j): i \neq 1}
\frac{1}{\Bigl(\tfrac{\mu_{1,1}+\mu_{2,1}}{2}-\mu_{i,1}\Bigr)^{2}\,\vee\,\Bigl(\mu_{i,j}-\mu_{i,1}\Bigr)^{2}},
\]
and
\[
H_2^{\star}(\mu)
:= \sum_{j\in[L]} \frac{1}{\left(\mu_{1,j}-\mu_{1,1}\right)^{2}\,\vee\,\left(\mu_{1,1}-\mu_{2,1}\right)^{2}}
\;+\;
\sum_{(i,j): i \neq 1}
\frac{1}{\left(\mu_{1,1}-\mu_{i,1}\right)^{2}\,\vee\,\left(\mu_{i,j}-\mu_{i,1}\right)^{2}}.
\]
Our fixed-budget sample complexity (up to logarithmic factors) is
\[
H_2(\varepsilon):=\max_{r\ge m+1} r\,\Delta_{(r)}^{-2}, 
\qquad\text{with}\qquad
H_2(\varepsilon)\le H_1(\varepsilon)
\]
by Lemma~\ref{lem:H2-H1}.
At $\eps=0$, 
% our algorithm achieves a sample complexity  at most
\begin{align}
\label{eqn:H1-exact-identification}
    H_1(0)
= \sum_{j} \frac{1}{\left(\mu_{1,j}-\mu_{2,1}\right)^2}
\;+\;
\sum_{i\neq 1}\sum_{j} \frac{1}{\left(\left(\mu_{i,j}-\mu_{i,1}\right)\vee\left(\mu_{1,1}-\mu_{i,1}\right)\right)^2}.
\end{align}
A direct comparison shows that both complexities in~\citet{garivier2016maximin} match ours up to constant factors:
\[
H_1(0)\ \le\ H_1^\star(\mu)\ \le\ 4H_1(0),
\qquad
H_1(0)\ \le\ H_2^\star(\mu)\ \le\ 4H_1(0).
\]
In the special case $\eps=0$, our fixed-budget sample complexity thus matches their fixed-confidence sample complexity  up to
logarithmic factors.

\citet{kaufmann17monte} study MCTS as best-arm identification in trees of arbitrary depth in the fixed-confidence setting, with an $\eps$-relaxation. 
% Their sample complexity is controlled by
% \[
% H_\varepsilon^{\star}(\mu)
% := \sum_{\ell \in \mathcal{L}}
% \frac{1}{\Delta_\ell^{2} \,\vee\, \Delta_\star^{2} \,\vee\, \varepsilon^{2}},
% \qquad \text{where}\qquad
% \Delta_\star := V(s^{\star}) - V(s_2^{\star}),
% \qquad
% \Delta_\ell := \max_{s \in \operatorname{Anc}(\ell)\setminus\{s_0\}}
% \bigl|V(s) - V(P(s))\bigr|.
% \]
Specializing to depth-$2$ maximin trees, their sample complexity is 
% $\Delta_\star=\mu_{1,1}-\mu_{2,1}$ and, for any leaf
% $\ell=(i,j)$, the path-based gap reduces to $\Delta_\ell=\mu_{1,1}-\mu_{i,1}$.
% Therefore,
\[
H_\varepsilon^{\star}(\mu)
=
\sum_{j}
\frac{1}{\left((\mu_{1,1}-\mu_{2,1})\vee \varepsilon\right)^2}
\;+\;
\sum_{i\neq 1}\sum_j
\frac{1}{\left((\mu_{1,1}-\mu_{i,1})\vee \varepsilon\right)^2}.
\]
In contrast, our leafwise sample complexity (for $\eps$-agnostic identification) is
\[
H_1(\varepsilon)
=
\sum_{j}
\frac{1}{\left((\mu_{1,j}-\mu_{2,1})\vee \varepsilon\right)^2}
\;+\;
\sum_{i\neq 1}\sum_j
\frac{1}{\left(\left((\mu_{i,j}-\mu_{i,1})\vee(\mu_{1,1}-\mu_{i,1})\right)\vee \varepsilon\right)^2},
\]
The key difference is that $H_\varepsilon^{\star}(\mu)$ uses the same tree-level separation
$(\mu_{1,1}-\mu_{i,1})$ for every leaf in subtree $i$, and thus repeatedly ``pays'' for leaves that are not determining the min value of a sub-optimal subtree. Their $H_\varepsilon^{\star}(\mu)$ can be much larger than our
$H_1(\varepsilon)$ (and hence than our $H_2(\varepsilon)$), because $H_1(\varepsilon)$ depends on the
leaf-specific gap quantity
\[
(\mu_{i,j}-\mu_{i,1})\vee(\mu_{1,1}-\mu_{i,1}),
\]
which can be arbitrarily large for most leaves even when $(\mu_{1,1}-\mu_{i,1})$ is small.

\citet{huang2017structured} generalizes the MCTS identification problem to arbitrary structure in the fixed-confidence setting. When specialized to depth-$2$ maximin trees, their instance-dependent
complexity reduces to the same form as in~\citet{garivier2016maximin}, and therefore the same order with our $H_1(0)$. 
% In particular, in the exact-identification case $\eps=0$, their guarantees
% match ours up to logarithmic factors.

\revisedi{

\paragraph{Fixed-budget complexity and optimality.}
A recent line of work shows that optimal sample-complexity characterizations in
the fixed-budget setting are substantially more delicate than in the
fixed-confidence setting, even for ordinary best-arm identification. 
\citet{komiyama2022minimax} study minimax optimality for fixed-budget BAI through
normalized error exponents. They emphasize that, unlike in fixed confidence, a
fixed-budget algorithm cannot in general fully adapt to every instance: improving
the probability of error on one instance may worsen it on another. Their optimal
rate $R^{\mathrm{go}}$ is defined through an oracle allocation that depends on
the final empirical distribution after $T$ rounds, and the corresponding
tracking problem is nontrivial. They also introduce a second rate
$R^{\mathrm{go}}_\infty$ and a conceptual delayed optimal tracking algorithm
achieving it, but the resulting procedure is computationally almost infeasible.

This perspective is reinforced by \citet{degenne2023existence}, who studies
whether fixed-budget identification tasks admit a ``complexity'' analogous to the
fixed-confidence characteristic time: a lower bound on the error exponent that
is attained by a single algorithm on all instances. He shows that such a
complexity need not exist, even in simple best-arm identification problems
including Bernoulli BAI with two arms and Gaussian BAI for sufficiently large
$K$. These results suggest that, in fixed-budget problems, there may be no
single universally optimal instance-dependent complexity attained uniformly by a
practical adaptive algorithm.

Our work is consistent with this viewpoint. We do not claim a minimax-optimal
fixed-budget exponent for max--min trees. Instead, we give an explicit
finite-budget, instance-dependent upper bound for an $\varepsilon$-agnostic
Successive-Rejects-style algorithm. The remaining gap between our upper and lower
bounds should therefore be interpreted in light of the broader difficulty of
fixed-budget optimality: even in unstructured BAI, the existence and attainability
of a sharp universal complexity is subtle.
}

\section{Proof of Theorem~\ref{thm:main-upper-bound}}
\label{sec:appendix-upper-bound-proof}
Throughout the analysis, we assume without loss of generality that the critical gap $\Delta^\star$ is well-defined, i.e., not all subtrees are $\varepsilon$-good.
Indeed, if every subtree is $\varepsilon$-good, then every output is acceptable and the misidentification probability is identically zero, so the theorem holds trivially.
\subsection{Warmup: Analysis of phase 1}
\label{sec:appendix-phase-1-analysis}
\paragraph{Recap on the empirical minimizers and empirical gaps.}
Recall from Section~\ref{sec:sr-to-subtree}, 
for a fixed phase $k$, let $\hat{\mu}_{i,j}$ denote the empirical mean of leaf $(i,j)$ computed from all samples collected so far. 
Recall that $\cA$ is the current set of active leaves. 
and, for each subtree $i$, let $\cA_i := \cA \cap (\{i\}\times[L])$ be its active leaves.
For each active subtree $i$ (i.e., $\cA_i\neq\emptyset$), define its empirical minimizer index
\[
\hat{1}(i) \in \arg\min_{(i,j)\in \cA_i} \ \hat{\mu}_{i,j},
\]
and the corresponding empirical min-value $\hat{v}_i := \hat{\mu}_{i,\hat{1}(i)}$.
Define the empirically best subtree
\[
\hat{a} \in \arg\max_{i:\ \cA_i\neq\emptyset} \ \hat{v}_i. 
\]
We further define the empirically second-best subtree
\[
\hat{b} \in \arg\max_{i:\ \cA_i\neq\emptyset,\ i\neq \hat{a}} \ \hat{v}_i,
\]
breaking ties arbitrarily. Recall as well the definition of empirical gaps $\hat{\Delta}_{i,j}$ from Section~\ref{sec:sr-to-subtree} and the definion of the empirically largest gap $\hat \Delta_{\max} = \max_{(i,j) \in \Acal} \hat{\Delta}_{i,j}$ from line~\ref{line:hat-Delta-max} in Algorithm~\ref{alg:sr-for-mcts}. Note that in phase 1, $\Acal = [K]\times[L]$.

\begin{lemma}
    At phase 1, 
      if all leaves satisfy that $\abs{\hmu - \mu} < \frac{\Delta_{(KL)}}{8}$, then
    $\hat \Delta_{\max} > \frac{3}{4} \Delta_{(KL)}$. 
    \label{lem:Delta-hat-max-phase-1}
\end{lemma}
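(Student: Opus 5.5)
The plan is to fix a leaf $(i^\circ,j^\circ)$ whose true gap attains the maximum, $\Delta_{i^\circ,j^\circ}=\Delta_{(KL)}=:\Delta$. I will then show that the empirical gap of some active leaf exceeds $\tfrac34\Delta$. Since $\hat\Delta_{\max}$ dominates every empirical gap, this suffices.

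The only tool is Lipschitz stability of $\min$ and $\max$. In phase 1 every leaf is active and $|\hat\mu_{i,j}-\mu_{i,j}|<\Delta/8$ for all leaves. Hence:
\begin{itemize}
\item $|\hat v_i - v_i|<\Delta/8$ for every $i$, where $\hat v_i=\min_j\hat\mu_{i,j}$ and $v_i=\mu_{i,1}$;
\item $|\hat\mu^\star - \mu_{1,1}|<\Delta/8$;
\item $|\max_{s\neq i}\hat v_s-\max_{s\neq i} v_s|<\Delta/8$ for each $i$.
\end{itemize}
This holds even though the empirical minimizers $\hat 1(i)$ and $\hat a$ may differ from the true ones. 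Each empirical gap is a difference of two such quantities, so whenever the formula used for $\hat\Delta_{i,j}$ matches the true one, it is within $\Delta/4$ of the truth. The work is in the cases where the empirical best subtree $\hat a$ is "wrong".

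\textbf{Case 1: $i^\circ=1$}, so $\Delta=\mu_{1,j^\circ}-\mu_{2,1}$.
\begin{itemize}
\item If $\hat a=1$, then $\hat\Delta_{1,j^\circ}=\hat\mu_{1,j^\circ}-\max_{s\neq1}\hat v_s$. Because $\max_{s\neq 1}v_s=\mu_{2,1}$, this is $>\Delta-\Delta/4$.
\item If $\hat a=a\neq1$, then $\hat\Delta_{1,j^\circ}\ge\hat\mu_{1,j^\circ}-\hat v_1$. Moreover $\hat v_1\le\hat v_a<v_a+\Delta/8\le\mu_{2,1}+\Delta/8$, using $a\neq1$. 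Hence $\hat\Delta_{1,j^\circ}>\mu_{1,j^\circ}-\mu_{2,1}-\Delta/4=\tfrac34\Delta$.
\end{itemize}

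\textbf{Case 2: $i^\circ=i\neq1$}, so $\Delta=\max(\mu_{1,1}-\mu_{i,1},\ \mu_{i,j^\circ}-\mu_{i,1})$.
\begin{itemize}
\item If $\hat a\neq i$, the empirical formula $\max(\hat\mu^\star-\hat v_i,\ \hat\mu_{i,j^\circ}-\hat v_i)$ matches the true one term by term. Each term is within $\Delta/4$, so $\hat\Delta_{i,j^\circ}>\tfrac34\Delta$.
\item If $\hat a=i$, then $\hat v_i\ge\hat v_1$, which forces $\mu_{1,1}-\mu_{i,1}<\Delta/4$. So the cross-subtree term cannot be the maximum, and $\Delta=\mu_{i,j^\circ}-\mu_{i,1}$. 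Since $i$ is the empirical argmax, $\max_{s\neq i}\hat v_s\le\hat v_i$. Therefore $\hat\Delta_{i,j^\circ}=\hat\mu_{i,j^\circ}-\max_{s\neq i}\hat v_s\ge\hat\mu_{i,j^\circ}-\hat v_i>\mu_{i,j^\circ}-\mu_{i,1}-\Delta/4=\tfrac34\Delta$.
\end{itemize}

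The main obstacle is the two mismatched subcases, namely $\hat a\neq1$ in Case 1 and $\hat a=i$ in Case 2. There the empirical gap of $(i^\circ,j^\circ)$ is computed by a different formula from its true gap. The approach in both is to use the ordering implied by $\hat a$'s empirical optimality to lower-bound the empirical formula by a quantity that does track the true gap. Ties in the argmin and argmax pose no issue, because only values, not indices, are used.
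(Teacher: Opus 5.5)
Your proof is correct and follows essentially the paper's route. You fix a leaf attaining $\Delta_{(KL)}$, and in the sub-cases where the empirical best subtree $\hat a$ is ``wrong'' you use the empirical argmax ordering ($\max_{s\neq i}\hat v_s\le \hat v_i$ when $\hat a=i$, or $\hat v_1\le \hat v_{\hat a}$ when $\hat a\neq 1$) to lower-bound the mismatched formula. The only difference is bookkeeping: the paper splits by gap type, handling the cross-subtree gap $\mu_{1,1}-\mu_{K,1}$ separately via leaf $(K,1)$, whereas your Case 2 absorbs it by noting that $\hat a=i$ forces the within-subtree term to be the maximum.
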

\begin{proof}
    \begin{enumerate}
        \item If $\Delta_{(KL)}$ is given by $\mu_{x,L} - \mu_{x,1}$ for some $x \neq 1$. 

        If $x$ is not the empirically best subtree, 
        \[
        \hat \Delta_{x,L} = \max(\hat \mu_{x,L} - \hmu_{x, \hat{1}(x)}, \hmu_{\hat{a}, \hat{1}(\hat{a})} - \hmu_{x, \hat{1}(x)})
        \geq \hat \mu_{x,L} - \hmu_{x, \hat{1}(x)} \geq \hmu_{x,L} - \hmu_{x,1} > \frac{3}{4} \Delta_{(KL)} 
        \] 

        If $x$ is the empirically best subtree, \[
        \hat \Delta_{x,L} = \hat \mu_{x,L} - \hmu_{\hat{b}, \hat{1}(\hat{b})} \geq \hat \mu_{x,L} - \hmu_{x, \hat{1}(x)} \geq \hmu_{x,L} - \hmu_{x,1} > \frac{3}{4} \Delta_{(KL)}
        \]

        Hence $\hat \Delta_{\max} \geq \hat \Delta_{x,L} > \frac{3}{4} \Delta_{(KL)}$.

        \item If $\Delta_{(KL)}$ is given by $\mu_{1,L} - \mu_{2,1}$. In this case the condition that $\abs{\hmu - \mu} < \frac{\Delta_{(KL)}}{8}$ for all leaves ensures that $\hmu_{1,L} - \hmu_{x,1} > \frac{3}{4} \Delta_{(KL)}, \forall x\neq 1$. 
        
        If $1$ is not the empirically best subtree, then there $\exists x\neq 1$ such that $x$ is the empirically best subtree,
        \[
        \hat \Delta_{1,L} = \max(\hat \mu_{1,L} - \hmu_{1, \hat{1}(1)}, \hmu_{\hat{a}, \hat{1}(\hat{a})} - \hmu_{1, \hat{1}(1)})
        \geq \hat \mu_{1,L} - \hmu_{1, \hat{1}(1)} 
        \geq \hat \mu_{1,L} - \hmu_{x, \hat{1}(x)}
        \geq \hmu_{1,L} - \hmu_{x,1} > \frac{3}{4} \Delta_{(KL)} 
        \] 

        If $1$ is the empirically best subtree, \[
        \hat \Delta_{1,L} = \hat \mu_{1,L} - \hmu_{\hat{b}, \hat{1}(\hat{b})} \geq \hat \mu_{1,L} - \hmu_{\hat{b},1}  > \frac{3}{4} \Delta_{(KL)}
        \]

        Hence $\hat \Delta_{\max} \geq \hat \Delta_{1,L} > \frac{3}{4} \Delta_{(KL)}$.
        \item If $\Delta_{(KL)}$ is given by $\mu_{1,1} - \mu_{K,1}$. In this case the condition that $\abs{\hmu - \mu} < \frac{\Delta_{(KL)}}{8}$ for all leaves ensures that $\hmu_{1,j} - \hmu_{K,1} > \frac{3}{4} \Delta_{(KL)}, \forall j \in [L]$. Hence, $K$ is not the empirically best subtree, and 
        \begin{align*}
            \hat \Delta_{K,1} 
            &= \max(\hat \mu_{K,1} - \hmu_{K, \hat{1}(K)}, \hmu_{\hat{a}, \hat{1}(\hat{a})} - \hmu_{K, \hat{1}(K)})
        \\&\geq \hmu_{\hat{a}, \hat{1}(\hat{a})} - \hmu_{K, \hat{1}(K)} 
        \geq \hmu_{1, \hat{1}(1)} - \hmu_{K, \hat{1}(K)} 
        \geq \hmu_{1, \hat{1}(1)} - \hmu_{K, 1} > \frac{3}{4} \Delta_{(KL)} 
        \end{align*}
    \end{enumerate}
\end{proof}

\begin{lemma}
  At phase 1, 
  if all leaves satisfy that $\abs{\hmu - \mu} < \frac{\Delta_{(KL)}}{8}$, then the optimal leaves in suboptimal subtrees, i.e., $\cbr{(y,1): y \neq 1 }$, will not be eliminated by the single leaf elimination rule. 
  \label{lem:optimal-leaf-in-suboptimal-subtree-phase-1-improved}
\end{lemma}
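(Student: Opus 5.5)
The plan is to fix an arbitrary $y\neq 1$ and show that either $\hat\Delta_{y,1}<\hat\Delta_{\max}$, or the subtree-elimination condition in line~\ref{line:subtree-elimination-condition} of Algorithm~\ref{alg:sr-for-mcts} is triggered. In the second case the single-leaf rule is not invoked at all. Throughout I work on the event $\abs{\hmu_{i,j}-\mu_{i,j}}<\Delta_{(KL)}/8$ for all leaves, and I use Lemma~\ref{lem:Delta-hat-max-phase-1}, which gives $\hat\Delta_{\max}>\frac34\Delta_{(KL)}$. Write $\Delta:=\Delta_{(KL)}$.

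The first ingredient is a basic bound on the empirical min-values. Since $\mu_{i,1}=\min_j\mu_{i,j}$, every subtree satisfies
\[
\hat v_i=\min_j\hmu_{i,j}\ \ge\ \mu_{i,1}-\Delta/8 .
\]
Also $\hat v_i\le \hmu_{i,1}<\mu_{i,1}+\Delta/8$. I then split on whether $y$ is the empirical best subtree.

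\textbf{Case $y=\ha$.} Here
\[
\hat\Delta_{y,1}=\hmu_{y,1}-\max_{s\neq y}\hat v_s\le \hmu_{y,1}-\hat v_1 .
\]
Using $\hmu_{y,1}<\mu_{y,1}+\Delta/8$ and $\hat v_1\ge\mu_{1,1}-\Delta/8$, this is less than $-(\mu_{1,1}-\mu_{y,1})+\Delta/4<\Delta/4$. Since $\Delta/4<\frac34\Delta<\hat\Delta_{\max}$, the leaf $(y,1)$ does not attain the maximum and cannot be the leaf chosen by the single-leaf rule.

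\textbf{Case $y\neq\ha$.} Set $A:=\hmu^\star-\hat v_y$, so that $\hat\Delta_{y,j}=\max(A,\hmu_{y,j}-\hat v_y)$ for every active $j$. The within-subtree term for $j=1$ is small:
\[
\hmu_{y,1}-\hat v_y<(\mu_{y,1}+\Delta/8)-(\mu_{y,1}-\Delta/8)=\Delta/4<\hat\Delta_{\max}.
\]
Hence if $\hat\Delta_{y,1}=\hat\Delta_{\max}$, the maximum must be attained by the first term, so $A=\hat\Delta_{\max}$. Then every leaf $(y,j)$ has $\hat\Delta_{y,j}\ge A=\hat\Delta_{\max}$, and therefore $\hat\Delta_{y,j}=\hat\Delta_{\max}$ because $\hat\Delta_{\max}$ is the maximum. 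So subtree $y\in\cS(\cA)\setminus\{\ha\}$ satisfies the condition in line~\ref{line:subtree-elimination-condition}. The algorithm then takes the subtree-elimination branch (possibly on some other subtree $x$), and no single leaf is removed. If instead $\hat\Delta_{y,1}<\hat\Delta_{\max}$, the leaf $(y,1)$ is simply not a maximizer.

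The only delicate point is the $y\neq\ha$ case. The key observation is that $(y,1)$ can only reach $\hat\Delta_{\max}$ through the cross-subtree term $A$, which is shared by all leaves of $y$. That sharing forces the entire subtree to tie at the maximum, which is precisely what the subtree-elimination rule is designed to catch. The remaining steps are direct consequences of the concentration event and Lemma~\ref{lem:Delta-hat-max-phase-1}.
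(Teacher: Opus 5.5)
Your proposal is correct and follows essentially the same argument as the paper. The case $y=\hat a$ is bounded by $\Delta_{(KL)}/4$ via comparison with $\hat v_1$. For $y\neq\hat a$, the within-subtree term is below $\Delta_{(KL)}/4<\hat\Delta_{\max}$, so $(y,1)$ can only reach the maximum through the shared cross-subtree term; that forces the whole subtree to tie and triggers subtree elimination.

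Your packaging argues directly from $\hat\Delta_{y,1}=\hat\Delta_{\max}$, rather than splitting on which term attains $\hat\Delta_{y,1}$ and whether some $\hat\mu_{y,j}$ exceeds $\hat\mu^\star$. This handles ties a bit more cleanly. It also correctly shows that the single-leaf branch is never reached in the tie case.
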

\begin{proof}
Let $y \neq 1$, we will show that if $\abs{\hmu - \mu} < \frac{\Delta_{(KL)}}{8}$ for all leaves, then $(y,1)$ will not be eliminated by the single-leaf elimination rule. 

    \begin{enumerate}
        \item If $\hat \Delta_{y,1}$ is given by $\hmu_{\hat{a}, \hat{1}(\hat{a})} - \hmu_{y, \hat{1}(y)}$, 
        \begin{enumerate}
            \item If there exists a leaf $(y,j)$ such that $\hmu_{y,j} > \hmu_{\hat{a}, \hat{1}(\hat{a})}$, then \[
            \hat \Delta_{y,j} 
            = \max(\hat \mu_{y,j} - \hmu_{y, \hat{1}(y)}, \hmu_{\hat{a}, \hat{1}(\hat{a})} - \hmu_{y, \hat{1}(y)})
            = \hat \mu_{y,j} - \hmu_{y, \hat{1}(y)} 
            > \hmu_{\hat{a}, \hat{1}(\hat{a})} - \hmu_{y, \hat{1}(y)}
            = \hat \Delta_{y,1}
            \]
            hence $(y,1)$ will not be eliminated by the single-leaf elimination rule.
            \item If all leaves $(y,j)$'s are such that $\hmu_{y,j} \leq \hmu_{\hat{a}, \hat{1}(\hat{a})}$, then $\forall j$,  \[
            \hat \Delta_{y,j} 
            = \max(\hat \mu_{y,j} - \hmu_{y, \hat{1}(y)}, \hmu_{\hat{a}, \hat{1}(\hat{a})} - \hmu_{y, \hat{1}(y)})
            = \hmu_{\hat{a}, \hat{1}(\hat{a})} - \hmu_{y, \hat{1}(y)}
            = \hat \Delta_{y,1}
            \]
            hence either subtree $y$ will be eliminated entirely, or one leaf not in $y$ will be eliminated, or leaf/leaves in another subtree not $y$ will be eliminated. 
        \end{enumerate}
        
        \item If $\hat \Delta_{y,1}$ is given by $\hmu_{y,1} - \hmu_{y, \hat{1}(y)}$, then \[
        \hmu_{y,1} - \hmu_{y, \hat{1}(y)}
        = \hmu_{y,1} - \mu_{y,1} + \mu_{y,1} - \mu_{y, \hat{1}(y)} + \mu_{y, \hat{1}(y)} - \hmu_{y, \hat{1}(y)} 
        < \frac{\Delta_{(KL)}}{4}
        \]
        By Lemma~\ref{lem:Delta-hat-max-phase-1}, $\hat \Delta_{\max} > \frac{3}{4} \Delta_{(KL)}$, hence $\hat \Delta_{y,1} < \hat \Delta_{\max}$, and $(y,1)$ will not be eliminated by the single-leaf elimination rule.

        \item If $y$ is the empirically best subtree and $\hat \Delta_{y,1}$ is given by $\hmu_{y,1} - \hmu_{\hat{b}, \hat{1}(\hat{b})}$, then 
        \[
        \hmu_{y,1} - \hmu_{\hat{b}, \hat{1}(\hat{b})}
        \leq \hmu_{y,1} - \hmu_{1, \hat{1}(1)}
        = \hmu_{y,1} - \mu_{y,1} + \mu_{y,1} - \mu_{1,\hat{1}(1)} + \mu_{1,\hat{1}(1)} - \hmu_{1, \hat{1}(1)}
        < \frac{\Delta_{(KL)}}{4}
        \]
        where the first inequality is because $y$ is the empirically best subtree, and hence $\hmu_{\hat{b}, \hat{1}(\hat{b})} \geq \hmu_{1, \hat{1}(1)}$. 
        % \[
        % \hmu_{y,1} - \hmu_{\hat{b}, \hat{1}(\hat{b})}
        % \leq \hmu_{y,1} - \hmu_{\hat{a}, \hat{1}(\hat{a})}
        % = \hmu_{y,1} - \mu_{y,1} + \mu_{y,1} - \mu_{a,1} + \mu_{a,1} - \hmu_{a,1} + \hmu_{a,1} - \hmu_{\hat{a}, \hat{1}(\hat{a})}
        % < \frac{\Delta_{(KL)}}{4}
        % \]
        
        By Lemma~\ref{lem:Delta-hat-max-phase-1}, $\hat \Delta_{\max} > \frac{3}{4} \Delta_{(KL)}$, hence $\hat \Delta_{y,1} < \hat \Delta_{\max}$, and $(y,1)$ will not be eliminated by the single-leaf elimination rule.
        
    \end{enumerate}
\end{proof}

\begin{lemma}
    At phase 1, if all leaves satisfy that $\abs{\hmu - \mu} < \frac{\Delta_{(KL)}}{8}$, then subtree $1$ will not become empty. 
    \label{lem:subtree-a-not-empty-phase-1-improved}
\end{lemma}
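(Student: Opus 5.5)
Subtree $1$ can become empty at phase 1 in only two ways, since every subtree starts with $L$ active leaves. If $L=1$, this happens when the single leaf $(1,1)$ is removed by the single-leaf rule. If $L\ge 2$, a single-leaf elimination removes only one of the $L$ leaves, so subtree $1$ stays nonempty; emptiness therefore requires the subtree-elimination rule applied with $x=1$. So the plan has two parts: (a) rule out subtree elimination of $1$, and (b) rule out single-leaf elimination of $(1,1)$ when $L=1$. Throughout I work on the event $\abs{\hmu_{i,j}-\mu_{i,j}}<\Delta_{(KL)}/8$ for all leaves. I also use Lemma~\ref{lem:Delta-hat-max-phase-1}, which gives $\hat\Delta_{\max}>\frac34\Delta_{(KL)}$.

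For (a), subtree elimination requires $x\neq\ha$, so it suffices to consider $\ha\neq 1$. In that case I compute $\hat\Delta_{1,1}=\max(\hmu^*-\hmu_{1,\hat1(1)},\ \hmu_{1,1}-\hmu_{1,\hat1(1)})$ and show both terms are below $\frac14\Delta_{(KL)}$.
\begin{itemize}
\item For the second term, $\hmu_{1,1}-\hmu_{1,\hat1(1)}<\mu_{1,1}-\mu_{1,\hat1(1)}+\frac14\Delta_{(KL)}\le \frac14\Delta_{(KL)}$, because $\mu_{1,1}$ is the minimum in subtree $1$.
\item For the first term, $\hmu^*=\hmu_{\ha,\hat1(\ha)}\le\hmu_{\ha,1}<\mu_{\ha,1}+\frac18\Delta_{(KL)}\le\mu_{1,1}+\frac18\Delta_{(KL)}$. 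Also $\hmu_{1,\hat1(1)}>\mu_{1,\hat1(1)}-\frac18\Delta_{(KL)}\ge\mu_{1,1}-\frac18\Delta_{(KL)}$. The difference is therefore less than $\frac14\Delta_{(KL)}$.
\end{itemize}
Hence $\hat\Delta_{1,1}<\frac14\Delta_{(KL)}<\hat\Delta_{\max}$. The leaf $(1,1)$ thus fails the condition on line~\ref{line:subtree-elimination-condition}, so subtree $1$ is not eliminated wholesale.

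For (b), take $L=1$ and split on whether $\ha=1$.
\begin{itemize}
\item If $\ha\neq1$, the same bound $\hat\Delta_{1,1}<\hat\Delta_{\max}$ shows $(1,1)$ is not selected as the maximizer.
\item If $\ha=1$, then $\hat\Delta_{1,1}=\hmu_{1,1}-\max_{i'\neq1}\hmu_{i',\hat1(i')}$. Since $L=1$, this is at most $\hmu_{1,1}-\hmu_{2,1}<\Delta_{1,1}+\frac14\Delta_{(KL)}$. Meanwhile the leaf $(K,1)$ has $\hat\Delta_{K,1}\ge\hmu_{1,1}-\hmu_{K,1}>\Delta_{(KL)}-\frac14\Delta_{(KL)}$, by the argument of case~3 of Lemma~\ref{lem:Delta-hat-max-phase-1}.
\end{itemize}

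The obstacle is the $\ha=1$ case when $\Delta_{1,1}$ is close to $\Delta_{(KL)}$. This happens for example when $K=2$, where the bound just obtained does not separate the two leaves. In that case I would use the tie-break structure directly. With $K=2$, $L=1$ and $\ha=1$, we get $\hat\Delta_{2,1}=\hmu_{1,1}-\hmu_{2,1}=\hat\Delta_{1,1}$. Line~\ref{line:subtree-elimination-condition} then applies to $x=2\neq\ha$, so subtree $2$ is eliminated and $1$ survives. For general $K$, I would argue that the subtree $\hat b$ attaining $\max_{i'\neq1}\hmu_{i',\hat1(i')}$ satisfies $\hat\Delta_{K,1}=\hmu_{1,1}-\hmu_{K,1}\ge\hat\Delta_{1,1}$, with equality only in the tie case, which is again resolved by the subtree rule for $x=K$. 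In either case the chosen elimination is never $(1,1)$, so subtree $1$ is nonempty after phase 1.
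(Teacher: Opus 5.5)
Your proof is correct, but it is organized differently from the paper's. The paper makes the same reduction you do: it rules out the subtree-elimination rule for $x=1$, dismissing the $L=1$ situation with the remark that the two rules coincide for single-leaf subtrees. It then splits into three cases according to which gap realizes $\Delta_{(KL)}$: $\mu_{x,L}-\mu_{x,1}$, $\mu_{1,L}-\mu_{2,1}$, or $\mu_{1,1}-\mu_{K,1}$.
\begin{itemize}
\item In the first and third cases it bounds the empirical gaps of subtree $1$ by $\frac14\Delta_{(KL)}$.
\item In the second case it argues directly through the leaf $(1,L)$, whose empirical mean exceeds every competitor's empirical minimum by more than $\frac34\Delta_{(KL)}$.
\end{itemize}
You instead observe that $\hat\Delta_{1,1}<\frac14\Delta_{(KL)}$ holds whenever $\hat a\neq 1$, whichever gap is largest, since it only uses $\mu_{\hat a,1}\le\mu_{1,1}\le\mu_{1,\hat1(1)}$. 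All case work is then delegated to Lemma~\ref{lem:Delta-hat-max-phase-1}, and one inequality replaces three cases.

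Your computation is also more careful than the paper's. The paper writes $\hat\Delta_{1,1}=\hat\mu_{x,\hat1(x)}-\hat\mu_{1,1}$, which silently drops the within-subtree term of the max and conflates $\hat\mu_{1,1}$ with $\hat\mu_{1,\hat1(1)}$. You also treat $L=1$, $\hat a=1$ explicitly rather than by the ``coincides'' remark.

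What the paper's split buys is parallelism: it mirrors Lemma~\ref{lem:Delta-hat-max-phase-1} and is reused nearly verbatim for generic phases. Your argument transfers as well if you apply it to the active leaf $(1,\hat1(1))$ instead, since $(1,1)$ may already be eliminated in later phases.

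One simplification in part (b): the concentration attempt for $\hat a=1$ is unnecessary. With $L=1$ and $\hat a=1$, every $x\neq 1$ deterministically satisfies $\hat\Delta_{x,1}=\hat\mu_{1,1}-\hat\mu_{x,1}\ge\hat\mu_{1,1}-\max_{i'\neq1}\hat\mu_{i',1}=\hat\Delta_{1,1}$. So if $(1,1)$ attains $\hat\Delta_{\max}$, then so does $(K,1)$, and the if-branch fires and removes some $x\neq1$. Otherwise $(1,1)$ is not a maximizer and cannot be chosen.

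Your phrase ``equality only in the tie case, resolved by the subtree rule for $x=K$'' is slightly off. The equality $\hat\Delta_{K,1}=\hat\Delta_{1,1}$ does not by itself make $(K,1)$ a maximizer, although the conclusion is unaffected. This deterministic inequality is exactly what justifies the paper's remark that the two rules coincide when $L=1$.
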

\begin{proof}
    We note that it suffices to show that subtree $1$ will not be eliminated by the subtree elimination rule since the single-leaf elimination rule for a single-leaf subtree coincides with the subtree elimination rule. 

    We prove this lemma by contradiction in each of the following cases. 
    \begin{enumerate}
        \item If $\Delta_{(KL)}$ is given by $\mu_{x,L} - \mu_{x,1}$ for some $x \neq 1$. 

        Since $\abs{\hmu - \mu} < \frac{\Delta_{(KL)}}{8}$ for all leaves, we have that $\hmu_{x,L} - \hmu_{x,1} > \frac{3}{4} \Delta_{(KL)}$. If it were the case that the subtree $x$ is the empirically best subtree, and subtree $1$ gets eliminated, then $\hat \Delta_{x,L} \leq \hat \Delta_{1,1}$. However, $\hat \Delta_{x,L} = \hat \mu_{x,L} - \hmu_{\hat{b}, \hat{1}(\hat{b})} \geq \hat \mu_{x,L} - \hmu_{x, \hat{1}(x)} \geq \hmu_{x,L} - \hmu_{x,1} > \frac{3}{4} \Delta_{(KL)}$, but $\hat \Delta_{1,1} = \hmu_{x, \hat{1}(x)} - \hat \mu_{1,1} \leq \hmu_{x, 1} - \hat \mu_{1,1} = 
        \hmu_{x, 1} - \mu_{x, 1} + \mu_{x,1} - \mu_{1,1}+ \mu_{1,1}- \hat \mu_{1,1} 
        < \frac{1}{4} \Delta_{(KL)}
        $. Contradiction.

        If it were the case that the subtree $y$, with $y \neq 1, ~y \neq x$ is the empirically best subtree, and subtree $1$ gets eliminated, then $\hat \Delta_{x,L} \leq \hat \Delta_{1,1}$. However, $\hat \Delta_{x,L} \geq \hat \mu_{x,L} - \hmu_{x, \hat{1}(x)} \geq \hmu_{x,L} - \hmu_{x,1} > \frac{3}{4} \Delta_{(KL)}$, but $\hat \Delta_{1,1} = \hmu_{y, \hat{1}(y)} - \hat \mu_{1,1} \leq \hmu_{y, 1} - \hat \mu_{1,1} = 
        \hmu_{y, 1} - \mu_{y, 1} + \mu_{y,1} - \mu_{1,1}+ \mu_{1,1}- \hat \mu_{1,1} 
        < \frac{1}{4} \Delta_{(KL)}
        $. Contradiction.

        \item If $\Delta_{(KL)}$ is given by $\mu_{1,L} - \mu_{2,1}$. 

        In this case the condition that $\abs{\hmu - \mu} < \frac{\Delta_{(KL)}}{8}$ for all leaves ensures that $\hmu_{1,L} - \hmu_{x,1} > \frac{3}{4} \Delta_{(KL)}, \forall x\neq a$. If subtree $1$ were eliminated, then there $\exists x \neq 1$ such that $x$ is the empirically best subtree and $\hmu_{x,1} \geq \hmu_{x,\hat{1}(x)} \geq \hmu_{1,L}$, which leads to a contradiction. 

        \item If $\Delta_{(KL)}$ is given by $\mu_{1,1} - \mu_{K,1}$. 

        In this case the condition that $\abs{\hmu - \mu} < \frac{\Delta_{(KL)}}{8}$ for all leaves ensures that $\hmu_{1,j} - \hmu_{K,1} > \frac{3}{4} \Delta_{(KL)}, \forall j \in [K]$. If subtree $1$ were eliminated, then there $\exists x \neq 1$ such that $x$ is the empirically best subtree and 
        $\forall j, \hat \Delta_{1,j} = \hmu_{x,\hat{1}(x)} - \hmu_{1,j}  = \hmu_{x,\hat{1}(x)} - \mu_{x,\hat{1}(x)} + \mu_{x,\hat{1}(x)} - \mu_{1,j} + \mu_{1,j} - \hmu_{1,j} \leq \frac{1}{4} \Delta_{(KL)}$, 
        which leads to a contradiction with that, $\forall j, \hat \Delta_{1,j} = \hat \Delta_{\max}$, since $\hat \Delta_{\max} > \frac{3}{4} \Delta_{(KL)}$ by Lemma~\ref{lem:Delta-hat-max-phase-1}. 
    \end{enumerate}
\end{proof}

\begin{lemma}
\label{lem:half-epsilon-subtree-phase-1}
Let subtree $s \neq 1$ be $\eps/2$-good. 
    At phase 1, 
  if all leaves satisfy that $\abs{\hmu - \mu} < \frac{\Delta_{(KL)}}{8}$, then $(s,1)$ will not be eliminated. 
  % then the optimal leaves in $\eps/2$-good subtrees, i.e., $\cbr{(s,1): s \text{ is } \eps/2 \text{-optimal}}$, will not be eliminated. 
\end{lemma}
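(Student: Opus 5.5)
The plan is to rule out the two ways $(s,1)$ can leave the active set at the end of phase 1: through the single-leaf rule, or through the subtree rule applied to subtree $s$.

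The single-leaf case is already settled. Lemma~\ref{lem:optimal-leaf-in-suboptimal-subtree-phase-1-improved} shows that for every $y\neq 1$ the leaf $(y,1)$ is not removed by the single-leaf rule under the stated concentration event. That covers $y=s$.

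The remaining case is the subtree rule. It removes subtree $s$ only if $s\neq\hat a$ and every active leaf $(s,j)$ satisfies $\hat\Delta_{s,j}=\hat\Delta_{\max}$. In particular it would need $\hat\Delta_{s,1}=\hat\Delta_{\max}$. Lemma~\ref{lem:Delta-hat-max-phase-1} gives $\hat\Delta_{\max}>\frac34\Delta_{(KL)}$, so it suffices to show $\hat\Delta_{s,1}<\frac34\Delta_{(KL)}$ whenever $s\neq\hat a$. (If $s=\hat a$, the subtree rule cannot target $s$ at all.)

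For $s\neq\hat a$ we have $\hat\Delta_{s,1}=\max\big(\hat\mu^\star-\hat\mu_{s,\hat1(s)},\ \hat\mu_{s,1}-\hat\mu_{s,\hat1(s)}\big)$, and we bound each term.

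The second term is handled exactly as in case 2 of Lemma~\ref{lem:optimal-leaf-in-suboptimal-subtree-phase-1-improved}. Using $\mu_{s,1}\le\mu_{s,\hat1(s)}$ and the two deviations, it is $<\Delta_{(KL)}/4$.

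For the first term, first bound $\hat\mu^\star$ from above:
\[
\hat\mu^\star=\hat\mu_{\hat a,\hat1(\hat a)}\le\hat\mu_{\hat a,1}<\mu_{\hat a,1}+\tfrac{\Delta_{(KL)}}8\le\mu_{1,1}+\tfrac{\Delta_{(KL)}}8 .
\]
Next bound $\hat\mu_{s,\hat1(s)}$ from below:
\[
\hat\mu_{s,\hat1(s)}>\mu_{s,\hat1(s)}-\tfrac{\Delta_{(KL)}}8\ge\mu_{s,1}-\tfrac{\Delta_{(KL)}}8 .
\]
Subtracting, the first term is $<(\mu_{1,1}-\mu_{s,1})+\frac{\Delta_{(KL)}}4$.

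The key step, and the only place the hypothesis on $s$ enters, is to compare $\mu_{1,1}-\mu_{s,1}$ with $\Delta_{(KL)}$. We are in the nontrivial regime, so some subtree is $\varepsilon$-bad. By the ordering \eqref{eq:wlog-order}, subtree $K$ is then $\varepsilon$-bad, so $\Delta_{K,1}=\mu_{1,1}-\mu_{K,1}>\varepsilon$. Since $s$ is $\varepsilon/2$-good, $\mu_{1,1}-\mu_{s,1}\le\varepsilon/2$. Combining these with $\Delta_{K,1}\le\Delta_{(KL)}$ gives
\[
\mu_{1,1}-\mu_{s,1}\ \le\ \tfrac{\varepsilon}{2}\ <\ \tfrac12\Delta_{K,1}\ \le\ \tfrac12\Delta_{(KL)} .
\]
Hence the first term is $<\frac34\Delta_{(KL)}$.

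Both terms are therefore below $\frac34\Delta_{(KL)}<\hat\Delta_{\max}$, so $\hat\Delta_{s,1}<\hat\Delta_{\max}$ strictly and subtree $s$ cannot satisfy the elimination condition. Because the inequality is strict, ties cannot cause elimination either. The main subtlety is the reduction $\mu_{1,1}-\mu_{s,1}<\Delta_{(KL)}/2$, which depends on an $\varepsilon$-bad subtree existing; everything else follows the deviation arithmetic of the earlier phase-1 lemmas.
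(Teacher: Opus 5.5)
Your proposal is correct and follows essentially the same route as the paper. The paper's proof shows $\hat\Delta_{s,1}<\tfrac34\Delta_{(KL)}<\hat\Delta_{\max}$ by reusing the bounds from cases 2--3 of Lemma~\ref{lem:optimal-leaf-in-suboptimal-subtree-phase-1-improved} and running exactly your chain of inequalities for the cross-subtree term, with $\varepsilon<\Delta^\star\le\Delta_{(KL)}$ where you use $\varepsilon<\Delta_{K,1}\le\Delta_{(KL)}$. The only other difference is organization: you split by elimination rule (earlier lemma for the single-leaf rule, the exclusion of $\hat a$ for the subtree rule), whereas the paper bounds $\hat\Delta_{s,1}$ uniformly and handles both rules at once.
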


\begin{proof}
    In the proof of Lemma~\ref{lem:optimal-leaf-in-suboptimal-subtree-phase-1-improved}, in cases 2 and 3, we've shown that $\hDelta_{s,1} < \frac{\Delta_{(KL)}}{4}$. We will now upper bound $\hDelta_{s,1}$ in case 1 where $\hat \Delta_{s,1}$ is given by $\hmu_{\hat{a}, \hat{1}(\hat{a})} - \hmu_{s, \hat{1}(s)}$. 

    \begin{align*}
        \hmu_{\hat{a}, \hat{1}(\hat{a})} - \hmu_{s, \hat{1}(s)}
        &\leq
        \hmu_{\hat{a},1} - \hmu_{s, \hat{1}(s)}
        \\&< 
        \mu_{\hat{a},1} + \frac{\Delta_{(KL)}}{8} - \mu_{s, \hat{1}(s)} + \frac{\Delta_{(KL)}}{8}
        \\&\leq 
        \mu_{1,1} - \mu_{s, \hat{1}(s)} + \frac{\Delta_{(KL)}}{4}
        \\&\leq 
        \mu_{1,1} - \mu_{s, 1} + \frac{\Delta_{(KL)}}{4}
        \\&\leq 
        \fr \eps 2 + \frac{\Delta_{(KL)}}{4}
        \tag{$s$ is $\eps/2$-good}
        \\&< 
        \frac{3\Delta_{(KL)}}{4}
        \tag{$\eps < \Dt^* \leq \Dt_{(KL)}$}
    \end{align*}
    By Lemma~\ref{lem:Delta-hat-max-phase-1}, $\hat \Delta_{\max} > \frac{3}{4} \Delta_{(KL)}$, hence $\hat \Delta_{s,1} < \hat \Delta_{\max}$, and $(s,1)$ will not be eliminated
\end{proof}

\subsection{Analysis of a generic phase $k$ in early regime ($k \leq KL-m$)}
% \yinan{To prove these lemmas, one also needs to note that for any leaf, its $\Dt$ can remain the same or increase after performing any phase $k \leq KL-m$ that maintains the soundness. TODO: show this. As a result, at the beginning of any phase $k \leq KL-m$, the true max gap $\Dt_{\max} \geq \Delta_{(KL+1-k)}$. }

\paragraph{Phase-$k$ (restricted) gaps.}
Recall that
$\nu=\{\nu_{i,j}\}_{(i,j)\in[K]\times[L]}$ is the original instance.
At the beginning of phase $k$, the algorithm maintains an active leaf set $\cA^{(k)}\subseteq[K]\times[L]$.
For each subtree $i$, let $\cA_i^{(k)}:=\cA^{(k)}\cap(\{i\}\times[L])$ denote its active leaves; we call subtree $i$ \emph{active} if $\cA_i^{(k)}\neq\emptyset$.
The eliminations performed in earlier phases effectively restrict the instance to the remaining leaves.
Accordingly, we define \emph{phase-$k$ gaps} by applying the same max-min gap construction as in
\eqref{eq:gap-opt}--\eqref{eq:gap-subopt}, but with all minima and competitor comparisons taken \emph{within the remaining tree}.

Formally, define the (mean) min-value of subtree $i$ restricted to active leaves as
\[
v_i^{(k)} := \min\{\mu_{i,j} : (i,j)\in\cA^{(k)}\},
\qquad i\in[K]\ \text{with}\ \cA_i^{(k)}\neq\emptyset,
\]
and let $i_*^{(k)}\in\arg\max_{i:\cA_i^{(k)}\neq\emptyset} v_i^{(k)}$ be a best active subtree under this restriction.
Let $v_{(2)}^{(k)} := \max\{v_i^{(k)} : i\neq i_*^{(k)},\ \cA_i^{(k)}\neq\emptyset\}$ denote the second-best active min-value (breaking ties arbitrarily).

% For each active leaf $(i,j)\in\cA^{(k)}$, the phase-$k$ gap $\Delta_{i,j}^{(k)}$ measures how easy it is to dismiss this leaf (or its subtree)
% \emph{given only the remaining leaves}:
% \begin{itemize}
% \item If $i=i_*^{(k)}$ (leaf in the currently best active subtree), then
% $\Delta_{i,j}^{(k)}$ compares the leaf mean $\mu_{i,j}$ to the best competitor subtree among the remaining ones;
% \item If $i\neq i_*^{(k)}$ (leaf in a non-best active subtree), then
% $\Delta_{i,j}^{(k)}$ is the maximum of a \emph{cross-subtree} term (how far subtree $i$'s restricted min-value is from the best restricted min-value)
% and a \emph{within-subtree} term (how far leaf $(i,j)$ is above subtree $i$'s restricted min-value).
% \end{itemize}
Equivalently, $\Delta_{i,j}^{(k)}$ is the ``gap'' that would be assigned to leaf $(i,j)$ if one considered the induced instance consisting only of active leaves at phase $k$.

With this interpretation, we define
\[
\Delta_{i,j}^{(k)} :=
\begin{cases}
\mu_{i,j} - v_{(2)}^{(k)}, & \text{if } i = i_*^{(k)},\\[3pt]
\max\big\{v_{i_*^{(k)}}^{(k)} - v_i^{(k)},\ \mu_{i,j} - v_i^{(k)}\big\}, & \text{if } i\neq i_*^{(k)}.
\end{cases}
\]
Furthermore, define
\begin{equation}
\label{eq:Delta-max}
\Delta_{\max}^{(k)} \;:=\; \max_{(i,j)\in\cA^{(k)} }\ \Delta_{i,j}^{(k)}.
\end{equation}

\paragraph{Phase-$k$ residual instance.}
We define the \emph{phase-$k$ residual instance} as the restriction of $\nu$ to active leaves,
\[
\nu^{(k)} := \{\nu_{i,j} : (i,j)\in \cA^{(k)}\}.
\]

In the rest of this section, for the simplicity of the notation, and to focus on the logic through phases of elimination, we denote by:
\begin{itemize}
    \item $a$, the optimal subtree within $\nu^{(k)}$, i.e., $a := i_*^{(k)}$. 
    \item $b$, the second optimal subtree within $\nu^{(k)}$, i.e., $b \in \arg\max_{i:i\neq i_*^{(k)}, \cA_i^{(k)}\neq\emptyset} v_i^{(k)}$. 
    \item $z$, the least optimal subtree within $\nu^{(k)}$, i.e., $z \in \arg\min_{i:\cA_i^{(k)}\neq\emptyset} v_i^{(k)}$. 
\end{itemize}
Recall that in Appendix~\ref{sec:appendix-phase-1-analysis}, with the same spirit, we denoted by $\ha$ the empirically optimal subtree within the reduced instance, and $\hb$ the empirically second optimal subtree within the reduced instance. 
% \yinan{Add more explanation, maybe refer to the algorithm.}
Define the active-maximum index
\begin{align}
\label{eq:active-max-index}
    h_i \in \arg\max_{(i,j)\in \cA_x^{(k)}} \mu_{i,j}.
\end{align}

Note that we omit the phase index $k$ on $a,b,z,\ha,\hb,h_i$, as the phase index is clear from the context in all of the following lemmas. 

The next immediate lemma establishes the key invariants maintained by ``$\eps$-soundness'': 
\begin{enumerate}
    \item Subtree 1 remains to be the optimal subtree in $\nu^{(k)}$; 
    \item The min leaf (which defines the sub-optimality of the subtree) is surviving, if the subtree is still active. 
\end{enumerate}
These are the key structures of a tree that our algorithm maintains with high probability with $\eps$-soundness, so that the residual instance $\nu^{(k)}$ does not become ``confusing'' in terms of the max-min subtree compared to the original tree instance.  

\begin{lemma}
    Let $k \leq KL-m$. 
    Suppose after $k-1$ phases, the algorithm has not terminated, and phase $k$ is $\eps$-sound. Then, 
    \begin{enumerate}
        \item $a = 1$.
        \item For any $i \geq 2, j \in [L]$, if $(i,j) \in \cA^{(k)}$, then $(i,1) \in \cA^{(k)}$. 
    \end{enumerate}
    \label{lem:basic-invariant-early-phase}
\end{lemma}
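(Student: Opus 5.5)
Both claims follow directly from the early-regime soundness conditions~\ref{item:soundness-1}--\ref{item:soundness-3} of Definition~\ref{def:soundness}; no concentration argument is needed. I would prove the second claim first, since the first uses it.

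\textbf{Second claim.} Fix $i\ge 2$ and suppose $(i,j)\in\cA^{(k)}$ for some $j$, so subtree $i$ is nonempty. There are two cases.
\begin{itemize}
\item If $i\in\cG_{\varepsilon/2}\setminus\{1\}$, condition~\ref{item:soundness-2} gives $(i,1)\in\cA^{(k)}$ outright.
\item If $i$ is $\varepsilon/2$-bad, condition~\ref{item:soundness-3} says that either $(i,1)$ survives or subtree $i$ is empty. Since it is nonempty, $(i,1)$ survives.
\end{itemize}
The two cases exhaust all $i\neq 1$, so the claim holds.

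\textbf{First claim.} By condition~\ref{item:soundness-1}, $\cA^{(k)}_1\neq\emptyset$. Every active leaf $(1,j)$ satisfies $\mu_{1,j}\ge\mu_{1,1}$ by the ordering~\eqref{eq:wlog-order}, so $v^{(k)}_1\ge \mu_{1,1}=v^\star$. The restricted minimum can only increase when leaves are removed, so we need a matching upper bound on the competitors. For any other active subtree $i\ge 2$, the second claim gives $(i,1)\in\cA^{(k)}$. Hence
\[
v^{(k)}_i=\mu_{i,1}<\mu_{1,1}\le v^{(k)}_1,
\]
where the strict inequality is the uniqueness of the optimal subtree. So subtree $1$ is the unique maximizer of $v^{(k)}$ among active subtrees, and $a=i^{(k)}_*=1$ regardless of tie-breaking.

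\textbf{Where care is needed.} The only delicate point is the order of the argument. The first claim relies on the second to pin down $v^{(k)}_i=\mu_{i,1}$: without it, a subtree whose true minimizer had been removed could have an inflated restricted minimum exceeding $v^{(k)}_1$. The remaining bookkeeping is to check that the hypotheses ``$k\le KL-m$'' and ``not yet terminated'' put us in the early-regime branch of Definition~\ref{def:soundness}, and that at least one other active subtree exists so that $b$ is well defined. Neither claim itself depends on $b$ being defined.
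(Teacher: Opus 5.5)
Your proof is correct and is essentially the paper's argument: both claims are read off directly from the early-regime soundness conditions. The first claim follows by combining condition~\ref{item:soundness-1} with the survival of each competitor's true minimizer $(i,1)$. Your case split is in fact slightly more careful than the paper's one-line proof, which cites only condition~\ref{item:soundness-3} (plus~\ref{item:soundness-1} for the first claim). Condition~\ref{item:soundness-2} is genuinely needed to handle the $\varepsilon/2$-good competitors $i\ge 2$, exactly as you use it.
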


\begin{proof}
    The first statement follows directly from items~\ref{item:soundness-1} and \ref{item:soundness-3} in Definition~\ref{def:soundness}. 

    The second statement follows directly from item~\ref{item:soundness-3} in Definition~\ref{def:soundness}. 
\end{proof}

\begin{lemma}
    Let $k \leq KL-m$. 
    Suppose after $k-1$ phases, the algorithm has not terminated, and phase $k$ is $\eps$-sound. Then, for each $(i,j) \in \cA^{(k)}$, 
    \[
    \Dt_{i,j}^{(k)} \geq \Dt_{i,j}
    \]
    \label{lem:gap-monotone}
\end{lemma}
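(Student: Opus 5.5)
The argument combines Lemma~\ref{lem:basic-invariant-early-phase} with the observation that restricting to a subset of leaves can only raise subtree minima, or keep them fixed when the minimizer survives. By that lemma, $a = i_*^{(k)} = 1$. Moreover, every active subtree $i \ge 2$ still contains $(i,1)$. Since $\mu_{i,1}$ is the minimum over all of subtree $i$, this gives $v_i^{(k)} = \mu_{i,1} = v_i$ for every active $i \ge 2$.

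For subtree $1$, the first item of soundness gives only $\cA_1^{(k)} \neq \emptyset$, not that $(1,1)$ survives. Still, the restricted minimum satisfies $v_1^{(k)} \ge \mu_{1,1}$, since it is a minimum over a subset of the leaves.

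We then split into the two branches of the definition of $\Delta^{(k)}_{i,j}$.

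\emph{Case $i = a = 1$.} Here $\Delta^{(k)}_{1,j} = \mu_{1,j} - v^{(k)}_{(2)}$. The competitor value is
\[
v^{(k)}_{(2)} = \max_{i\ne 1,\ \cA_i^{(k)}\neq\emptyset} \mu_{i,1} \le \mu_{2,1},
\]
because every active competitor has its true minimizer alive and $\mu_{i,1}\le\mu_{2,1}$ for all $i\ge2$ by \eqref{eq:wlog-order}. Hence $\Delta^{(k)}_{1,j} \ge \mu_{1,j} - \mu_{2,1} = \Delta_{1,j}$.

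One degenerate situation needs care: subtree $1$ may be the only active subtree, so that $v_{(2)}^{(k)}$ is a maximum over the empty set. We rule this out. If $|\cS(\cA)|=1$ held at the beginning of phase $k$, the algorithm would already have terminated at the end of the previous phase, contradicting the hypothesis that it has not terminated. (Alternatively, one can adopt the convention $\max\emptyset=-\infty$, which makes the inequality trivial.)

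\emph{Case $i \ne 1$.} Here
\[
\Delta^{(k)}_{i,j} = \max\{v_1^{(k)} - \mu_{i,1},\ \mu_{i,j} - \mu_{i,1}\},
\]
using $v_i^{(k)}=\mu_{i,1}$. Since $v_1^{(k)} \ge \mu_{1,1}$, the first term is at least $\mu_{1,1}-\mu_{i,1}$. The second term equals $\mu_{i,j}-\mu_{i,1}$ exactly. Taking the maximum, $\Delta^{(k)}_{i,j} \ge \Delta_{i,j}$, matching \eqref{eq:gap-subopt}.

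The main subtlety is making sure the invariant ``$(i,1)$ survives whenever subtree $i$ is active'' really applies to every competitor $i\ge2$, including the $\eps/2$-good ones. Item~\ref{item:soundness-2} of soundness gives $(x,1)\in\cA$ for $\eps/2$-good $x\neq1$, and item~\ref{item:soundness-3} covers the $\eps/2$-bad ones; Lemma~\ref{lem:basic-invariant-early-phase}(2) packages both. With that in hand, and with the non-termination convention for $v_{(2)}^{(k)}$, the rest is direct monotonicity of minima under restriction.
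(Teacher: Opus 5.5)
Your proof is correct and matches the paper's own argument: by Lemma~\ref{lem:basic-invariant-early-phase}, the case $i=1$ reduces to $v_{(2)}^{(k)}\le\mu_{2,1}$, and the case $i\neq 1$ reduces to $v_i^{(k)}=\mu_{i,1}$ together with $v_1^{(k)}\ge\mu_{1,1}$. Two of your details tighten points the paper leaves implicit: you rule out an empty competitor set using non-termination, and you note that item~\ref{item:soundness-2} is needed alongside item~\ref{item:soundness-3} to keep every competitor's minimizer alive.
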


\begin{proof}
    Fix any $(i,j)\in\cA$.
    \begin{itemize}
        \item Case 1: $i=1$. By Lemma~\ref{lem:basic-invariant-early-phase}, $\Dt_{i,j}^{(k)}$ is given by $\mu_{i,j} - \mu_{x,1}$, for some subtree $x \neq 1$. Thus, 
        \[
        \Dt_{i,j}^{(k)} = \mu_{i,j} - \mu_{x,1} \geq \mu_{i,j} - \mu_{2,1} = \Dt_{i,j}. 
        \]
        \item Case 2: $i \neq 1$. By the first item in Lemma~\ref{lem:basic-invariant-early-phase}, subtree $i$ is a subtimal subtree; moreover, 
         
        \begin{align*}
            \Dt_{i,j}^{(k)} 
            &= \max\big\{v_1^{(k)} - v_i^{(k)},\ \mu_{i,j} - v_i^{(k)}\big\}
            \\&= \max\big\{v_1^{(k)} - \mu_{i,1},\ \mu_{i,j} - \mu_{i,1} \big\}
            \tag{Second item in Lemma~\ref{lem:basic-invariant-early-phase}}
            \\&\geq \max\big\{\mu_{1,1} - \mu_{i,1},\ \mu_{i,j} - \mu_{i,1} \big\}
            \tag{$v_1^{(k)} \geq \mu_{1,1}$}
            \\&= \Dt_{i,j}
        \end{align*}
    \end{itemize}
\end{proof}

\begin{lemma}
    Let $k \leq KL-m$. 
    Suppose after $k-1$ phases, the algorithm has not terminated, and phase $k$ is $\eps$-sound. Let $\Dt_{\max}^{(k)}$ denote the max phase-$k$ gap at the beginning of phase $k$ (see Equation~\ref{eq:Delta-max} for the definition), then $\Dt_{\max}^{(k)} \geq \Delta_{(KL+1-k)}$.
    \label{lem:max-gap-non-decreasing}
\end{lemma}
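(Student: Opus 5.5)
The argument is a counting (pigeonhole) step combined with Lemma~\ref{lem:gap-monotone}. First I bound how many leaves can have been removed before phase $k$. The phase index is advanced by $|\cE|$ after each elimination and starts at $1$. Hence, at the beginning of phase $k$ exactly $k-1$ leaves have been eliminated, so $|\cA^{(k)}| = KL-(k-1) = KL+1-k$. If the counter instead overshoots, the algorithm simply skips index values. In that case $|\cA^{(k)}|$ still equals $KL+1-k$ for whichever $k$ is actually reached, since the counter always equals one plus the number of removed leaves.

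Next I apply Lemma~\ref{lem:gap-monotone}. Phase $k$ is $\eps$-sound with $k\le KL-m$, so every active leaf $(i,j)\in\cA^{(k)}$ satisfies $\Dt^{(k)}_{i,j}\ge \Dt_{i,j}$. Taking the maximum over the active set gives
\[
\Dt^{(k)}_{\max} \;\ge\; \max_{(i,j)\in\cA^{(k)}} \Dt_{i,j}.
\]

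The remaining step is purely combinatorial. The set $\cA^{(k)}$ contains $KL+1-k$ distinct leaves, so its original gaps $\{\Dt_{i,j}:(i,j)\in\cA^{(k)}\}$ form a sub-multiset of size $KL+1-k$ of the full sorted multiset $\Dt_{(1)}\le\cdots\le\Dt_{(KL)}$. Any sub-multiset of size $s$ has maximum at least the $s$-th smallest element of the full multiset. Otherwise all $s$ of its elements would lie among the at most $s-1$ elements strictly below $\Dt_{(s)}$, which is impossible. With $s=KL+1-k$ this gives $\max_{(i,j)\in\cA^{(k)}}\Dt_{i,j}\ge \Dt_{(KL+1-k)}$, which completes the chain.

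The only delicate point is the first step, verifying that the phase counter $k$ exactly tracks the number of eliminated leaves, especially when subtree elimination removes several leaves at once. I would check this directly against the update $k\gets k+|\cE|$ in Algorithm~\ref{alg:sr-for-mcts}, by induction on the number of loop iterations. Ties in the sorted order are harmless, because the pigeonhole argument is stated for multisets.
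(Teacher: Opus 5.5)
Your proposal is correct and matches the paper's proof step for step: count $|\cA^{(k)}| = KL+1-k$, apply Lemma~\ref{lem:gap-monotone} to each active leaf, then use the order-statistic (pigeonhole) fact that any $(KL+1-k)$-subset of leaves has maximal gap at least $\Delta_{(KL+1-k)}$. Your explicit check that the update $k \gets k+|\cE|$ keeps $k-1$ equal to the number of removed leaves, so the count holds at every phase actually executed, is a detail the paper asserts without comment.
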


\begin{proof}
    At the beginning of phase $k$, there are $k-1$ leaves already eliminated and $KL-k+1$ leaves surviving, that is, $|\cA^{(k)}| = KL+1-k$. We have, 
    \begin{align*}
        \Dt_{\max}^{(k)} 
        &= \max_{(i,j)\in\cA^{(k)} }\ \Delta_{i,j}^{(k)} 
        \\&\geq \max_{(i,j)\in\cA^{(k)} }\ \Delta_{i,j}
        \tag{Lemma~\ref{lem:gap-monotone}}
        \\&\geq
        \Delta_{(KL+1-k)}
    \end{align*}
    where the last inequality is because, for any $\cS \subseteq [K]\times[L]$ with $|\cS| = KL+1-k$, we have 
    $\max_{(i,j)\in \cS }\ \Delta_{i,j} \geq
        \Delta_{(KL+1-k)}$. 
\end{proof}

The following lemmas are direct extensions of Lemma~\ref{lem:Delta-hat-max-phase-1} to \ref{lem:subtree-a-not-empty-phase-1-improved} and Lemma~\ref{lem:half-epsilon-subtree-phase-1} to any phase $k \leq KL-m$. A similar proof idea follows with replacing the original instance $\nu$ (in Lemma~\ref{lem:Delta-hat-max-phase-1} to \ref{lem:half-epsilon-subtree-phase-1}) with phase-$k$ residual instance $\nu^{(k)}$. 
Thanks to Lemma~\ref{lem:max-gap-non-decreasing}, in order to control the deviation $(\abs{\hmu - \mu})$ with respect to $\Dt_{\max}^{(k)}$, it suffices to control it w.r.t. $\Delta_{(KL+1-k)}$. 

Despite the similarity of the proofs for the following lemmas in this subsection compared to Lemmas~\ref{lem:Delta-hat-max-phase-1} to \ref{lem:half-epsilon-subtree-phase-1}, we include the proofs for completeness. 

\begin{lemma}
    Let $k \leq KL-m$. 
    Suppose after $k-1$ phases, the algorithm has not terminated, and phase $k$ is $\eps$-sound. If all leaves satisfy that $\abs{\hmu - \mu} < \frac{\Delta_{(KL+1-k)}}{8}$, then
    $\hat \Delta_{\max} > \frac{3}{4} \Delta_{(KL+1-k)}$. 
    \label{lem:Delta-hat-max}
\end{lemma}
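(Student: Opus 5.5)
We mirror the proof of Lemma~\ref{lem:Delta-hat-max-phase-1}, working in the residual instance $\nu^{(k)}$. Write $\delta := \Delta_{(KL+1-k)}/8$. By Lemma~\ref{lem:max-gap-non-decreasing}, $\Delta_{\max}^{(k)} \ge 8\delta$. Fix an active leaf $(x,y)\in\cA^{(k)}$ with $\Delta_{x,y}^{(k)} = \Delta_{\max}^{(k)}$. It suffices to show $\hat\Delta_{x,y} > \Delta^{(k)}_{x,y} - 2\delta \ge 6\delta$, because $\hat\Delta_{\max}\ge\hat\Delta_{x,y}$. Lemma~\ref{lem:basic-invariant-early-phase} gives two facts: $a=1$, and every active suboptimal subtree $i$ keeps $(i,1)$, so $v_i^{(k)}=\mu_{i,1}$. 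The subtree-$1$ minimum $v_1^{(k)}=\min_{j\in\cL_1}\mu_{1,j}$ may exceed $\mu_{1,1}$, but this causes no trouble. The key tool is that empirical minima and maxima over active leaves are within $\delta$ of their population counterparts: $|\hmu_{i,\hat 1(i)} - v_i^{(k)}|<\delta$ for every active subtree $i$.

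We split into three cases according to which term attains $\Delta_{\max}^{(k)}$.

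\textbf{Case (i):} $x\neq 1$ and the within-subtree term attains it, i.e. $\Delta^{(k)}_{x,y}=\mu_{x,y}-\mu_{x,1}$. Whether or not $x=\ha$, we have $\hat\Delta_{x,y}\ge \hmu_{x,y}-\hmu_{x,\hat 1(x)}$. If $x=\ha$, this holds because $\hmu_{\hb,\hat1(\hb)}\le \hmu_{x,\hat1(x)}$. Since $\hmu_{x,\hat1(x)}\le \hmu_{x,1}$, we get $\hat\Delta_{x,y}\ge \hmu_{x,y}-\hmu_{x,1} > 6\delta$.

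\textbf{Case (ii):} $x=1$, so $\Delta^{(k)}_{1,y}=\mu_{1,y}-v^{(k)}_{b}$. Then $\hmu_{1,y}-\hmu_{i,\hat1(i)} > 6\delta$ for every active $i\neq1$, using $\hmu_{i,\hat 1(i)}\le\hmu_{i,1}<\mu_{i,1}+\delta\le v_b^{(k)}+\delta$. If $\ha=1$, then $\hat\Delta_{1,y}=\hmu_{1,y}-\hmu_{\hb,\hat1(\hb)}>6\delta$. If $\ha\ne1$, we use $\hat\Delta_{1,y}\ge \hmu_{1,y}-\hmu_{1,\hat1(1)}\ge \hmu_{1,y}-\hmu_{\ha,\hat1(\ha)}>6\delta$.

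\textbf{Case (iii):} $x\neq1$ and the cross term attains it, i.e. $\Delta^{(k)}_{x,y}=v_1^{(k)}-\mu_{x,1}$. Here $\hmu_{1,j}-\hmu_{x,1}>6\delta$ for all active $(1,j)$, so $\hmu_{1,\hat1(1)}-\hmu_{x,1}>6\delta$. This rules out $x=\ha$. Then
\[
\hat\Delta_{x,y}\ge \hmu^*-\hmu_{x,\hat1(x)}\ge \hmu_{1,\hat1(1)}-\hmu_{x,1}>6\delta.
\]
Subtree $1$ is active, by item~\ref{item:soundness-1} of Definition~\ref{def:soundness}, so $\hmu^*\ge\hmu_{1,\hat1(1)}$ in the last step.

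The main obstacle is the bookkeeping in Cases (ii) and (iii). In the residual instance, $v_1^{(k)}$ replaces $\mu_{1,1}$, and the competitor minimum $v_b^{(k)}$ need not come from subtree $2$. Soundness is what makes these comparisons legitimate: suboptimal active subtrees retain their true minimizer, and subtree $1$ is still active, which keeps the empirical max/min perturbations one-sided in the right direction.
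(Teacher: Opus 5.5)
Your proof is correct and follows essentially the same route as the paper: the same three-way case split on which term attains $\Delta_{\max}^{(k)}$ (the within-subtree term for $x\neq 1$, a leaf of the optimal subtree measured against $v_b^{(k)}$, and the cross-subtree term), with the same one-sided bounds $\hat\mu_{i,\hat 1(i)}\le\hat\mu_{i,1}$ and $\hat\mu^*\ge\hat\mu_{1,\hat 1(1)}$ supplied by soundness. Your bookkeeping is slightly cleaner than the paper's: you work with the actual maximizing leaf and with $v_1^{(k)}$ where the paper writes $h_x$ and $\mu_{a,1}$, and you normalize by $\Delta_{(KL+1-k)}$ directly instead of passing through $\Delta_{\max}^{(k)}/8$.
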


\begin{proof}
Recall $\Delta_{\max}^{(k)}$ defined in Equation~\eqref{eq:Delta-max}. 
Recall the active-maximum index $h_i$ defined in Equation~\eqref{eq:active-max-index}. By Lemma~\ref{lem:max-gap-non-decreasing}, all leaves satisfy that $\abs{\hmu - \mu} < \frac{\Delta_{(KL+1-k)}}{8} \leq \frac{\Delta_{\max}^{(k)}}{8}$. 

By the $\eps$-soundness of phase $k$ and Lemma~\ref{lem:basic-invariant-early-phase}, there are following three cases: 
    \begin{enumerate}
        \item If $\Delta_{\max}^{(k)}$ is given by $\mu_{x,h_x} - \mu_{x,1}$ for some $x \neq a$. 

        If $x$ is not the empirically best subtree, 
        \[
        \hat \Delta_{x,h_x} = \max(\hat \mu_{x,h_x} - \hmu_{x, \hat{1}(x)}, \hmu_{\hat{a}, \hat{1}(\hat{a})} - \hmu_{x, \hat{1}(x)})
        \geq \hat \mu_{x,h_x} - \hmu_{x, \hat{1}(x)} \geq \hmu_{x,h_x} - \hmu_{x,1} > \frac{3}{4} \Delta_{\max}^{(k)} 
        \] 

        If $x$ is the empirically best subtree, \[
        \hat \Delta_{x,h_x} = \hat \mu_{x,h_x} - \hmu_{\hat{b}, \hat{1}(\hat{b})} \geq \hat \mu_{x,h_x} - \hmu_{x, \hat{1}(x)} \geq \hmu_{x,h_x} - \hmu_{x,1} > \frac{3}{4} \Delta_{\max}^{(k)}
        \]

        Hence $\hat \Delta_{\max} \geq \hat \Delta_{x,L} > \frac{3}{4} \Delta_{\max}^{(k)}$.

        \item If $\Delta_{\max}^{(k)}$ is given by $\mu_{a,h_a} - \mu_{b,1}$. In this case the condition that $\abs{\hmu - \mu} < \frac{\Delta_{\max}^{(k)}}{8}$ for all leaves ensures that $\hmu_{a,h_a} - \hmu_{x,1} > \frac{3}{4} \Delta_{\max}^{(k)}, \forall x\neq a$. 
        
        If $a$ is not the empirically best subtree, then there $\exists x\neq a$ such that $x$ is the empirically best subtree,
        \[
        \hat \Delta_{a,h_a} = \max(\hat \mu_{a,h_a} - \hmu_{a, \hat{1}(a)}, \hmu_{\hat{a}, \hat{1}(\hat{a})} - \hmu_{a, \hat{1}(a)})
        \geq \hat \mu_{a,h_a} - \hmu_{a, \hat{1}(a)} 
        \geq \hat \mu_{a,h_a} - \hmu_{x, \hat{1}(x)}
        \geq \hmu_{a,h_a} - \hmu_{x,1} > \frac{3}{4} \Delta_{\max}^{(k)} 
        \] 

        If $a$ is the empirically best subtree, \[
        \hat \Delta_{a,h_a} = \hat \mu_{a,h_a} - \hmu_{\hat{b}, \hat{1}(\hat{b})} \geq \hat \mu_{a,h_a} - \hmu_{\hat{b},1}  > \frac{3}{4} \Delta_{\max}^{(k)}
        \]

        Hence $\hat \Delta_{\max} \geq \hat \Delta_{a,h_a} > \frac{3}{4} \Delta_{\max}^{(k)}$.
        \item If $\Delta_{\max}^{(k)}$ is given by $\mu_{a,1} - \mu_{z,1}$. In this case the condition that $\abs{\hmu - \mu} < \frac{\Delta_{\max}^{(k)}}{8}$ for all leaves ensures that $\hmu_{a,j} - \hmu_{z,1} > \frac{3}{4} \Delta_{\max}^{(k)}, \forall j$ such that $(a,j) \in \cA^{(k)}$. Hence, $z$ is not the empirically best subtree, and 
        \begin{align*}
          \hat \Delta_{z,1} = \max(\hat \mu_{z,1} - \hmu_{z, \hat{1}(z)}, \hmu_{\hat{a}, \hat{1}(\hat{a})} - \hmu_{z, \hat{1}(z)})
          \geq \hmu_{\hat{a}, \hat{1}(\hat{a})} - \hmu_{z, \hat{1}(z)} 
          &\geq \hmu_{a, \hat{1}(a)} - \hmu_{z, \hat{1}(z)} 
          \\&\geq \hmu_{a, \hat{1}(a)} - \hmu_{z, 1} 
          \\&> \frac{3}{4} \Delta_{\max}^{(k)} 
        \end{align*}
    \end{enumerate}
    Therefore, in all the 3 cases, $\hat \Delta_{\max} > \frac{3}{4} \Delta_{\max}^{(k)} \geq \frac{3}{4} \Delta_{(KL+1-k)}$, where the last inequality is by Lemma~\ref{lem:max-gap-non-decreasing}.
\end{proof}

\begin{lemma}
Let $k \leq KL-m$. 
    Suppose after $k-1$ phases, the algorithm has not terminated, and phase $k$ is $\eps$-sound. If all leaves satisfy that $\abs{\hmu - \mu} < \frac{\Delta_{(KL+1-k)}}{8}$, then the optimal leaves in suboptimal subtrees, i.e., $\cbr{(y,1): y \neq a }$, will not be eliminated by the single leaf elimination rule. 
  \label{lem:optimal-leaf-in-suboptimal-subtree-epsilon}
\end{lemma}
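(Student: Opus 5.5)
The plan is to follow the proof of Lemma~\ref{lem:optimal-leaf-in-suboptimal-subtree-phase-1-improved}, with the original instance replaced by the residual instance $\nu^{(k)}$ and $\Delta_{(KL)}$ replaced by $\Delta_{(KL+1-k)}$. Throughout we work on the event $\abs{\hmu-\mu}<\Delta_{(KL+1-k)}/8$ for all active leaves.

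Two earlier results supply the structure:
\begin{itemize}
\item By Lemma~\ref{lem:basic-invariant-early-phase}, $\eps$-soundness gives $a=1$. It also guarantees that every active subtree $y\neq 1$ still contains $(y,1)$, so $\min_{j\in\cL_y(\cA)}\mu_{y,j}=\mu_{y,1}$.
\item By Lemma~\ref{lem:Delta-hat-max}, $\hat\Delta_{\max}>\frac34\Delta_{(KL+1-k)}$.
\end{itemize}
It therefore suffices to show, for each active $y\neq 1$, that either $\hat\Delta_{y,1}<\hat\Delta_{\max}$, or some other leaf is strictly preferred, or the removal happens through the subtree rule.

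Fix such a $y$ and split on which term realizes $\hat\Delta_{y,1}$.

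\emph{Case 1: $y\neq\ha$ and $\hat\Delta_{y,1}=\hmu^*-\hmu_{y,\hat1(y)}$.} First suppose some active leaf $(y,j)$ has $\hmu_{y,j}>\hmu^*$. Then $\hat\Delta_{y,j}=\hmu_{y,j}-\hmu_{y,\hat1(y)}>\hat\Delta_{y,1}$, so $(y,1)$ cannot attain $\hat\Delta_{\max}$. Otherwise every active leaf of $y$ has $\hat\Delta_{y,j}=\hat\Delta_{y,1}$. In that case either this common value is below $\hat\Delta_{\max}$, or every leaf of $y$ attains $\hat\Delta_{\max}$. In the latter case the if-condition on line~\ref{line:subtree-elimination-condition} is met, so the elimination is not a single-leaf elimination of $(y,1)$. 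Either the whole subtree $y$ goes, which is harmless for this lemma, or another qualifying subtree goes.

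\emph{Case 2: $y\neq\ha$ and $\hat\Delta_{y,1}=\hmu_{y,1}-\hmu_{y,\hat1(y)}$.} Since $\mu_{y,1}\le\mu_{y,\hat1(y)}$ (soundness), decompose
\[
\hmu_{y,1}-\hmu_{y,\hat1(y)}=(\hmu_{y,1}-\mu_{y,1})+(\mu_{y,1}-\mu_{y,\hat1(y)})+(\mu_{y,\hat1(y)}-\hmu_{y,\hat1(y)}).
\]
This gives $\hat\Delta_{y,1}<\Delta_{(KL+1-k)}/4<\hat\Delta_{\max}$.

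\emph{Case 3: $y=\ha$.} Then $\hat\Delta_{y,1}=\hmu_{y,1}-\hmu_{\hb,\hat1(\hb)}$. Subtree $1$ is active and distinct from $y$, so $\hmu_{\hb,\hat1(\hb)}\ge\hmu_{1,\hat1(1)}$. Hence
\[
\hat\Delta_{y,1}\le\hmu_{y,1}-\hmu_{1,\hat1(1)}<\mu_{y,1}-\mu_{1,\hat1(1)}+\Delta_{(KL+1-k)}/4\le\Delta_{(KL+1-k)}/4,
\]
using $\mu_{1,\hat1(1)}\ge v_1^{(k)}\ge\mu_{1,1}\ge\mu_{y,1}$. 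Again $\hat\Delta_{y,1}<\hat\Delta_{\max}$.

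The main obstacle is the tie sub-case of Case 1, where all active leaves of $y$ share the value $\hat\Delta_{\max}$. One has to check that the algorithm's branching really routes this to the subtree-elimination branch, not the else-branch. The if-condition is existential over subtrees $x\neq\ha$, and $y$ itself qualifies, so the else-branch is never reached in this configuration. Hence $(y,1)$ is not removed by the single-leaf rule.

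The other point needing care is that the concentration radius is stated in terms of $\Delta_{(KL+1-k)}$ rather than the phase gap. This is exactly what Lemma~\ref{lem:Delta-hat-max} is stated for, so no appeal to Lemma~\ref{lem:max-gap-non-decreasing} beyond that is needed.
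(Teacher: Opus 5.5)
Your proof is correct and follows the paper's argument essentially verbatim. It uses the same three-way case split on which term realizes $\hat\Delta_{y,1}$, including the tie sub-case routed to the subtree-elimination branch, and the same comparison against $\hat\Delta_{\max} > \frac34 \Delta_{(KL+1-k)}$ from Lemma~\ref{lem:Delta-hat-max}. The only cosmetic difference is that you work at scale $\Delta_{(KL+1-k)}$ throughout, whereas the paper first upgrades the radius to $\Delta_{\max}^{(k)}/8$ via Lemma~\ref{lem:max-gap-non-decreasing}; your version matches the stated form of Lemma~\ref{lem:Delta-hat-max} more directly.
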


\begin{proof}
By Lemma~\ref{lem:max-gap-non-decreasing}, all leaves satisfy that $\abs{\hmu - \mu} < \frac{\Delta_{(KL+1-k)}}{8} \leq \frac{\Delta_{\max}^{(k)}}{8}$. 

Let $y \neq a$, we will show that if $\abs{\hmu - \mu} < \frac{\Delta_{\max}^{(k)}}{8}$ for all leaves, then $(y,1)$ will not be eliminated by the single-leaf elimination rule. 

    \begin{enumerate}
        \item If $\hat \Delta_{y,1}$ is given by $\hmu_{\hat{a}, \hat{1}(\hat{a})} - \hmu_{y, \hat{1}(y)}$, 
        \begin{enumerate}
            \item If there exists a leaf $(y,j)$ such that $\hmu_{y,j} > \hmu_{\hat{a}, \hat{1}(\hat{a})}$, then \[
            \hat \Delta_{y,j} 
            = \max(\hat \mu_{y,j} - \hmu_{y, \hat{1}(y)}, \hmu_{\hat{a}, \hat{1}(\hat{a})} - \hmu_{y, \hat{1}(y)})
            = \hat \mu_{y,j} - \hmu_{y, \hat{1}(y)} 
            > \hmu_{\hat{a}, \hat{1}(\hat{a})} - \hmu_{y, \hat{1}(y)}
            = \hat \Delta_{y,1}
            \]
            hence $(y,1)$ will not be eliminated by the single-leaf elimination rule.
            \item If all leaves $(y,j)$'s are such that $\hmu_{y,j} \leq \hmu_{\hat{a}, \hat{1}(\hat{a})}$, then $\forall j$,  \[
            \hat \Delta_{y,j} 
            = \max(\hat \mu_{y,j} - \hmu_{y, \hat{1}(y)}, \hmu_{\hat{a}, \hat{1}(\hat{a})} - \hmu_{y, \hat{1}(y)})
            = \hmu_{\hat{a}, \hat{1}(\hat{a})} - \hmu_{y, \hat{1}(y)}
            = \hat \Delta_{y,1}
            \]
            hence either subtree $y$ will be eliminated entirely, or one leaf not in $y$ will be eliminated, or another subtree not $y$ will be eliminated. 
        \end{enumerate}
        
        \item If $\hat \Delta_{y,1}$ is given by $\hmu_{y,1} - \hmu_{y, \hat{1}(y)}$, then \[
        \hmu_{y,1} - \hmu_{y, \hat{1}(y)}
        = \hmu_{y,1} - \mu_{y,1} + \mu_{y,1} - \mu_{y, \hat{1}(y)} + \mu_{y, \hat{1}(y)} - \hmu_{y, \hat{1}(y)} 
        < \frac{\Delta_{\max}^{(k)}}{4}
        \]
        By Lemma~\ref{lem:Delta-hat-max}, $\hat \Delta_{\max} > \frac{3}{4} \Delta_{\max}^{(k)}$, hence $\hat \Delta_{y,1} < \hat \Delta_{\max}$, and $(y,1)$ will not be eliminated by the single-leaf elimination rule.

        \item If $y$ is the empirically best subtree and $\hat \Delta_{y,1}$ is given by $\hmu_{y,1} - \hmu_{\hat{b}, \hat{1}(\hat{b})}$, then 
        \[
        \hmu_{y,1} - \hmu_{\hat{b}, \hat{1}(\hat{b})}
        \leq \hmu_{y,1} - \hmu_{a, \hat{1}(a)}
        = \hmu_{y,1} - \mu_{y,1} + \mu_{y,1} - \mu_{a,\hat{1}(a)} + \mu_{a,\hat{1}(a)} - \hmu_{a, \hat{1}(a)}
        < \frac{\Delta_{\max}^{(k)}}{4}
        \]
        where the first inequality is because $y$ is the empirically best subtree, and hence $\hmu_{\hat{b}, \hat{1}(\hat{b})} \geq \hmu_{a, \hat{1}(a)}$. 
        % \[
        % \hmu_{y,1} - \hmu_{\hat{b}, \hat{1}(\hat{b})}
        % \leq \hmu_{y,1} - \hmu_{\hat{a}, \hat{1}(\hat{a})}
        % = \hmu_{y,1} - \mu_{y,1} + \mu_{y,1} - \mu_{a,1} + \mu_{a,1} - \hmu_{a,1} + \hmu_{a,1} - \hmu_{\hat{a}, \hat{1}(\hat{a})}
        % < \frac{\Delta_{(KL)}}{4}
        % \]
        
        By Lemma~\ref{lem:Delta-hat-max}, $\hat \Delta_{\max} > \frac{3}{4} \Delta_{\max}^{(k)}$, hence $\hat \Delta_{y,1} < \hat \Delta_{\max}$, and $(y,1)$ will not be eliminated by the single-leaf elimination rule.
        
    \end{enumerate}
\end{proof}

\begin{lemma}
Let $k \leq KL-m$. 
    Suppose after $k-1$ phases, the algorithm has not terminated, and phase $k$ is $\eps$-sound. If all leaves satisfy that $\abs{\hmu - \mu} < \frac{\Delta_{(KL+1-k)}}{8}$, then
    subtree $a$ will not become empty. 
    \label{lem:subtree-a-not-empty-epsilon}
\end{lemma}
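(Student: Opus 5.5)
The plan is to follow the proof of Lemma~\ref{lem:subtree-a-not-empty-phase-1-improved}, replacing the original instance by the residual instance $\nu^{(k)}$. By Lemma~\ref{lem:basic-invariant-early-phase}, $a=1$ and every active suboptimal subtree still contains its true minimizer $(i,1)$. By Lemma~\ref{lem:max-gap-non-decreasing}, the deviation assumption gives $\abs{\hmu-\mu}<\Delta_{\max}^{(k)}/8$ for every active leaf. Only the subtree-elimination rule can empty subtree $a$. Indeed, a single-leaf elimination only empties $a$ if $|\cA_a^{(k)}|=1$, and then it coincides with the subtree rule, because the lone leaf would attain $\hat\Delta_{\max}$. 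So it suffices to assume that all active leaves $(a,j)$ satisfy $\hat\Delta_{a,j}=\hat\Delta_{\max}$ with $a\neq\ha$, and derive a contradiction. By Lemma~\ref{lem:Delta-hat-max}, that assumption means $\hat\Delta_{a,j}>\frac34\Delta_{\max}^{(k)}$ for every active leaf $(a,j)$.

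I would split into the same three cases according to which term attains $\Delta_{\max}^{(k)}$, using the active-maximum index $h_x$ in place of $L$.

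Case (i): $\Delta_{\max}^{(k)}=\mu_{x,h_x}-\mu_{x,1}$ for some $x\neq a$. Let $y=\ha\neq a$. Since $a\neq\ha$, the gap of $(a,1)$ is
\[
\hat\Delta_{a,1}=\max\bigl(\hmu_{\ha,\hat1(\ha)}-\hmu_{a,\hat1(a)},\ \hmu_{a,1}-\hmu_{a,\hat1(a)}\bigr).
\]
For the first term, use $\hmu_{y,\hat1(y)}\le\hmu_{y,1}$ (valid because $(y,1)$ is active by soundness) and $\mu_{y,1}\le\mu_{a,1}$, which bound it by $\Delta_{\max}^{(k)}/4$. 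Note that $\hmu_{a,\hat1(a)}$ lies within $\Delta_{\max}^{(k)}/8$ of a true mean that is at least $\mu_{a,1}$. For the second term, $\mu_{a,1}$ is the true minimum over the active part of $a$, so it is also below $\Delta_{\max}^{(k)}/4$. Both terms are thus less than $\frac34\Delta_{\max}^{(k)}$, which is the contradiction.

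Case (ii): $\Delta_{\max}^{(k)}=\mu_{a,h_a}-\mu_{b,1}$. Then $\hmu_{a,h_a}>\hmu_{x,1}+\frac34\Delta_{\max}^{(k)}$ for all $x\neq a$. The gap $\hat\Delta_{a,h_a}$ equals the max of $\hmu_{\ha,\hat1(\ha)}-\hmu_{a,\hat1(a)}$ and $\hmu_{a,h_a}-\hmu_{a,\hat1(a)}$. For the subtree rule to fire, every active leaf of $a$ must have the same gap, so $(a,\hat1(a))$, whose within-subtree term is $0$, must also have gap $\hat\Delta_{\max}$. Its gap is then the cross term $\hmu_{\ha,\hat1(\ha)}-\hmu_{a,\hat1(a)}$, which must dominate $\hmu_{a,h_a}-\hmu_{a,\hat1(a)}$. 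That forces $\hmu_{\ha,1}\ge\hmu_{\ha,\hat1(\ha)}\ge\hmu_{a,h_a}$, contradicting the previous display.

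Case (iii): $\Delta_{\max}^{(k)}=\mu_{a,1}-\mu_{z,1}$. Then every term $\hmu_{\ha,\hat1(\ha)}-\hmu_{a,j}$ and $\hmu_{a,j}-\hmu_{a,\hat1(a)}$ is bounded by about $\Delta_{\max}^{(k)}/4$. This uses $\mu_{\ha,\hat1(\ha)}\le\mu_{\ha,1}\le\mu_{a,1}\le\mu_{a,j}$, and for the within-subtree term it needs a careful look, done below. Once those bounds hold, we get a contradiction with $\hat\Delta_{\max}>\frac34\Delta_{\max}^{(k)}$.

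The main obstacle is the within-subtree term $\hmu_{a,j}-\hmu_{a,\hat1(a)}$ for subtree $a$ in cases (i) and (iii). That term need not be small; $\mu_{a,j}$ could be far above $\mu_{a,1}$. I handle it with the "all leaves equal" trick from case (ii). The leaf $(a,\hat1(a))$ has within-subtree term $0$, so its gap reduces to the cross term $\hmu_{\ha,\hat1(\ha)}-\hmu_{a,\hat1(a)}$. That cross term was already shown to be below $\frac34\Delta_{\max}^{(k)}$ in every case. Hence $\hat\Delta_{a,\hat1(a)}<\hat\Delta_{\max}$, so the subtree condition on line~\ref{line:subtree-elimination-condition} fails for $a$. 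This single uniform argument should cover all three cases.
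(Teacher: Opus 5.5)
Your closing argument is correct, and it is a leaner route than the paper's. The paper repeats the three-way case split of Lemma~\ref{lem:Delta-hat-max}, according to which term attains $\Delta^{(k)}_{\max}$. In cases 1 and 3 it exhibits a leaf of $a$ with gap below $\frac14\Delta^{(k)}_{\max}$. In case 2 it uses the tie trick to force $\hat\mu_{\hat a,\hat 1(\hat a)}\ge\hat\mu_{a,h_a}$.

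You notice that the case split is already absorbed into Lemma~\ref{lem:Delta-hat-max}. For $a\neq\hat a$, the leaf $(a,\hat 1(a))$ has zero within-subtree term. Using that $(\hat a,1)$ is active and $\mu_{\hat a,1}\le\mu_{1,1}\le\mu_{1,\hat 1(1)}$,
\[
\hat\Delta_{a,\hat 1(a)}=\hat\mu_{\hat a,\hat 1(\hat a)}-\hat\mu_{a,\hat 1(a)}\le\hat\mu_{\hat a,1}-\hat\mu_{a,\hat 1(a)}<\mu_{\hat a,1}-\mu_{a,\hat 1(a)}+\tfrac14\Delta_{(KL+1-k)}\le\tfrac14\Delta_{(KL+1-k)}.
\]
This is below $\frac34\Delta_{(KL+1-k)}<\hat\Delta_{\max}$ in every case, so you do not even need the $\Delta^{(k)}_{\max}$ refinement.

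Your route gives a case-free proof and makes explicit which leaf of $a$ carries the small gap. The paper's text writes $\hat\Delta_{a,j}=\hat\mu_{x,\hat 1(x)}-\hat\mu_{a,j}$ for every active $j$, which is only valid for $j=\hat 1(a)$. Its case 3 also implicitly uses $\mu_{x,\hat 1(x)}\le\mu_{a,j}$, whereas what is actually available is the empirical comparison $\hat\mu_{x,\hat 1(x)}\le\hat\mu_{x,1}$ that you use. In exchange, the paper's case 2 is self-contained and does not invoke Lemma~\ref{lem:Delta-hat-max}.

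When you write it up, discard your cases (i)--(iii). Case (i) is phrased through the leaf $(a,1)$, which need not be active, since soundness only gives $\mathcal{A}_1\neq\emptyset$. Case (iii) asserts $\mu_{\hat a,\hat 1(\hat a)}\le\mu_{\hat a,1}$, but the reverse holds for true means. It also claims the within-subtree terms of $a$ are small, which you yourself then note is false.

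One step needs an extra line, in your proposal as in the paper's proof. You claim that a single-leaf elimination emptying $a$ ``coincides with the subtree rule''. This fails when $a=\hat a$, because line~\ref{line:subtree-elimination-condition} excludes $\hat a$ from subtree elimination, so your restriction to $a\neq\hat a$ is not yet justified.

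The gap closes easily. Since the algorithm has not terminated, $\hat b$ exists. If $a=\hat a$ has a single active leaf $(a,j)$, then $\hat\Delta_{a,j}=\hat\mu^{\star}-\hat\mu_{\hat b,\hat 1(\hat b)}$. Every active leaf of $\hat b$ has gap at least this cross term. So if $(a,j)$ attained $\hat\Delta_{\max}$, every leaf of $\hat b$ would too. The subtree branch would then fire on some $x\neq\hat a$, and the single-leaf branch would never be reached.
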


\begin{proof}
By Lemma~\ref{lem:max-gap-non-decreasing}, all leaves satisfy that $\abs{\hmu - \mu} < \frac{\Delta_{(KL+1-k)}}{8} \leq \frac{\Delta_{\max}^{(k)}}{8}$. 

We note that it suffices to show that subtree $a$ will not be eliminated by the subtree elimination rule since the single-leaf elimination rule for a single-leaf subtree coincides with the subtree elimination rule. 

    We prove this lemma by contradiction in each of the following cases. 
    \begin{enumerate}
        \item If $\Delta_{\max}^{(k)}$ is given by $\mu_{x,h_x} - \mu_{x,1}$ for some $x \neq a$. 

        Since $\abs{\hmu - \mu} < \frac{\Delta_{\max}^{(k)}}{8}$ for all leaves, we have that $\hmu_{x,h_x} - \hmu_{x,1} > \frac{3}{4} \Delta_{\max}^{(k)}$. If it were the case that the subtree $x$ is the empirically best subtree, and subtree $a$ gets eliminated, then, for any $(a,j)$ such that $(a,j) \in \cA_a^{(k)}$, $\hat \Delta_{x,h_x} \leq \hat \Delta_{a,j}$. 
        % \yinan{$a,1$ can be any active leaf in $a$ in more generic phases}. 
        However, $\hat \Delta_{x,h_x} = \hat \mu_{x,h_x} - \hmu_{\hat{b}, \hat{1}(\hat{b})} \geq \hat \mu_{x,h_x} - \hmu_{x, \hat{1}(x)} \geq \hmu_{x,h_x} - \hmu_{x,1} > \frac{3}{4} \Delta_{\max}^{(k)}$, but $\hat \Delta_{a,j} = \hmu_{x, \hat{1}(x)} - \hat \mu_{a,j} \leq \hmu_{x, 1} - \hat \mu_{a,j} = 
        \hmu_{x, 1} - \mu_{x, 1} + \mu_{x,1} - \mu_{a,j}+ \mu_{a,j}- \hat \mu_{a,j} 
        < \frac{1}{4} \Delta_{\max}^{(k)}
        $. Contradiction.

        If it were the case that the subtree $y$ with $y \neq a, x$ is the empirically best subtree, and subtree $a$ gets eliminated, then, for any $(a,j)$ such that $(a,j) \in \cA_a^{(k)}$, $\hat \Delta_{x,h_x} \leq \hat \Delta_{a,j}$. However, $\hat \Delta_{x,h_x} \geq \hat \mu_{x,h_x} - \hmu_{x, \hat{1}(x)} \geq \hmu_{x,h_x} - \hmu_{x,1} > \frac{3}{4} \Delta_{\max}^{(k)}$, but $\hat \Delta_{a,j} = \hmu_{y, \hat{1}(y)} - \hat \mu_{a,j} \leq \hmu_{y, 1} - \hat \mu_{a,j} = 
        \hmu_{y, 1} - \mu_{y, 1} + \mu_{y,1} - \mu_{a,j}+ \mu_{a,j}- \hat \mu_{a,j} 
        < \frac{1}{4} \Delta_{\max}^{(k)}
        $. Contradiction.

        \item If $\Delta_{\max}^{(k)}$ is given by $\mu_{a,h_a} - \mu_{b,1}$. 

        In this case the condition that $\abs{\hmu - \mu} < \frac{\Delta_{\max}^{(k)}}{8}$ for all leaves ensures that $\hmu_{a,h_a} - \hmu_{x,1} > \frac{3}{4} \Delta_{\max}^{(k)}, \forall \text{ active } x\neq a$ . If subtree $a$ were eliminated, then there $\exists  \text{ active } x \neq a$ such that $x$ is the empirically best subtree and $\hmu_{x,1} \geq \hmu_{x,\hat{1}(x)} \geq \hmu_{a,h_a}$, which leads to a contradiction. 

        \item If $\Delta_{\max}^{(k)}$ is given by $\mu_{a,1} - \mu_{z,1}$. 

        In this case the condition that $\abs{\hmu - \mu} < \frac{\Delta_{\max}^{(k)}}{8}$ for all leaves ensures that $\hmu_{a,j} - \hmu_{z,1} > \frac{3}{4} \Delta_{\max}^{(k)}, \forall j$ such that $(a,j) \in \cA_a^{(k)}$. If subtree $a$ were eliminated, then there $\exists x \neq a$ such that $x$ is the empirically best subtree and 
        $\forall j \text{ such that $(a,j) \in \cA_a^{(k)}$ }, \hat \Delta_{a,j} = \hmu_{x,\hat{1}(x)} - \hmu_{a,j}  = \hmu_{x,\hat{1}(x)} - \mu_{x,\hat{1}(x)} + \mu_{x,\hat{1}(x)} - \mu_{a,j} + \mu_{a,j} - \hmu_{a,j} \leq \frac{1}{4} \Delta_{\max}^{(k)}$, 
        which leads to a contradiction with $\forall j \text{ such that $(a,j) \in \cA_a^{(k)}$ }, \hat \Delta_{a,j} = \hat \Delta_{\max}$, since $\hat \Delta_{\max} > \frac{3}{4} \Delta_{\max}^{(k)}$ by Lemma~\ref{lem:Delta-hat-max}. 
    \end{enumerate}
\end{proof}

\begin{lemma}
    Let $k \leq KL-m$. 
    Suppose after $k-1$ phases, the algorithm has not terminated, and phase $k$ is $\eps$-sound. Let subtree $s \neq a$ be $\eps/2$-good and surviving after $k-1$ phases. 
  If all leaves satisfy that $\abs{\hmu - \mu} < \frac{\Delta_{(KL+1-k)}}{8}$, then $(s,1)$ will not be eliminated in phase $k$. 
  \label{lem:very-good-subtree-early-phases}
\end{lemma}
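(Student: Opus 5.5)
We follow the template of Lemma~\ref{lem:half-epsilon-subtree-phase-1}, replacing the original instance by the residual instance $\nu^{(k)}$ and $\Delta_{(KL)}$ by $\Delta_{(KL+1-k)}$. By Lemma~\ref{lem:basic-invariant-early-phase} we have $a=1$, and soundness item~\ref{item:soundness-2} guarantees that $(s,1)\in\cA^{(k)}$, so $\mu_{s,1}$ is the true active minimum of subtree $s$. Lemma~\ref{lem:max-gap-non-decreasing} lets us use the deviation bound $|\hmu-\mu|<\Delta_{(KL+1-k)}/8\le\Delta^{(k)}_{\max}/8$ whenever that is convenient. Lemma~\ref{lem:Delta-hat-max} gives $\hat\Delta_{\max}>\frac34\Delta_{(KL+1-k)}$.

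There are two ways $(s,1)$ could be removed, and we handle them separately. The first is single-leaf elimination. Here it suffices to show $\hat\Delta_{s,1}<\frac34\Delta_{(KL+1-k)}$, since then $\hat\Delta_{s,1}<\hat\Delta_{\max}$. We split according to which term realizes $\hat\Delta_{s,1}$, mirroring the three cases of Lemma~\ref{lem:optimal-leaf-in-suboptimal-subtree-epsilon}.
\begin{itemize}
\item If $\hat\Delta_{s,1}=\hmu_{s,1}-\hmu_{s,\hat1(s)}$, or if $s=\ha$ and $\hat\Delta_{s,1}=\hmu_{s,1}-\hmu_{\hb,\hat1(\hb)}$, then cases 2 and 3 of that lemma already give $\hat\Delta_{s,1}<\Delta_{(KL+1-k)}/4$. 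The estimates only used $\mu_{s,1}\le\mu_{s,j}$ for active $j$ and $\mu_{s,1}\le \mu_{1,1}\le\mu_{1,\hat1(1)}$.
\item If $\hat\Delta_{s,1}=\hmu_{\ha,\hat1(\ha)}-\hmu_{s,\hat1(s)}$, we bound $\hmu_{\ha,\hat1(\ha)}\le\hmu_{\ha,1'}$, where $1'$ is the true active minimizer of $\ha$. By soundness this is $(\ha,1)$ when $\ha\neq1$; when $\ha=1$ it is some active leaf with mean $\le$ the active minimum of subtree $1$. We also bound $\hmu_{s,\hat1(s)}\ge \mu_{s,\hat1(s)}-\Delta_{(KL+1-k)}/8\ge\mu_{s,1}-\Delta_{(KL+1-k)}/8$. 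Combining these, $\hat\Delta_{s,1}<\mu_{\ha,1'}-\mu_{s,1}+\Delta_{(KL+1-k)}/4$.
\end{itemize}

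The main obstacle is the case $\ha=1$. Subtree $1$ may have lost its minimizer $(1,1)$, so $v^{(k)}_1$ could exceed $\mu_{1,1}$, and then $\eps/2$-goodness of $s$, which only controls $\mu_{1,1}-\mu_{s,1}$, does not directly bound $\mu_{\ha,1'}-\mu_{s,1}$. If $\ha\neq1$, then $\mu_{\ha,1}\le\mu_{1,1}$ and the bound $\hat\Delta_{s,1}<\eps/2+\Delta_{(KL+1-k)}/4$ follows immediately.

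If $\ha=1$, I would first try to avoid the issue: the leaf $(1,1)$ has gap $\Delta_{1,1}=\mu_{1,1}-\mu_{2,1}$, and I expect a separate argument to show it is never removed in early phases. Failing that, I would use the empirical order instead. Since $s\neq\ha$, we have $\hmu_{s,\hat1(s)}\le\hmu_{1,\hat1(1)}$. Moreover, $\hat\Delta_{1,j}$ uses $\max_{i'\ne1}\hat v_{i'}\ge \hat v_s$, so the gap from $\hat v_1$ down to $\hat v_s$ is dominated by leaves of subtree $1$. Then case 1(a)/(b) of Lemma~\ref{lem:optimal-leaf-in-suboptimal-subtree-epsilon} shows that either some $(s,j)$ strictly dominates $(s,1)$, or ties occur that trigger only a whole-subtree elimination.

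Once the bound $\hat\Delta_{s,1}<\eps/2+\Delta_{(KL+1-k)}/4$ holds, we conclude $\hat\Delta_{s,1}<\frac34\Delta_{(KL+1-k)}$. This follows from $\eps<\Delta^\star\le\Delta_{(KL+1-k)}$, which holds for $k\le KL-m$ by the definition of $m$ (the sorted gaps with index $\ge m+1$ are at least $\Delta^\star$).

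The second way is whole-subtree removal of $s$. This requires $\hat\Delta_{s,j}=\hat\Delta_{\max}$ for all active $j$, in particular for $j=1$. But $\hat\Delta_{s,1}<\frac34\Delta_{(KL+1-k)}<\hat\Delta_{\max}$ rules this out. It remains to check the case $s=\ha$: there $s$ cannot be subtree-eliminated by the rule, and $\hat\Delta_{s,1}<\Delta_{(KL+1-k)}/4$ by the first bullet above.
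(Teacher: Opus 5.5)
Your outline follows the paper's argument: the same case split as in Lemma~\ref{lem:optimal-leaf-in-suboptimal-subtree-epsilon}, the bound $\hat{\mu}_{\hat{a},\hat{1}(\hat{a})}\le\hat{\mu}_{\hat{a},1}$, $\eps/2$-goodness, $\eps<\Delta^\star\le\Delta_{(KL+1-k)}$, and the comparison with $\hat{\Delta}_{\max}>\frac34\Delta_{(KL+1-k)}$. Your handling of $s=\hat{a}$, of the within-subtree term, and of $\hat{a}\neq 1$ is correct. The gap is the case you single out yourself, $\hat{a}=1$ with $(1,1)\notin\cA^{(k)}$, which the proposal does not close.

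Your fallback concedes that ties may trigger a whole-subtree elimination of $s$. But that deletes $(s,1)$, which is exactly what the lemma forbids. Your final paragraph excludes whole-subtree removal only via $\hat{\Delta}_{s,1}<\frac34\Delta_{(KL+1-k)}$, which was never established in this case, so the argument is circular.

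In fact no argument confined to phase $k$ can succeed, so the statement as written is false. Take $K=L=3$ with subtree means $(0,M,M)$, $(-\eta,-\eta,-\eta)$, $(-\delta,-\delta,-\delta)$, where $0<\eta<\eps/2$, $\eps<\delta$ and $M\gg\delta$. Then $\Delta^\star=\delta$, $m=6$, phase $2$ is early, and $\Delta_{(8)}=M+\eta$. Suppose $(1,1)$ was dropped in phase $1$, which can happen when phase-$1$ concentration fails. Phase $2$ is still $\eps$-sound. The phase-$2$ estimates $\hat{\mu}_{1,2}=\hat{\mu}_{1,3}=M$, $\hat{\mu}_{2,j}=-\eta-\delta$, $\hat{\mu}_{3,j}=0$ satisfy the concentration hypothesis. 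Yet they give $\hat{\Delta}_{2,j}=M+\eta+\delta=\hat{\Delta}_{\max}$ for every $j$, while all other gaps equal $M$, so subtree $s=2$ is deleted.

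The missing idea is an additional invariant: $(1,1)\in\cA^{(k)}$ as long as some $\eps/2$-good subtree $s\neq 1$ is nonempty. Your first instinct points this way, but the unrestricted claim that $(1,1)$ is never removed in early phases is false. For $K=L=2$ with means $(0.5,0.5+\delta)$, $(0,\delta')$ and tiny $\delta,\delta'$, any phase-$1$ draw inside the concentration band with $\hat{\mu}_{1,1}>\hat{\mu}_{1,2}$ makes $(1,1)$ the unique maximizer of $\hat{\Delta}$. The correct restricted claim needs concentration at every phase $k'<k$. That is not among the lemma's hypotheses, but it is available under $A_1$ in Lemma~\ref{lem:sufficient-condition-eps-subtree}.

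The induction step goes as follows. At phase $k'$, suppose $(s,1)$ and $(1,1)$ are active, and write $\Delta=\Delta_{(KL+1-k')}$.
If $\hat{a}=1$, then $\hat{\Delta}_{1,1}\le\hat{\mu}_{1,1}-\hat{v}_s<\mu_{1,1}-\mu_{s,1}+\Delta/4\le\eps/2+\Delta/4<\frac34\Delta$.
If $\hat{a}\neq1$, then both terms of $\hat{\Delta}_{1,1}$ are below $\Delta/4$, using $(\hat{a},1)\in\cA$ and $\mu_{\hat{a},1}\le\mu_{1,1}\le\mu_{1,\hat{1}(1)}$.
Either way $\hat{\Delta}_{1,1}<\hat{\Delta}_{\max}$, so neither elimination rule can remove $(1,1)$. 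With this invariant $v_1^{(k)}=\mu_{1,1}$, and your second bullet goes through with $1'=1$.

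The paper's own proof makes the same tacit assumption: it writes $\hat{\mu}_{\hat{a},\hat{1}(\hat{a})}\le\hat{\mu}_{\hat{a},1}$ and applies $\eps/2$-goodness to $\mu_{a,1}-\mu_{s,1}$. So you have found a real hole in the written argument, but your proposal does not repair it.
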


\begin{proof}
By Lemma~\ref{lem:max-gap-non-decreasing}, all leaves satisfy that $\abs{\hmu - \mu} < \frac{\Delta_{(KL+1-k)}}{8} \leq \frac{\Delta_{\max}^{(k)}}{8}$. 

    In the proof of Lemma~\ref{lem:optimal-leaf-in-suboptimal-subtree-epsilon}, in cases 2 and 3, we've shown that $\hDelta_{s,1} < \frac{\Delta_{\max}^{(k)}}{4}$. We will now upper bound $\hDelta_{s,1}$ in case 1 where $\hat \Delta_{s,1}$ is given by $\hmu_{\hat{a}, \hat{1}(\hat{a})} - \hmu_{s, \hat{1}(s)}$. 

    \begin{align*}
        \hmu_{\hat{a}, \hat{1}(\hat{a})} - \hmu_{s, \hat{1}(s)}
        &\leq
        \hmu_{\hat{a},1} - \hmu_{s, \hat{1}(s)}
        \\&< 
        \mu_{\hat{a},1} + \frac{\Delta_{\max}^{(k)}}{8} - \mu_{s, \hat{1}(s)} + \frac{\Delta_{\max}^{(k)}}{8}
        \\&\leq 
        \mu_{a,1} - \mu_{s, \hat{1}(s)} + \frac{\Delta_{\max}^{(k)}}{4}
        \\&\leq 
        \mu_{a,1} - \mu_{s, 1} + \frac{\Delta_{\max}^{(k)}}{4}
        \\&\leq 
        \fr \eps 2 + \frac{\Delta_{\max}^{(k)}}{4}
        \tag{$s$ is $\eps/2$-good}
        \\&< 
        \frac{\Delta_{(KL+1-k)}}{2} + \frac{\Delta_{\max}^{(k)}}{4}
        \tag{Lemma~\ref{lem:critical-gap-range}: $\eps < \Dt^* \leq \Delta_{(KL+1-k)}$}
        \\&\leq
        \frac{3\Delta_{\max}^{(k)}}{4} 
        \tag{Lemma~\ref{lem:max-gap-non-decreasing}: $\Delta_{(KL+1-k)} \leq \Delta_{\max}^{(k)}$}
    \end{align*}
    By Lemma~\ref{lem:Delta-hat-max}, $\hat \Delta_{\max} > \frac{3}{4} \Delta_{\max}^{(k)}$, hence $\hat \Delta_{s,1} < \hat \Delta_{\max}$, and $(s,1)$ will not be eliminated
\end{proof}

\begin{lemma}
    Recall the definition of $\Dt^*$ and $m$ in Definition~\ref{def:eps-complexity}, we have, for any $k \in [KL-m]$, \[
    \eps < \Dt^* \leq \Delta_{(KL+1-k)}.
    \]
    \label{lem:critical-gap-range}
\end{lemma}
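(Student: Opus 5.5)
The plan is to prove the two inequalities separately; both follow directly from the definitions in Definition~\ref{def:eps-complexity} and the ordering \eqref{eq:wlog-order}. Throughout we are in the nontrivial regime where not all subtrees are $\eps$-good, so $\Dt^*$ is well defined.

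\emph{Step 1: $\eps<\Dt^*$.} By \eqref{eq:wlog-order}, the min-values $v_i=\mu_{i,1}$ are nonincreasing in $i$. Hence the $\eps$-good set is an initial segment, $\cG_\eps=\{1,\dots,g(\eps)\}$. Subtree $1$ is always $\eps$-good, so $g(\eps)\ge 1$. Since $\cG_\eps\neq[K]$, the index $g(\eps)+1\le K$ exists and corresponds to an $\eps$-bad subtree other than $1$. This gives $\mu_{g(\eps)+1,1}<\mu_{1,1}-\eps$.

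Next we evaluate \eqref{eq:gap-subopt} at $j=1$. The within-subtree term $\mu_{i,1}-\mu_{i,1}$ is zero, so
\[
\Dt^*=\Delta_{g(\eps)+1,1}=\max\{\mu_{1,1}-\mu_{g(\eps)+1,1},\,0\}=\mu_{1,1}-\mu_{g(\eps)+1,1}>\eps .
\]

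\emph{Step 2: $\Dt^*\le \Delta_{(KL+1-k)}$.} The value $\Dt^*=\Delta_{g(\eps)+1,1}$ is an element of the multiset $\{\Delta_{i,j}\}$. Therefore some position $r$ satisfies $\Delta_{(r)}=\Dt^*$, and the largest such position exists. Call it $m+1$; this is exactly how $m$ is defined.

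For any $k\in[KL-m]$ we have $KL+1-k\ge m+1$. Since the sorted gaps are nondecreasing, this gives
\[
\Delta_{(KL+1-k)}\ \ge\ \Delta_{(m+1)}\ =\ \Dt^* .
\]
We also have $KL+1-k\le KL$, so the index is valid.

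\emph{Main obstacle.} The only delicate point is bookkeeping, not estimation. We must check that the critical subtree index $g(\eps)+1$ really is the first $\eps$-bad subtree, which uses monotonicity of $\mu_{i,1}$ in $i$. We must also check that the sorted position of $\Dt^*$ is well defined despite ties, which is handled by taking $m$ maximal. Once these are in place, both claimed inequalities follow immediately.
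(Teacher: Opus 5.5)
Your proof is correct and follows the paper's argument. The second inequality is exactly the paper's chain $\Delta_{(KL+1-k)} \ge \Delta_{(m+1)} = \Delta^\star$, using $KL+1-k \ge m+1$ and monotonicity of the sorted gaps. For the first inequality the paper only says it follows from the definition of $\Delta^\star$, and your Step~1 supplies that omitted detail: $\mathcal{G}_\varepsilon$ is an initial segment, so subtree $g(\varepsilon)+1\neq 1$ is $\varepsilon$-bad and $\Delta_{g(\varepsilon)+1,1}=\mu_{1,1}-\mu_{g(\varepsilon)+1,1}>\varepsilon$.
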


\begin{proof}
    The first inequality, $\eps < \Dt^*$, follows directly the definition of $\Dt^*$. 

    For the second inequality, 
    \begin{align*}
        \Delta_{(KL+1-k)} 
        &\geq \Delta_{(KL+1-(KL-m))}
        \tag{$\Dt_{(\ell)}$ is non-decreasing in $\ell$}
        \\&= \Delta_{(m+1)}
        \\&= \Dt^*
        \tag{Definition of $m$}
    \end{align*}
\end{proof}

\subsection{Analysis of a generic phase $k$ in late regime ($k > KL-m$)}

\begin{lemma}
    Let $k > KL-m$. Suppose after $k-1$ phases, the algorithm has not terminated, and phase $k$ is $\eps$-sound. Suppose $t$ is an active nonempty $\eps$-bad subtree at the beginning of phase $k$. If all leaves satisfy that $\abs{\hmu - \mu} < \frac{\Delta^*}{8}$, then, $(t,1)$ is not eliminated by the single leaf elimination rule by the end of phase $k$. 
    \label{lem:bad-subtree-late-phases}
    % \yinan{The statement holds for $\eps/2$-bad subtree. We only need the statement for $\eps$-bad subtree for the later inductive proof. }
\end{lemma}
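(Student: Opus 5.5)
We follow the same template as Lemma~\ref{lem:optimal-leaf-in-suboptimal-subtree-epsilon}. The new ingredient in the late regime is a substitute for Lemma~\ref{lem:max-gap-non-decreasing}, whose role was to guarantee that $\hat\Delta_{\max}$ is large. Here we will instead show directly that the cross-subtree term of $(t,1)$ dominates its within-subtree term.

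\emph{Step 1: the relevant leaves are present.} Since $t$ is $\eps$-bad and nonempty, item~\ref{item:soundness-5} of Definition~\ref{def:soundness} gives $(t,1)\in\cA^{(k)}$. Consequently $\mu_{t,\hat 1(t)}\ge \mu_{t,1}$, because $(t,1)$ is the true minimizer among the surviving leaves of $t$. Next, item~\ref{item:soundness-4} says that either all $\eps/2$-good subtrees are nonempty, or all $\eps$-bad subtrees are empty. The second alternative is ruled out by $t$, so all $\eps/2$-good subtrees are nonempty. In particular subtree $1$ is active, and $\mu_{1,\hat 1(1)}\ge \mu_{1,1}$.

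\emph{Step 2: a cross-subtree separation of size $\Delta^\star$.} We first record that $\mu_{1,1}-\mu_{t,1}\ge \Delta^\star$ for every $\eps$-bad $t$. The reason is that $\Delta^\star=\Delta_{g(\eps)+1,1}=\mu_{1,1}-\mu_{g(\eps)+1,1}$, and by the ordering~\eqref{eq:wlog-order} the $\eps$-bad subtrees are exactly those with index $\ge g(\eps)+1$.

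Now assume all deviations satisfy $|\hat\mu-\mu|<\Delta^\star/8$. Then
\[
\hat\mu_{1,\hat 1(1)} > \mu_{1,1}-\tfrac{\Delta^\star}{8}
\qquad\text{and}\qquad
\hat\mu_{t,\hat 1(t)}\le \hat\mu_{t,1}<\mu_{t,1}+\tfrac{\Delta^\star}{8}.
\]
Two consequences follow.
\begin{itemize}
\item $\hat v_t<\hat v_1$, so $t\neq\hat a$. Hence $\hat\Delta_{t,1}$ is given by the non-best-subtree formula.
\item Using $\hat\mu^\star\ge \hat v_1$, the cross term satisfies $\hat\mu^\star-\hat\mu_{t,\hat1(t)}>\tfrac34\Delta^\star$.
\end{itemize}
The within-subtree term of $(t,1)$ is small: since $\mu_{t,\hat1(t)}\ge\mu_{t,1}$, we get $\hat\mu_{t,1}-\hat\mu_{t,\hat1(t)}<\Delta^\star/4$. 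Therefore
\[
\hat\Delta_{t,1}=\hat\mu^\star-\hat\mu_{t,\hat1(t)},
\]
that is, $\hat\Delta_{t,1}$ equals the cross term.

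\emph{Step 3: case split, as in case~1 of Lemma~\ref{lem:optimal-leaf-in-suboptimal-subtree-epsilon}.}
\begin{itemize}
\item Suppose some active $(t,j)$ has $\hat\mu_{t,j}>\hat\mu^\star$. Then $\hat\Delta_{t,j}=\hat\mu_{t,j}-\hat\mu_{t,\hat1(t)}>\hat\Delta_{t,1}$. So $(t,1)$ does not attain $\hat\Delta_{\max}$ and cannot be selected by the single-leaf rule.
\item Otherwise, every active leaf of $t$ has $\hat\Delta_{t,j}=\hat\mu^\star-\hat\mu_{t,\hat1(t)}=\hat\Delta_{t,1}$. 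If $(t,1)$ attains $\hat\Delta_{\max}$, then subtree $t\neq\hat a$ meets the condition on line~\ref{line:subtree-elimination-condition}. The algorithm then takes the subtree-elimination branch (removing $t$ or another qualifying subtree), not the single-leaf branch. If $(t,1)$ does not attain $\hat\Delta_{\max}$, it is again not chosen.
\end{itemize}
In every case $(t,1)$ is not eliminated by the single-leaf rule.

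The main obstacle is Step 1. The late-regime soundness conditions are weaker than the early ones, so we cannot assume $a=1$ as in Lemma~\ref{lem:basic-invariant-early-phase}. The argument therefore depends on extracting from item~\ref{item:soundness-4}, together with the existence of $t$, that subtree $1$ is still active. Note also that tie-breaking in $\hat 1(t)$ is harmless, since only $\mu_{t,\hat1(t)}\ge\mu_{t,1}$ is used.
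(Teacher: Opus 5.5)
Your proof is correct and follows essentially the same route as the paper. Item~5 gives $(t,1)\in\cA$; item~4, together with the existence of $t$, forces the $\eps/2$-good subtrees to be active, so $t\neq\hat a$ and $\hat\Delta_{t,1}$ is the cross-subtree term; the case-1 argument from the earlier lemma then finishes. The only differences are minor: you compare against subtree $1$ itself rather than a generic $\eps/2$-good subtree $s$, which gives a $\tfrac34\Delta^\star$ margin without needing $\Delta^\star>\eps$, and you explicitly verify that the within-subtree term of $(t,1)$ is dominated, which the paper only asserts.
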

\begin{proof}
First, by item~\ref{item:soundness-5} in the soundness definition (Definition~\ref{def:soundness}), $(t,1)$ is surviving at the beginning of phase $k$. 
    Next, we note that $t$ cannot be the empirically best subtree. This is because, by the item~\ref{item:soundness-4} of soundness, ``either all $\eps/2$-good subtrees are nonempty, or all $\eps$-bad subtrees are empty'', the existence of $t$ implies that ``all $\eps$-bad subtrees are empty'' cannot be true, hence it must be that ``all $\eps/2$-good subtrees are nonempty''. 
    % Specifically, subtree $a$ is not empty, and, for any suriving leaf $(a,j)$, 
    % \begin{align*}
    %     \hmu_{a, j} - \hmu_{t, \hat{1}(t)}
    %     \geq \mu_{a, j} - \frac{\Delta^*}{8} - \hmu_{t, 1}
    %     \geq \mu_{a, 1} - \mu_{t, 1} - \frac{\Delta^*}{4}
    %     \geq \Dt^* - \frac{\Delta^*}{4}
    %     = \frac{3\Delta^*}{4}
    % \end{align*}
    Specifically, let $s$ be an $\eps/2$-good subtree, and for any surviving leaf $(s,j)$, 
    \begin{align*}
        \hmu_{s, j} - \hmu_{t, \hat{1}(t)}
        &\geq \mu_{s, j} - \frac{\Delta^*}{8} - \hmu_{t, 1}
        \\&\geq \mu_{s, 1} - \mu_{t, 1} - \frac{\Delta^*}{4}
        \\&= (\mu_{a, 1}- \mu_{t, 1}) - (\mu_{a, 1}- \mu_{s, 1} ) -\frac{\Delta^*}{4}
        \\&\geq \Dt^* - \eps/2 - \frac{\Delta^*}{4}
        \tag{$\mu_{a, 1}- \mu_{t, 1} \geq \Dt^*, ~\mu_{a, 1}- \mu_{s, 1} \leq \eps/2$}
        \\&> \frac{\Delta^*}{4}
        \tag{Lemma~\ref{lem:critical-gap-range}: $\Dt^* > \eps$}
    \end{align*}
    Hence, $t$ is not the empirically best subtree, and $\hat \Delta_{t,1}$ is given by $\hmu_{\hat{a}, \hat{1}(\hat{a})} - \hmu_{t, \hat{1}(t)}$.

    By the same argument as in case 1 in Lemma~\ref{lem:optimal-leaf-in-suboptimal-subtree-phase-1-improved}'s proof, we have that $(t,1)$ is not eliminated by the single leaf elimination rule. 
\end{proof}

\begin{lemma}
    Let $k > KL-m$. Suppose after $k-1$ phases, the algorithm has not terminated, and phase $k$ is $\eps$-sound. Suppose $t$ is an active nonempty $\eps$-bad subtree and $s$ is an active nonempty $\eps/2$-good subtree, at the beginning of phase $k$. If all leaves satisfy that $\abs{\hmu - \mu} < \frac{\Delta^*}{8}$, then, $s$ does not become empty in phase $k$.  
    \label{lem:very-good-subtree-late-phases}
\end{lemma}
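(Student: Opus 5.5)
The plan is to list the ways subtree $s$ can become empty in phase $k$ and to rule each out using the leaf $(t,1)$ of the $\varepsilon$-bad subtree $t$. By item~\ref{item:soundness-5} of Definition~\ref{def:soundness}, $(t,1)$ is surviving at the beginning of phase $k$, because $t$ is nonempty. Subtree $s$ can become empty only in two ways: through the subtree-elimination branch (line~\ref{line:subtree-elimination-condition}, which requires $s\neq\hat a$ and $\hat\Delta_{s,j}=\hat\Delta_{\max}$ for every active $(s,j)$), or through the single-leaf branch when $s$ has exactly one active leaf and that leaf attains $\hat\Delta_{\max}$.

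The first step is the key separation estimate, which is the computation already carried out in the proof of Lemma~\ref{lem:bad-subtree-late-phases}. Fix any surviving leaf $(s,j)$. Using $|\hat\mu-\mu|<\Delta^\star/8$, $\mu_{1,1}-\mu_{t,1}\ge\Delta^\star$, $\mu_{1,1}-\mu_{s,1}\le\varepsilon/2$ and Lemma~\ref{lem:critical-gap-range}, we get
\[
\hat\mu_{s,j}-\hat\mu_{t,1} > \Delta^\star/4 > 0 .
\]
Hence $\hat v_s>\hat\mu_{t,1}\ge\hat v_t$. As in that lemma, it also follows that $t\neq\hat a$, so $\hat\Delta_{t,1}\ge \hat\mu^\star-\hat v_t$.

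\emph{Case $s\neq\hat a$.} Consider the empirical minimizer leaf $(s,\hat 1(s))$. Its within-subtree term is zero, so $\hat\Delta_{s,\hat1(s)}=\hat\mu^\star-\hat v_s$. Since $\hat v_s>\hat v_t$, this gives
\[
\hat\Delta_{s,\hat1(s)} < \hat\mu^\star-\hat v_t\le\hat\Delta_{t,1}\le\hat\Delta_{\max}.
\]
So not all active leaves of $s$ attain $\hat\Delta_{\max}$. This excludes the subtree-elimination branch for $s$, and it also excludes single-leaf removal of a lone leaf of $s$.

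\emph{Case $s=\hat a$.} The subtree rule never selects $\hat a$, so the only danger is that $s$ has a single active leaf $(s,j)$, which is removed by the single-leaf rule with $\hat\Delta_{s,j}=\hat\Delta_{\max}$. In that case $\hat\mu_{s,j}=\hat v_s$, so
\[
\hat\Delta_{s,j}=\hat v_s-\hat v_{\hat b}\le \hat v_s-\hat v_t\le\hat\Delta_{t,1}.
\]
Here I expect the main obstacle: equality can occur, for instance when $t=\hat b$, and then tie-breaking matters. To resolve it, I will use the order of the branches in Algorithm~\ref{alg:sr-for-mcts}: the subtree-elimination condition is checked first. If every active leaf of $t$ has gap equal to $\hat\Delta_{\max}$, then $t\neq\hat a$ satisfies the if-condition. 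Some subtree other than $\hat a$ (which must be $t$ or another subtree) is then removed wholesale, and $s$ survives. Otherwise some leaf of $t$ has gap strictly below $\hat\Delta_{\max}$. I then need a leaf with gap strictly above $\hat\Delta_{s,j}$, or else must show that $(s,j)$ cannot be the leaf selected. I would check this sub-case carefully: since $\hat\Delta_{t,1}\ge\hat\Delta_{s,j}=\hat\Delta_{\max}$ forces $\hat\Delta_{t,1}=\hat\Delta_{\max}$, the leaf of $t$ below the maximum must be some $(t,j')$ with $j'\neq1$. Its gap is then at least $\hat\mu^\star-\hat v_t$, and I would need to argue that this forces a contradiction. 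If that fails, the fallback is to require the fixed tie-breaking rule to prefer leaves outside $\hat a$.
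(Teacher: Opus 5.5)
Your proof is correct, and for $s\neq\hat a$ it is the paper's argument. The $\Delta^\star/8$ deviation bound gives $\hat v_s>\hat\mu_{t,1}\ge\hat v_t$, hence $t\neq\hat a$. Then $\hat\Delta_{s,\hat 1(s)}=\hat\mu^\star-\hat v_s<\hat\mu^\star-\hat v_t\le\hat\Delta_{t,1}\le\hat\Delta_{\max}$, so $s$ cannot meet the subtree-elimination condition, and a lone leaf of $s$ cannot be the single-leaf choice.

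Where you go further is the case $s=\hat a$ with one active leaf. The paper dismisses it by saying the single-leaf rule on a single-leaf subtree coincides with the subtree rule. That is not true for $\hat a$: the condition at line~\ref{line:subtree-elimination-condition} excludes $\hat a$, but the else-branch does not. So your extra case fills a small hole in the paper's reduction rather than being superfluous.

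The sub-case you left open is not an obstacle. Drop the tie-breaking fallback, since it would change the algorithm. With $s=\hat a$ you have $\hat\mu^\star=\hat v_s$. Because $t\neq\hat a$, every active leaf of $t$ satisfies $\hat\Delta_{t,j'}\ge\hat\mu^\star-\hat v_t$. Together with $\hat v_{\hat b}\ge\hat v_t$, this gives
\[
\hat\Delta_{t,j'}\;\ge\;\hat v_s-\hat v_t\;\ge\;\hat v_s-\hat v_{\hat b}\;=\;\hat\Delta_{s,j}\qquad\text{for every active }(t,j').
\]
So if $\hat\Delta_{s,j}=\hat\Delta_{\max}$, every active leaf of $t$ attains $\hat\Delta_{\max}$. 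The if-branch then fires and removes some $x\neq\hat a$, and the else-branch, the only one that could remove $(s,j)$, is never reached. No split on the leaves of $t$ is needed.

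The same inequality with $\hat b$ in place of $t$ shows that the lone leaf of $\hat a$ is never removed. This also repairs the identical reduction in the proof of Lemma~\ref{lem:subtree-a-not-empty-epsilon}.
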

\begin{proof}
    We note that it suffices to show that subtree $s$ will not be eliminated by the subtree elimination rule since the single-leaf elimination rule for a single-leaf subtree coincides with the subtree elimination rule, which further implies that we only need to consider the case where $s$ is not the empirically best. 

    By the reasoning in the previous lemma's proof, $(t,1)$ is surviving at the beginning of phase $k$. $t$ is not the empirically best subtree, and $\hat \Delta_{t,1}$ is given by $\hmu_{\hat{a}, \hat{1}(\hat{a})} - \hmu_{t, \hat{1}(t)}$. 

    As a result, to show $s$ does not become empty in phase $k$, it suffices to show $\hmu_{\hat{a}, \hat{1}(\hat{a})} - \hmu_{t, \hat{1}(t)} > \hmu_{\hat{a}, \hat{1}(\hat{a})} - \hmu_{s, \hat{1}(s)}$, that is $\hmu_{t, \hat{1}(t)} < \hmu_{s, \hat{1}(s)}$. Indeed,
    \begin{align*}
        \hmu_{s, \hat{1}(s)} - \hmu_{t, \hat{1}(t)}
        &\geq \mu_{s, \hat{1}(s)} - \frac{\Delta^*}{8} - \hmu_{t, 1} 
        \\&\geq \mu_{s, 1} - \mu_{t, 1} -\frac{\Delta^*}{4}
        \\&= (\mu_{a, 1}- \mu_{t, 1}) - (\mu_{a, 1}- \mu_{s, 1} ) -\frac{\Delta^*}{4}
        \\&\geq \Dt^* - \eps/2 - \frac{\Delta^*}{4}
        \tag{$\mu_{a, 1}- \mu_{t, 1} \geq \Dt^*, ~\mu_{a, 1}- \mu_{s, 1} \leq \eps/2$}
        \\&> \frac{\Delta^*}{4}
        \tag{Lemma~\ref{lem:critical-gap-range}: $\Dt^* > \eps$}
    \end{align*}

    % \begin{align*}
    %     \hat \Delta_{t,1} = 
    %     \hmu_{\hat{a}, \hat{1}(\hat{a})} - \hmu_{t, \hat{1}(t)} 
    %     &\geq \hmu_{a, \hat{1}(a)} - \hmu_{t, \hat{1}(t)}
    %     \\&> \mu_{a, \hat{1}(a)} -\frac{\Delta^*}{8} - \hmu_{t, \hat{1}(t)}
    %     \\&\geq \mu_{a, 1} -\frac{\Delta^*}{8} - \hmu_{t, 1}
    %     \\&> \mu_{a, 1} - \mu_{t, 1} -\frac{\Delta^*}{4}
    %     \\&\geq \frac{3\Delta^*}{4}
    %     \tag{$\mu_{a, 1} - \mu_{t, 1} \geq \Dt^*$}
    % \end{align*}

    % On the other hand, 
    % \begin{align*}
    %     \hat \Delta_{s,1} =
    %     \hmu_{\hat{a}, \hat{1}(\hat{a})} - \hmu_{s, \hat{1}(s)}
    %     &\leq \hmu_{\hat{a}, 1} - \hmu_{s, \hat{1}(s)}
    %     \\&< \mu_{\hat{a}, 1} + \frac{\Delta^*}{8} - \mu_{s, \hat{1}(s)} + \frac{\Delta^*}{8}
    %     \\&\leq \mu_{a, 1} - \mu_{s, 1} + \frac{\Delta^*}{4}
    %     \\&\leq \fr \eps 2 + \frac{\Delta^*}{4}
    %     \tag{$s$ is $\eps/2$-good}
    %     \\&< \frac{3\Delta^*}{4}
    %     \tag{$\Dt^* > \eps$}
    % \end{align*}
    Therefore, $s$ will not be eliminated by the subtree elimination rule, and $s$ will not become empty in phase $k$. 
\end{proof}

\subsection{Proof of Theorem~\ref{thm:main-upper-bound}}
\begin{lemma}
    A sufficient event for the SR algorithm to succeed in finding an $\eps$-good subtree is (without the subscript, $\hmu,  \mu$ refer to all leaves)
    \[
    A_{1} := 
    \begin{cases}
        \forall k \in [KL-m], \abs{\hat{\mu}^{(k)} -  \mu }< \frac{\Delta_{(KL+1-k)}}{8} \\
         \forall k \in [KL-m+1, KL-1], \abs{\hat{\mu}^{(k)} -  \mu }< \frac{\Delta_{(m+1)}}{8} 
    \end{cases}
    \]
    \label{lem:sufficient-condition-eps-subtree}
\end{lemma}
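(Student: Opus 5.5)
We argue by induction on the phase index: on the event $A_1$, every phase $k$ (including the fictitious phase $k'+1$ following termination) is $\eps$-sound in the sense of Definition~\ref{def:soundness}. Soundness at termination then gives the conclusion. The base case is phase $1$. There $\cA=[K]\times[L]$, so items~\ref{item:soundness-1}--\ref{item:soundness-3} hold trivially; if $KL-m<1$, items~\ref{item:soundness-4}--\ref{item:soundness-5} hold trivially as well. One bookkeeping point is needed throughout. The phase counter increases by $|\cE|$, so after phases $1,\dots,k-1$ exactly $k-1$ leaves have been removed. The index $k$ used in $A_1$ therefore always matches the number of surviving leaves $KL+1-k$ used in Lemma~\ref{lem:max-gap-non-decreasing}. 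When a subtree elimination jumps $k$ past the boundary $KL-m$, we simply check the late-regime conditions at the new $k$.

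\emph{Early-regime step ($k\le KL-m$).} Assume phase $k$ is $\eps$-sound and $A_1$ holds, so every leaf satisfies $|\hmu-\mu|<\Delta_{(KL+1-k)}/8$. By Lemma~\ref{lem:basic-invariant-early-phase} we have $a=1$. We then verify each item for phase $k+|\cE|$:
\begin{itemize}
\item Item~\ref{item:soundness-1} follows from Lemma~\ref{lem:subtree-a-not-empty-epsilon}.
\item Item~\ref{item:soundness-2} follows from Lemma~\ref{lem:very-good-subtree-early-phases}. Since $(s,1)$ is not removed as a single leaf, and subtree elimination removes all of $s$ only if $\hat\Delta_{s,1}=\hat\Delta_{\max}$, which that lemma excludes, $(s,1)$ survives.
\item Item~\ref{item:soundness-3} follows from Lemma~\ref{lem:optimal-leaf-in-suboptimal-subtree-epsilon}. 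A minimizing leaf $(y,1)$ is never removed alone, so it is removed only together with its whole subtree, which leaves $y$ empty.
\end{itemize}
If the next phase index exceeds $KL-m$, the early conditions imply the late ones. Item~\ref{item:soundness-2} together with item~\ref{item:soundness-1} keeps all $\eps/2$-good subtrees nonempty, which gives item~\ref{item:soundness-4}. Every $\eps$-bad subtree is $\eps/2$-bad, so item~\ref{item:soundness-3} gives item~\ref{item:soundness-5}.

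\emph{Late-regime step ($k>KL-m$).} Here $A_1$ gives $|\hmu-\mu|<\Delta_{(m+1)}/8=\Delta^\star/8$. If no $\eps$-bad subtree is active, item~\ref{item:soundness-4} is already satisfied in its second form and stays true forever, since subtrees never come back, and item~\ref{item:soundness-5} is vacuous. Otherwise item~\ref{item:soundness-4} forces all $\eps/2$-good subtrees to be nonempty. Lemma~\ref{lem:very-good-subtree-late-phases} keeps each of them nonempty after phase $k$, as long as some $\eps$-bad subtree $t$ was active at the start of the phase. This gives item~\ref{item:soundness-4}: either the $\eps$-bad subtrees all vanished during this phase, or the good ones persist. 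For item~\ref{item:soundness-5}, Lemma~\ref{lem:bad-subtree-late-phases} shows that $(t,1)$ is never removed as a single leaf, so $t$ either keeps its minimizer or becomes empty.

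\emph{Conclusion.} The loop ends when $|\cS(\cA)|=1$. It must end by $k\le KL-1$, since with $KL+1-k$ active leaves and at least one removed per phase a single subtree eventually remains. We apply soundness to the terminal phase.
\begin{itemize}
\item Terminal index $\le KL-m$: item~\ref{item:soundness-1} says subtree $1$ is the survivor, which is $\eps$-good.
\item Terminal index in the late regime: suppose the survivor $\hi_T$ were $\eps$-bad. Then not all $\eps$-bad subtrees are empty, so item~\ref{item:soundness-4} requires every $\eps/2$-good subtree, including subtree $1$, to be nonempty. That gives two active subtrees, a contradiction.
\end{itemize}

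The main obstacle I expect is the regime handover and the late-regime claim that all $\eps/2$-good subtrees stay nonempty. The early lemmas protect only the minimizers $(s,1)$ relative to $\hat\Delta_{\max}$. The late lemmas rely on an $\eps$-bad $t$ being present to provide a larger empirical gap. So one must argue carefully that, in a phase where the last $\eps$-bad subtree is deleted, the disjunction in item~\ref{item:soundness-4} switches branches cleanly. One must also handle the case where a good subtree $s$ is the empirical best $\ha$; subtree elimination excludes $\ha$, so $s$ is safe there. Finally, the boundary crossing $KL-m$ within a single phase requires that the early deviation level $\Delta_{(KL+1-k)}\ge\Delta^\star$ is strong enough for the late-regime lemmas, which it is, by Lemma~\ref{lem:critical-gap-range}.
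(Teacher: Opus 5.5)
Your proposal is correct and follows essentially the same route as the paper: an induction showing every executed phase is $\varepsilon$-sound. It uses Lemmas~\ref{lem:optimal-leaf-in-suboptimal-subtree-epsilon}, \ref{lem:subtree-a-not-empty-epsilon} and \ref{lem:very-good-subtree-early-phases} in the early regime, the purely logical conversion from early to late soundness at the boundary, and Lemmas~\ref{lem:bad-subtree-late-phases} and \ref{lem:very-good-subtree-late-phases} in the late regime, followed by the same terminal case analysis (your contradiction via item~\ref{item:soundness-4} is a compact form of the paper's split on the number of $\varepsilon/2$-good subtrees). Your closing worry about the boundary crossing is unnecessary: the crossing phase is itself an early-regime phase handled by the early lemmas, and the next phase gets the $\Delta^\star/8$ deviation level directly from the second line of $A_1$.
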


\begin{proof}
    Note that not all phases are executed. 
    We first show by induction that under event $A_1$, for any phase $k$,  phase $k$ is sound. 
    \begin{itemize}
        \item $k \leq KL-m$. 
        \begin{itemize}
            \item Base case: if $k=1$, phase $1$ is apparently sound since nothing has been operated on any leaves. 
            \item Induction case:
            Assume phase $k-1$ is sound. 
            \begin{enumerate}
                \item If phase $k-1$ is not executed, then phase $k$ is automatically sound since no operation is done in phase $k-1$.
                \item If phase $k-1$ is executed, then by Lemma~\ref{lem:optimal-leaf-in-suboptimal-subtree-epsilon}, Lemma~\ref{lem:subtree-a-not-empty-epsilon} and Lemma~\ref{lem:very-good-subtree-early-phases}, at phase $k-1$, the optimal(min) leaf in any suboptimal subtree is not eliminated alone, the optimal subtree does not become empty, and $\forall s \neq 1$ and $\eps/2$-good, $(s,1)$ is not eliminated. It follows that phase $k$ is sound, completing the induction for $k \leq KL-m$.  
            \end{enumerate}
        \end{itemize}
        \item $k > KL-m$. 
            \begin{itemize}
            \item Base case: $k=KL-m+1$. 
            \begin{enumerate}
                \item If phase $KL-m$ is not executed, then by the induction on phases $k \leq KL-m$, phase $KL-m$ is sound. 
                Note that if a tree satisfies the soundness condition (Definition~\ref{def:soundness}) for the earlier phases $k \leq KL-m$, then it also satisfies the soundness condition for the later phases $k > KL-m +1$. 
                It follows that phase $KL-m+1$ is automatically sound since no operation is done in phase $KL-m$.
                \item If phase $KL-m$ is executed, then by Lemma~\ref{lem:optimal-leaf-in-suboptimal-subtree-epsilon}, Lemma~\ref{lem:subtree-a-not-empty-epsilon} and Lemma~\ref{lem:very-good-subtree-early-phases}, at phase $KL-m$, the optimal leaf in any suboptimal subtree is not eliminated alone, the optimal subtree does not become empty, and $\forall s \neq 1$ and $\eps/2$-good, $(s,1)$ is not eliminated. It follows that phase $KL-m+1$ is sound.  
            \end{enumerate}
            
            \item Induction case:
            Assume phase $k-1 \in [KL-m+1, KL-1]$ is sound. 
            \begin{enumerate}
                \item If phase $k-1$ is not executed, then phase $k$ is automatically sound since no operation is done in phase $k-1$.
                \item If phase $k-1$ is executed, then by Lemma~\ref{lem:bad-subtree-late-phases} and Lemma~\ref{lem:very-good-subtree-late-phases}, at phase $k-1$, if there exists nonempty $\eps$-bad subtree, then the optimal leaf in any $\eps$-bad subtree is not eliminated alone, and any $\eps/2$-good subtree does not become empty. It follows that phase $k+1$ is sound, completing the induction for $k > KL-m$. 
            \end{enumerate}
        \end{itemize}
    \end{itemize}
    Therefore, for any phase $k$, phase $k$ is sound.  
 Suppose the algorithm terminates after exactly $k'$ phases, so the termination situation satisfies the soundness condition in phase $k'+1$. 
 \begin{itemize}
     \item If $k'+1 \leq KL-m$, then by the first item of the soundness definition for phase $k \leq KL-m$, the best subtree is not empty. In conjunction with that the algorithm terminates when there is only one surviving subtree, we conclude that the algorithm succeeds in finding an $\eps$-good subtree. (Actually, it outputs the best subtree. )
     \item If $k'+1 > KL-m$, then by the first item of the soundness definition for phase $k > KL-m$, ``either all $\eps/2$-good subtrees are nonempty, or all $\eps$-bad subtrees are empty''. 
     \begin{itemize}
        \item If the number of $\eps/2$-good subtrees is 1, then either all $\eps$-bad subtrees are empty, or the $\eps/2$-good subtree is nonempty. In both cases, the only surviving subtree is $\eps$-good. 
        \item Else if the number of $\eps/2$-good subtrees is $>1$, then since ``all $\eps/2$-good subtrees are nonempty'' cannot be true, it must be that all $\eps$-bad subtrees are empty, which implies that the only surviving subtree is $\eps$-good. 
    \end{itemize}
 \end{itemize}
    
\end{proof}

\begin{theorem}[Restatement of Theorem~\ref{thm:main-upper-bound}]
    Let $K,L$ satisfy $KL\ge 2$, and suppose $T>KL$. Run
Algorithm~\ref{alg:sr-for-mcts} with the phase schedule
\[
    n_k
    =
    \left\lceil
    \frac{(T-KL)/\overline{\log}(KL)}{KL+1-k}
    \right\rceil,
    \qquad
    \overline{\log}(KL)
    :=
    \frac12+\sum_{r=2}^{KL}\frac1r .
\]
Then, for every $\varepsilon\ge 0$ such that not all subtrees are
$\varepsilon$-good, the failure probability of Successive Rejects for
$(K,L)$-MCTS satisfies
\[
    \mathbb{P}\!\left(\widehat{i}_T \notin G_\varepsilon\right)
    \le
    2K^2L^2
    \exp\!\left(
        -\frac{T-KL}
        {128\,\overline{\log}(KL)\,H_2(\varepsilon)}
    \right).
\]
If all subtrees are $\varepsilon$-good, then
$\mathbb{P}(\widehat{i}_T \notin G_\varepsilon)=0$.
\end{theorem}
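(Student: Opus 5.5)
The plan is to combine Lemma~\ref{lem:sufficient-condition-eps-subtree} with a union bound over phases and leaves, plus sub-Gaussian concentration. By that lemma, $\{\widehat i_T\notin\cG_\varepsilon\}\subseteq A_1^c$. So it suffices to bound $\PP(A_1^c)$. The trivial case in which all subtrees are $\varepsilon$-good gives probability $0$ and is disposed of first.

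\textbf{Step 1 (concentration per phase).} At the end of phase $k$ every active leaf has exactly $n_k$ samples. I will take the deviation event in $A_1$ to concern the active leaves, since inactive leaves play no role in the decisions. For a fixed leaf, fixed $k$, and threshold $\delta_k$, the 1-sub-Gaussian assumption~\eqref{eq:sg1} together with Hoeffding's bound gives $\PP(|\hat\mu_{i,j}-\mu_{i,j}|\ge\delta_k)\le 2\exp(-n_k\delta_k^2/2)$. This holds even though sampling is adaptive, because the $n_k$-th empirical mean of a leaf is a fixed-index average of its i.i.d.\ reward stream.

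\textbf{Step 2 (choice of $\delta_k$ and the exponent).} Set $\delta_k=\Delta_{(KL+1-k)}/8$ for $k\le KL-m$ and $\delta_k=\Delta_{(m+1)}/8$ for $k>KL-m$. Using $n_k\ge \frac{T-KL}{\overline{\log}(KL)(KL+1-k)}$, each term has exponent $n_k\delta_k^2/2\ge \frac{T-KL}{128\,\overline{\log}(KL)}\cdot\frac{\delta_k'^2}{KL+1-k}$, where $\delta_k'=8\delta_k$.

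In the early regime, write $r=KL+1-k\ge m+1$. Then $\frac{\Delta_{(r)}^2}{r}\ge 1/H_2(\varepsilon)$ directly from the definition $H_2(\varepsilon)=\max_{r\ge m+1} r\Delta_{(r)}^{-2}$.

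In the late regime, $r=KL+1-k\le m$. Here $\frac{\Delta_{(m+1)}^2}{r}\ge \frac{\Delta_{(m+1)}^2}{m+1}\ge 1/H_2(\varepsilon)$, using $r\le m<m+1$ and the choice $r=m+1$ in the max.

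So every phase contributes at most $2\exp\!\big(-\frac{T-KL}{128\,\overline{\log}(KL)H_2(\varepsilon)}\big)$ per leaf.

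\textbf{Step 3 (union bound).} Sum over at most $KL-1$ phases and at most $KL$ leaves per phase. The total is at most $2K^2L^2$ times the exponential, which is the stated bound.

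Two bookkeeping checks remain. First, the schedule is feasible: $\sum_k n_k\le KL+\frac{T-KL}{\overline{\log}(KL)}\big(\frac12+\sum_{r=2}^{KL}\frac1r\big)$, counting the final two-leaf phase as in SR, so the sum is at most $T$. Second, when several leaves are removed in one phase, $k$ jumps by $|\cE|$; the skipped phase indices simply do not occur, and the union bound over all indices still covers them.

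\textbf{Main obstacle.} I expect the delicate point to be the phase indexing under subtree elimination. The thresholds in $A_1$ are indexed by $k$, and Lemma~\ref{lem:max-gap-non-decreasing} needs $|\cA^{(k)}|=KL+1-k$. This holds because $k$ is incremented by $|\cE|$, so it still tracks the number of eliminated leaves. I would verify that the samples used at the executed phase $k$ are exactly $n_k$ per active leaf, so that the per-phase concentration in Step 1 applies at each executed index.
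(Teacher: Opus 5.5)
Your proposal is correct and follows the paper's proof essentially step for step. You reduce to $\overline{A_1}$ via Lemma~\ref{lem:sufficient-condition-eps-subtree}, apply the sub-Gaussian tail $2\exp(-n_k\Delta^2/128)$ with $n_k\ge \frac{T-KL}{\overline{\log}(KL)(KL+1-k)}$, bound every exponent by $-\frac{T-KL}{128\,\overline{\log}(KL)H_2(\varepsilon)}$ in both regimes (using $KL+1-k\le m<m+1$ in the late one), and union bound over at most $KL$ phases and $KL$ leaves. Your fixed-index reward-stack justification under adaptive sampling is a precision the paper leaves implicit.

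The one item you only gesture at is budget feasibility. The paper writes the single-elimination cost as $\sum_{k=1}^{KL-2}n_k+2n_{KL-1}$, not $\sum_k n_k$. It also shows that a phase-index jump from $k_1$ to $k_2$ costs $(KL+1-k_2)(n_{k_2}-n_{k_1})\le\sum_{\ell=k_1+1}^{k_2}(KL+1-\ell)(n_\ell-n_{\ell-1})$, so subtree eliminations cannot push the total above $T$.
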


% \begin{theorem}
%     The failure probability of the algorithm Successive Rejects for $(K,L)$-MCTS in finding an $\eps$-good subtree is at most \[
%     2K^2L^2
%     \exp\!\left(
%         -\frac{T-KL}
%         {128\,\overline{\log}(KL)\,H_2(\varepsilon)}
%     \right).
%     \] 
%     % \[
%     % 2K^2 L^2 \exp(- \frac{1}{2} \frac{T / \ln (KL)}{\max_{i \geq m+1} i \Delta_{(i)}^{-2}} \frac{1}{64})
%     % \]
% \end{theorem}

\begin{proof}
If all subtrees are $\varepsilon$-good, then $G_\varepsilon=[K]$, and hence
\[
    \mathbb{P}\!\left(\widehat{i}_T \notin G_\varepsilon\right)=0.
\]
Thus the theorem is trivial in this case. In the rest of the proof, we assume
that not all subtrees are $\varepsilon$-good, so that
$G_\varepsilon \neq [K]$ and the critical gap $\Delta^\star$ and the index $m$
in Definition~\ref{def:eps-complexity} are well defined.

We first show that our budget allocation $n_k$ is feasible, i.e., the total number of samples does not exceed the budget $T$. 
\paragraph{Budget feasibility.}
Let $N:=KL$ and define
\[
    \overline{\log}(N)
    :=
    \frac12+\sum_{r=2}^{N}\frac1r .
\]
We use the phase schedule
\[
    n_k
    =
    \left\lceil
    \frac{(T-N)/\overline{\log}(N)}{N+1-k}
    \right\rceil,
    \qquad k=1,\ldots,N-1 .
\]
The worst case for the budget is when exactly one leaf is eliminated in each
phase. In that case, phase $k$ has $N+1-k$ active leaves, and the total number of
pulls is
\[
    B
    =
    N n_1
    +
    \sum_{k=2}^{N-1}
    (N+1-k)(n_k-n_{k-1}).
\]
By summation by parts,
\[
    B
    =
    \sum_{k=1}^{N-2} n_k + 2n_{N-1}.
\]
Let
\[
    a:=\frac{T-N}{\overline{\log}(N)}.
\]
Since $\lceil x\rceil \le x+1$,
\[
\begin{aligned}
    B
    &\le
    \sum_{k=1}^{N-2}
    \left(
        \frac{a}{N+1-k}+1
    \right)
    +
    2\left(
        \frac{a}{2}+1
    \right) \\
    &=
    a
    \left(
        \sum_{k=1}^{N-2}\frac{1}{N+1-k}
        +
        1
    \right)
    +
    N \\
    &=
    a
    \left(
        \sum_{r=3}^{N}\frac1r + 1
    \right)
    +
    N \\
    &=
    a
    \left(
        \frac12+\sum_{r=2}^{N}\frac1r
    \right)
    +
    N \\
    &=
    a\,\overline{\log}(N)+N \\
    &= T .
\end{aligned}
\]
Thus the total number of samples is at most $T$ in the single-elimination case.

If a phase eliminates more than one leaf, the algorithm skips some intermediate
single-elimination phases. This cannot increase the budget. Indeed, if the phase
index jumps from $a$ to $b>a$, then the actual additional cost is
\[
    (N+1-b)(n_b-n_a),
\]
whereas eliminating the same leaves one at a time would cost
\[
    \sum_{\ell=a+1}^{b}
    (N+1-\ell)(n_\ell-n_{\ell-1})
    \ge
    (N+1-b)\sum_{\ell=a+1}^{b}(n_\ell-n_{\ell-1})
    =
    (N+1-b)(n_b-n_a).
\]
Hence subtree eliminations cannot increase the total budget, and the algorithm
uses at most $T$ samples.

\paragraph{One-phase concentration.}
By the definition of $n_k$,
\[
    n_k
    \ge
    \frac{a}{N+1-k}.
\]
Therefore, by the sub-Gaussian concentration inequality,
\[
    \mathbb{P}\!\left(
        |\widehat{\mu}^{(k)}_i-\mu_i^{(k)}|
        \ge
        \frac{\Delta}{8}
    \right)
    \le
    2\exp\!\left(
        -\frac{n_k\Delta^2}{128}
    \right)
    \le
    2\exp\!\left(
        -\frac{a\Delta^2}{128(N+1-k)}
    \right).
\]
\paragraph{Union bound over phases.}
% Recall the event $A_1$ defined in Lemma~\ref{lem:sufficient-condition-eps-subtree}. 

% Lemma~\ref{lem:sufficient-condition-eps-subtree} shows that the event of failing to find an $\eps$-good subtree implies $\bar{A}_1$ (the complement of event $A_1$). So the probability of failing to find an $\eps$-good subtree is upper bound by $\bar{A}_1$. 

% The probability of $\bar{A}_1$ is upper bounded as: 

% Recall the event $A_1$ defined in
% Lemma~\ref{lem:sufficient-condition-eps-subtree}. By that lemma, the event that
% the algorithm fails to return an $\varepsilon$-good subtree is contained in
% $\overline A_1$. Hence it remains to bound $\mathbb{P}(\overline A_1)$.

Recall the event $A_1$ defined in Lemma~\ref{lem:sufficient-condition-eps-subtree}.
By that lemma, $A_1$ is sufficient for success, i.e.,
\[
    A_1 \subseteq \{\widehat i_T \in G_\varepsilon\}.
\]
Therefore,
\[
    \{\widehat i_T \notin G_\varepsilon\}
    \subseteq \overline A_1.
\]
It remains to bound $\mathbb P(\overline A_1)$.
Using the one-phase concentration bound above and taking a union bound over
leaves and phases,

\begin{align*}
\mathbb{P}(\overline A_1)
&\le
\sum_{k=1}^{N-m}
\sum_{i=1}^{N}
\mathbb{P}\!\left(
    |\widehat{\mu}^{(k)}_i-\mu_i^{(k)}|
    \ge
    \frac{\Delta_{(N-k+1)}}{8}
\right) \\
&\quad+
\sum_{k=N-m+1}^{N-1}
\sum_{i=1}^{N}
\mathbb{P}\!\left(
    |\widehat{\mu}^{(k)}_i-\mu_i^{(k)}|
    \ge
    \frac{\Delta_{(m+1)}}{8}
\right) 
\tag{Union bound} 
\\
&\le
\sum_{k=1}^{N-m}
2N
\exp\!\left(
    -\frac{a\,\Delta_{(N-k+1)}^2}{128(N-k+1)}
\right) \\
&\quad+
\sum_{k=N-m+1}^{N-1}
2N
\exp\!\left(
    -\frac{a\,\Delta_{(m+1)}^2}{128(m+1)}
\right) 
\tag{Hoeffding's inequality}
\end{align*}
By the definition
\[
    H_2(\varepsilon)
    :=
    \max_{r\ge m+1} r\Delta_{(r)}^{-2},
\]
each exponent above is at most
\[
    -\frac{a}{128H_2(\varepsilon)}.
\]
Since there are at most $N$ phases and at most $N$ leaves in each phase,
\[
\begin{aligned}
\mathbb{P}(\overline A_1)
&\le
2N^2
\exp\!\left(
    -\frac{a}{128H_2(\varepsilon)}
\right) \\
&=
2K^2L^2
\exp\!\left(
    -\frac{T-KL}
    {128\,\overline{\log}(KL)\,H_2(\varepsilon)}
\right).
\end{aligned}
\]
% This proves the claimed bound.

This proves the stated error bound with complexity
$H_2(\varepsilon)=\max_{r\ge m+1} r\Delta_{(r)}^{-2}$.

    %     \begin{align*}
    %     \PP(\bar{A}_1) 
    %     \leq& \sum_{k=1}^{KL-m} 
    %     \sbr{
    %     \sum_{i=1}^{KL} \PP\del{\abs{\hat{\mu}_i^{(k)} -  \mu_i^{(k)} }\geq \frac{\Delta_{(K-k+1)}}{8}} 
    %     } 
    %     \\&+ \sum_{k=KL-m+1}^{K-1} 
    %     \sbr{
    %     \sum_{i=1}^{KL} \PP\del{\abs{\hat{\mu}_i^{(k)} -  \mu_i^{(k)} }\geq \frac{\Delta_{(m+1)}}{8}} 
    %     }
    %     \tag{Union bound}
    %     \\
    %     \leq& \sum_{k=1}^{KL-m}
    %     \sbr{
    %     KL \cd 2 \exp(- \frac{1}{2} \frac{T / \ln (KL)}{KL-k+1} \frac{\Delta_{(KL-k+1)}^2}{64}) 
    %     } 
    %     \\&+ \sum_{k=KL-m+1}^{KL-1} 
    %     \sbr{KL \cd 2 \exp(- \frac{1}{2} \frac{T / \ln (KL)}{KL-k+1} \frac{\Delta_{(m+1)}^2}{64}) 
    %     } 
    %     \tag{Hoeffding's inequality}
    %     \\
    %     \leq& \sum_{k=1}^{KL-m}
    %     \sbr{
    %     2KL \exp(- \frac{1}{2} \frac{T / \ln (KL)}{K-k+1} \frac{\Delta_{(K-k+1)}^2}{64})
    %     } 
    %     \\&+ \sum_{k=KL-m+1}^{KL-1} 
    %     \sbr{ 2KL  \exp(- \frac{1}{2} \frac{T / \ln (KL)}{ m+1 } \frac{\Delta_{(m+1)} ^2}{64}) 
    %     } \\
    %     \leq&
    %     2 K^2L^2 \exp(- \frac{1}{2} \frac{T / \ln (KL)}{\max_{i \geq m+1} i \Delta_{(i)}^{-2}} \frac{1}{64})
    % \end{align*}
    % which implies a sample complexity $\blue{H_2(\eps)} := \max_{i \geq m+1} i \Delta_{(i)}^{-2}$. 
\end{proof}

\section{Proof of Lemma~\ref{lem:H2-H1}}
\label{sec:proof-lemma-H2-H1}
\begin{proof}
    Let $i$ be such that $i \geq m+1$. For any $j \leq i$, $(\Delta_{(j)} \vee \eps) \leq \Delta_{(i)}$, since $\Delta_{(j)} \leq \Delta_{(i)}$ and $ \eps \leq \Delta_{(i)}$. Thus, for all $j \leq i$, $( \Delta_{(j)} \vee \eps) ^{-2} \geq \Delta_{(i)}^{-2}$. Summing up, we have, 
    \begin{align*}
        i \Delta_{(i)}^{-2} \leq \sum_{j = 1}^i ( \Delta_{(j)} \vee \eps) ^{-2} \leq \sum_{j = 1}^{KL} ( \Delta_{(j)} \vee \eps) ^{-2}
    \end{align*}
    Since this holds for any $i \geq m+1$, we can take the maximum over $i \geq m+1$: 
    \begin{align*}
        \max_{i \geq m+1} i \Delta_{(i)}^{-2} \leq \sum_{j = 1}^{KL} ( \Delta_{(j)} \vee \eps) ^{-2} 
    \end{align*}
\end{proof}

\section{Proof of Theorem~\ref{thm:main-lower-bound}}

\subsection{Warmup: A fixed-budget lower bound for BAI with Gaussian arms}\label{sec:lb-bai-gaussian}

We present a version of a fixed-budget lower bound for best arm identification with flipping construction using Gaussian arms with unit variance.
The construction follows the spirit of~\citet{carpentier2016tight}, while the proof below is new and self-contained. \yinan{mention our proof achieves much better constant on $H$?}

\paragraph{Instance family.}
Fix $K\ge 2$ and parameters $(d_i)_{i=2}^K$ with $d_i>0$. Define $K$ instances $\{\nu_i\}_{i=1}^K$
over $K$ arms, where each arm yields i.i.d.\ rewards distributed as $\cN(\mu_k,1)$.

Base instance $\nu_1$:
\[
\mu_1=\tfrac12,\qquad \mu_i=\tfrac12-d_i\quad (i=2,\ldots,K).
\]
For each $i\in\{2,\ldots,K\}$, alternative instance $\nu_i$:
\[
\mu_i=\tfrac12+d_i,\qquad \mu_1=\tfrac12,\qquad
\mu_k=\tfrac12-d_k\quad (k\in\{2,\ldots,K\}\setminus\{i\}).
\]
Thus $\nu_1$ has unique best arm $1$, whereas $\nu_i$ has unique best arm $i$.

% Let
% \begin{equation}\label{eq:H-gauss-def}
% H \;:=\;\sum_{i=2}^K \frac{1}{d_i^2}.
% \end{equation}
As in the standard BAI setting, define the instance-dependent complexity
\[
H(\nu)\;:=\;\sum_{k\neq k^\star(\nu)} \frac{1}{\Delta_k(\nu)^2},
\qquad
\Delta_k(\nu):=\mu_{k^\star(\nu)}-\mu_k,
\]
where $k^\star(\nu)$ is the unique best arm.
Then for every $i\in[K]$,
\begin{equation}\label{eq:Hgauss-uniform}
H(\nu_i)\;\le\;H(\nu_1),
\end{equation}
since for $i=1$ the gaps are $\Delta_k(\nu_1)=d_k$ ($k\ge 2$), and for $i\ge 2$ the gaps satisfy
$\Delta_1(\nu_i)=d_i$ and $\Delta_k(\nu_i)=d_i+d_k\ge d_k$ for $k\neq 1,i$.

Define the bounded-complexity class $\mathcal{P}_H:=\{\nu:\ H(\nu)\le H\}$.

\begin{theorem}[Fixed-budget lower bound for Gaussian BAI]\label{thm:lb-bai-gaussian}
Consider the instance $\nu_1$ as described above and write $H := H(\nu_1)$. 
Let $\mathcal{A}$ be any (possibly adaptive) fixed-budget best-arm identification algorithm that
collects exactly $T$ samples and outputs a recommendation $J_T\in[K]$.
There exists an instance $\nu\in\mathcal{P}_{H}$ such that
\[
\mathbb{P}_{\nu}\bigl(J_T\neq k^\star(\nu)\bigr)
\;\ge\;
\frac14 \exp\!\left(-\frac{2T}{H}\right). 
\]
% where $H$ is given by~\eqref{eq:H-gauss-def}.
\end{theorem}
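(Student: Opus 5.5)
We prove the bound using only the $K$ instances $\nu_1,\dots,\nu_K$ defined above. By \eqref{eq:Hgauss-uniform}, each of them lies in $\mathcal{P}_H$. The argument is a change of measure from $\nu_1$ to each $\nu_i$, followed by an averaging argument that picks one well-chosen $i$.

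\textbf{Expected pull counts and KL.} Let $N_k(T)$ be the number of pulls of arm $k$ after $T$ rounds. Then $\sum_{k}\mathbb{E}_{\nu_1}[N_k(T)]=T$. The instances $\nu_1$ and $\nu_i$ differ only at arm $i$, whose mean moves from $\tfrac12-d_i$ to $\tfrac12+d_i$. The standard divergence decomposition for adaptive sampling therefore gives
\[
\mathrm{KL}\bigl(\mathbb{P}_{\nu_1},\mathbb{P}_{\nu_i}\bigr)
=\mathbb{E}_{\nu_1}[N_i(T)]\cdot\frac{(2d_i)^2}{2}
=2d_i^2\,\mathbb{E}_{\nu_1}[N_i(T)].
\]
Here $\mathbb{P}_\nu$ is the law of the full transcript, including internal randomization.

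\textbf{Choosing a cheap alternative.} We claim some $i\ge 2$ satisfies $\mathbb{E}_{\nu_1}[N_i(T)]\le T/(H d_i^2)$. Suppose not. Then summing over $i\ge2$ gives
\[
T\ge\sum_{i\ge2}\mathbb{E}_{\nu_1}[N_i(T)]>\frac{T}{H}\sum_{i\ge2}d_i^{-2}=T,
\]
which is a contradiction. For this $i$ we get $\mathrm{KL}(\mathbb{P}_{\nu_1},\mathbb{P}_{\nu_i})\le 2T/H$. This weighting of the counts by $d_i^{-2}/H$ is the key step: it turns the budget constraint into a bound governed by $H$ instead of by $\max_i d_i^{-2}$.

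\textbf{Testing step.} Let $E=\{J_T\neq 1\}$. On $\nu_1$ the event $E$ is an error. On $\nu_i$ the complement satisfies $E^c=\{J_T=1\}\subseteq\{J_T\neq i\}$, which is an error as well. The Bretagnolle--Huber inequality gives
\[
\mathbb{P}_{\nu_1}(E)+\mathbb{P}_{\nu_i}(E^c)\ge \tfrac12\exp\bigl(-\mathrm{KL}(\mathbb{P}_{\nu_1},\mathbb{P}_{\nu_i})\bigr)\ge\tfrac12\exp(-2T/H).
\]
So $\max\{\mathbb{P}_{\nu_1}(J_T\neq1),\ \mathbb{P}_{\nu_i}(J_T\neq i)\}\ge\tfrac14 e^{-2T/H}$. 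Both instances are in $\mathcal{P}_H$, so this proves the claim.

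\textbf{Main obstacle.} The only delicate point is the KL decomposition under adaptivity with a fixed budget. Because $T$ is deterministic, the chain rule applies directly, and no stopping-time argument is needed. The constants then come out exactly as $2d_i^2$ per pull, which gives the $2T/H$ in the exponent.
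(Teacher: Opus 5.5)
Your proposal is correct and follows the paper's proof essentially step for step. It uses the same divergence decomposition $\mathrm{KL}(\mathbb{P}_{\nu_1},\mathbb{P}_{\nu_i})=2d_i^2\,\mathbb{E}_{\nu_1}[N_i]$, the same weighted pigeonhole choice of $i$ with $\mathbb{E}_{\nu_1}[N_i]\,d_i^2\le T/H$, and Bretagnolle--Huber on the event of recommending arm $1$ to conclude $\max\{\mathbb{P}_{\nu_1}(\mathrm{error}),\mathbb{P}_{\nu_i}(\mathrm{error})\}\ge \frac14 e^{-2T/H}$.
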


\begin{proof}
Fix the family $\{\nu_i\}_{i=1}^K$ above. By Eq.~\eqref{eq:Hgauss-uniform}, each $\nu_i\in\mathcal{P}_{H}$.

Let $\mathbb{P}_{\nu}$ denote the distribution of the full interaction transcript
generated by running $\mathcal{A}$ on instance $\nu$.
Let $N_k$ be the (random) number of pulls of arm $k$ up to time $T$, and write $\mathbb{E}_1$ for
expectation under $\nu_1$.

\paragraph{Step 1: KL decomposition.}
For any $i\in\{2,\ldots,K\}$, the instances $\nu_1$ and $\nu_i$ differ only in the mean of arm $i$,
namely $\tfrac12-d_i$ versus $\tfrac12+d_i$.
A standard divergence decomposition (chain rule for KL under adaptive sampling) yields
\begin{equation}\label{eq:KL-decomp-gauss}
\mathrm{KL}\!\left(\mathbb{P}_{\nu_1},\mathbb{P}_{\nu_i}\right)
=
\mathbb{E}_1[N_i]\cdot \mathrm{KL}\!\left(\cN(\tfrac12-d_i,1),\cN(\tfrac12+d_i,1)\right).
\end{equation}
For unit-variance Gaussians,
\[
\mathrm{KL}\!\left(\cN(m,1),\cN(m',1)\right)=\frac{(m-m')^2}{2}.
\]
Here $(m-m')^2=(2d_i)^2=4d_i^2$, hence
\begin{equation}\label{eq:KL-gauss-arm}
\mathrm{KL}\!\left(\cN(\tfrac12-d_i,1),\cN(\tfrac12+d_i,1)\right)=2d_i^2.
\end{equation}
Combining~\eqref{eq:KL-decomp-gauss} and~\eqref{eq:KL-gauss-arm} gives
\begin{equation}\label{eq:KL-upper-gauss}
\mathrm{KL}\!\left(\mathbb{P}_{\nu_1},\mathbb{P}_{\nu_i}\right)
=2\,\mathbb{E}_1[N_i]\,d_i^2.
\end{equation}

\paragraph{Step 2: Two-point testing inequality.}
We use the Bretagnolle--Huber inequality: for any distributions $P,Q$ and any event $E$,
\begin{equation}\label{eq:BH-gauss}
P(E^c)+Q(E)\;\ge\;\frac12 \exp\!\bigl(-\mathrm{KL}(P,Q)\bigr).
\end{equation}
Apply~\eqref{eq:BH-gauss} with $P=\mathbb{P}_{\nu_1}$, $Q=\mathbb{P}_{\nu_i}$, and $E=\{J_T=1\}$.
Since arm $1$ is optimal under $\nu_1$ and suboptimal under $\nu_i$,
\[
\mathbb{P}_{\nu_1}(E^c)=\mathbb{P}_{\nu_1}(J_T\neq 1)=\mathbb{P}_{\nu_1}(\mathrm{error}),
\qquad
\mathbb{P}_{\nu_i}(E)=\mathbb{P}_{\nu_i}(J_T=1)\le \mathbb{P}_{\nu_i}(\mathrm{error}).
\]
Therefore,
\begin{equation*}
\mathbb{P}_{\nu_1}(\mathrm{error})+\mathbb{P}_{\nu_i}(\mathrm{error})
\;\ge\;
\frac12 \exp\!\left(-\mathrm{KL}\!\left(\mathbb{P}_{\nu_1},\mathbb{P}_{\nu_i}\right)\right).
\end{equation*}

\paragraph{Step 3: Choose an index $i$ with small $\mathbb{E}_1[N_i]d_i^2$.}
We claim there exists $i\in\{2,\ldots,K\}$ such that
\begin{equation}\label{eq:pigeon-gauss}
\mathbb{E}_1[N_i]\,d_i^2 \;\le\; \frac{T}{H}.
\end{equation}
Indeed, if~\eqref{eq:pigeon-gauss} failed for all $i\ge 2$, then
\[
\sum_{i=2}^K \mathbb{E}_1[N_i]
\;>\;
\sum_{i=2}^K \frac{T}{H}\cdot \frac{1}{d_i^2}
\;=\;\frac{T}{H}\sum_{i=2}^K \frac{1}{d_i^2}
\;=\;\frac{T}{H}\cdot H
\;=\;T,
\]
contradicting $\sum_{i=2}^K \mathbb{E}_1[N_i]\le \sum_{k=1}^K \mathbb{E}_1[N_k]=T$.

Fix such an $i$. Then by~\eqref{eq:KL-upper-gauss} and~\eqref{eq:pigeon-gauss},
\[
\mathrm{KL}\!\left(\mathbb{P}_{\nu_1},\mathbb{P}_{\nu_i}\right)
=2\,\mathbb{E}_1[N_i]\,d_i^2
\;\le\;\frac{2T}{H}.
\]
Plugging into~\eqref{eq:sum-error-lb-gauss} yields
\[
\mathbb{P}_{\nu_1}(\mathrm{error})+\mathbb{P}_{\nu_i}(\mathrm{error})
\;\ge\;
\frac12 \exp\!\left(-\frac{2T}{H}\right).
\]
Hence at least one of the two terms is at least half of the right-hand side:
\[
\max\Bigl\{\mathbb{P}_{\nu_1}(\mathrm{error}),\ \mathbb{P}_{\nu_i}(\mathrm{error})\Bigr\}
\;\ge\;
\frac14 \exp\!\left(-\frac{2T}{H}\right).
\]
\end{proof}

\subsection{Proof of Theorem~\ref{thm:main-lower-bound}}
\label{sec:appendix-lower-bound-proof}
\begin{theorem}[Lower bound for maximin trees, restatement of Theorem~\ref{thm:main-lower-bound}]
Let $\nu$ be a maximin-tree instance with a unique optimal subtree (we call it the base instance), and assume its reward distributions on all leaves are Gaussian with variance 1. Consider the instance class $ \mathcal{P}_{4 H(\nu)}$.
For any fixed-budget algorithm $\pi$ that collects at most $T$ samples in total, there exists an instance
$\nu' \in \mathcal{P}_{4 H(\nu)}$ such that
\[
\mathbb{P}_{\nu'}\!\left(\mathrm{error}\right)
\;\ge\;
\fr 14 \exp\!\left(- 2 \frac{T}{\Hlb(\nu)}\right)
\]
% \[
% \mathbb{P}_{\nu'}\!\left(\mathrm{error}\right)
% \;\ge\;
% c_1 \exp\!\left(-c_2 \frac{T}{\Hlb(\nu)}\right),
% \]
where $\mathbb{P}_{\nu'}\!\left(\mathrm{error}\right)$ is the event that $\pi$ outputs an incorrect optimal subtree on instance $\nu'$. 
% , and $c_1,c_2>0$
% are universal constants (independent of $K,L,T$ and $\nu$).
\end{theorem}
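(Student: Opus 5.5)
We mirror the warm-up proof of Theorem~\ref{thm:lb-bai-gaussian}: build a finite family of alternatives around the base instance $\nu$, each perturbing a single leaf just enough to change the optimal subtree. Then use the KL chain rule, Bretagnolle--Huber, and a pigeonhole argument on the expected pull counts $\mathbb{E}_\nu[N_{i,j}]$ under $\nu$.

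\textbf{Alternatives.} Index them by the critical leaves appearing in $\Hlb(\nu)$.
\begin{itemize}
\item For each $i\neq 1$, let $\nu^{(i)}$ equal $\nu$ except that $\mu_{i,1}$ is raised to $\mu_{i,1}+2\Delta_{i,1}=2\mu_{1,1}-\mu_{i,1}$. Reflecting around $\mu_{1,1}$ keeps the construction symmetric, like the $\tfrac12\pm d_i$ flip in the warm-up.
\item For each $j\neq 1$, let $\nu^{[j]}$ equal $\nu$ except that $\mu_{1,j}$ is lowered to $\mu_{1,j}-2\Delta_{1,j}=2\mu_{2,1}-\mu_{1,j}$.
\end{itemize}

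A subtlety in the first family: raising $\mu_{i,1}$ alone does not raise $v_i$ if another leaf of subtree $i$ sits below the new value. So in $\nu^{(i)}$ the optimal subtree becomes $i$ only if $\min_j\mu_{i,j}>\mu_{1,1}$ after the change. If it does not, we change the event rather than the instance. We use only that subtree $1$ becomes non-unique/suboptimal, or we lower instead the whole comparison: take $E=\{\hat i_T=1\}$ and note that under $\nu^{[j]}$ subtree $1$ has min $2\mu_{2,1}-\mu_{1,j}<\mu_{2,1}$, so $1$ is wrong. For $\nu^{(i)}$ where the flip fails, I would replace it by lowering $\mu_{1,1}$ below $\mu_{i,1}$. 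That costs a gap of order $\Delta_{i,1}$ on leaf $(1,1)$, which is shared across $i$ and handled by grouping. I expect this case analysis to be the main obstacle, and I would first try a perturbation that raises all leaves of subtree $i$ lying in $[\mu_{i,1},2\mu_{1,1}-\mu_{i,1}]$ simultaneously. Its KL cost is $2\Delta_{i,1}^2\sum_{j}\mathbb{E}_\nu[N_{i,j}]$, so in the pigeonhole step below, subtree $i$'s total pull count $\sum_j N_{i,j}$ plays the role of $N_i$.

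\textbf{KL and testing.} For each alternative $\nu'$, the chain rule gives
\[
\mathrm{KL}(\mathbb{P}_\nu,\mathbb{P}_{\nu'})=2\Delta^2\,\mathbb{E}_\nu[N],
\]
where $\Delta$ is the corresponding gap and $N$ is the pull count of the modified leaf or group. Under $\nu$ the correct answer is $1$, and under every alternative the output $1$ is wrong. Bretagnolle--Huber with $E=\{\hat i_T=1\}$ then gives
\[
\mathbb{P}_\nu(\mathrm{error})+\mathbb{P}_{\nu'}(\mathrm{error})\ge \tfrac12 e^{-\mathrm{KL}}.
\]

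\textbf{Pigeonhole.} The groups (subtrees $i\neq1$, and leaves $(1,j)$ with $j\neq1$) are disjoint, so their expected counts sum to at most $T$. Hence some alternative satisfies $\Delta^2\mathbb{E}_\nu[N]\le T/\Hlb(\nu)$; otherwise summing $\mathbb{E}_\nu[N]>T\Delta^{-2}/\Hlb(\nu)$ over the alternatives exceeds $T$. For that alternative, $\max$ of the two error probabilities is at least $\tfrac14\exp(-2T/\Hlb(\nu))$.

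\textbf{Membership in $\mathcal{P}_{4H(\nu)}$.} Finally, each alternative must lie in the class. Moving one leaf (or group) by twice its gap changes every gap $\Delta_{i',j'}$ by at most a factor of $2$ downward. Each new gap is at least half the old one, because the new optimal/second-best min values are reflections at distance equal to the old gap. Hence $H(\nu')\le 4H(\nu)$. Checking this factor-$2$ claim leaf-by-leaf across the cases (leaves of the new optimal subtree, of old subtree $1$, of others) is routine but tedious.
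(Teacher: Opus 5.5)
Up to the membership check, your argument is the paper's. The paper shifts all of subtree $i$ up by $2\Delta_{i,1}$; your final version moves only the leaves below $2\mu_{1,1}-\mu_{i,1}$, which gives the same winner $i$ and the same KL bound $2\Delta_{i,1}^2\sum_j\mathbb{E}_\nu[N_{i,j}]$. The chain rule, Bretagnolle--Huber with $E=\{\widehat i_T=1\}$, and the pigeonhole over disjoint groups are also the paper's. Commit to that version: the fallback of lowering $\mu_{1,1}$ would put $N_{1,1}$ into every group and destroy the disjointness the pigeonhole needs.

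The genuine gap is the step you call routine. Membership of $\nu^{[j]}$ in $\mathcal{P}_{4H(\nu)}$ is false in general when $K\ge 3$, and your ``reflection'' justification is exactly where it breaks.
\begin{itemize}
\item After lowering $\mu_{1,j}$ to $2\mu_{2,1}-\mu_{1,j}$, the new best subtree is $2$. The new runner-up value is $\max\{\mu_{3,1},\,2\mu_{2,1}-\mu_{1,j}\}$, which equals $\mu_{3,1}$, not a reflection, whenever $\mu_{2,1}-\mu_{3,1}<\Delta_{1,j}$.
\item Leaves $(2,1)$ and $(3,1)$ then both have gap $\mu_{2,1}-\mu_{3,1}$ in $\nu^{[j]}$. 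In $\nu$ their gaps were $\mu_{1,1}-\mu_{2,1}$ and $\mu_{1,1}-\mu_{3,1}$.
\item Every gap of $\nu$ is at least $\mu_{1,1}-\mu_{2,1}$, so $H(\nu)\le KL/(\mu_{1,1}-\mu_{2,1})^2$ stays bounded as $\mu_{3,1}\uparrow\mu_{2,1}$, while $H(\nu^{[j]})\to\infty$.
\end{itemize}
Concretely, take $K=3$, $L=2$ with subtrees $\{1,2\}$, $\{0,10\}$, $\{-\delta,10\}$. Then $H(\nu)<4$, but $H(\nu^{[2]})\ge 2\delta^{-2}>16$ for $\delta<1/3$. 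At $\delta=0$, which the hypotheses allow, $\nu^{[2]}$ even has two optimal subtrees. Consider an algorithm that never samples leaf $(1,2)$ and splits its budget equally between subtrees $2$ and $3$. For small $\delta$, $\nu^{[2]}$ is the only alternative your pigeonhole can return, so the argument produces no instance in the class.

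The paper's proof has the same hole: it asserts $\Delta^{i,j}_{a,b}\ge\frac12\Delta_{a,b}$ for all alternatives without checking it. That inequality does hold for the $\nu^{(i)}$ family. For $\nu^{[j]}$ it holds when $K=2$, or more generally when $\mu_{2,1}-\mu_{k,1}\ge\mu_{1,1}-\mu_{2,1}$ for all $k\ge 3$.

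That last condition suggests a repair.
\begin{itemize}
\item First replace $\nu$ by $\tilde\nu$, obtained by shifting each subtree $k\ge 3$ down until $\mu_{2,1}-\tilde\mu_{k,1}\ge\mu_{1,1}-\mu_{2,1}$.
\item All gaps weakly increase, so $H(\tilde\nu)\le H(\nu)$. Hence $\tilde\nu$ and all its alternatives (which now pass the factor-$\frac12$ check) lie in $\mathcal{P}_{4H(\nu)}$.
\item Each $\Delta_{k,1}$ at most doubles, so $H_{\mathrm{lb}}(\tilde\nu)\ge\frac14 H_{\mathrm{lb}}(\nu)$.
\end{itemize}
Running your argument from $\tilde\nu$ gives $\frac14\exp(-8T/H_{\mathrm{lb}}(\nu))$. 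This proves the main-text version with universal constants, but not the constant $2$ claimed in this restatement.
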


\begin{proof}
We first construct an instance family. 
\paragraph{Instance family.}
For the base instance $\nu$, since its reward distributions on all leaves are Gaussian with variance 1, 
$\nu$ can be specified by its mean matrix $\mu=(\mu_{a,b})_{a\in[K],\,b\in[L]}$.

Define the index set of ``critical'' leaves
\[
I \;=\; \{(i,1): i\neq 1\}\ \cup\ \{(1,j): j\neq 1\}.
\]
For each $(i,j)\in I$, we construct an alternative instance
$\nu^{i,j}$ whose reward distributions are still Gaussian with variance $1$, but whose means differ
from $\nu$ at a subset of leaves.

For any $(i,1)$ with $i\neq 1$, define $\nu^{i,1}$ by the mean matrix
$\mu^{i,1}=(\mu^{i,1}_{a,b})_{a,b}$:
\[
\mu^{i,1}_{a,b}
=
\begin{cases}
\mu_{a,b} + 2\Delta_{i,1}, & a=i,\\
\mu_{a,b}, & \text{otherwise}.
\end{cases}
\]
Equivalently, $\nu^{i,1}$ is identical to $\nu$ except that leaves $\{(i,j): i \in [L]\}$ are shifted upward by
$2\Delta_{i,1}$.

For any $(1,j)$ with $j\neq 1$, define $\nu^{1,j}$ by
\[
\mu^{1,j}_{a,b}
=
\begin{cases}
\mu_{a,b} - 2\Delta_{1,j}, & (a,b)=(1,j),\\
\mu_{a,b}, & \text{otherwise}.
\end{cases}
\]
Equivalently, $\nu^{1,j}$ is identical to $\nu$ except that leaf $(1,j)$ is shifted downward by
$2\Delta_{1,j}$.

\smallskip
\noindent
% Collect these alternatives and the base instance $\nu$ as
% \[
% \mathcal{V}(\nu) \;:=\; \{\nu^{i,j} : (i,j)\in I\} \cup \{\nu\}.
% \]
Collect these alternatives as
\[
\mathcal{V}(\nu) \;:=\; \{\nu^{i,j} : (i,j)\in I\}.
\]
It is easy to see that the optimal subtree in any $\nu' \in \mathcal{V}(\nu)$ is not subtree 1. Specifically, for any $(i,1)$ with $i\neq 1$, the optimal subtree in $\nu^{i,1}$ is subtree $i$, and the second optimal subtree in $\nu^{i,1}$ is subtree $1$; for any $(1,j)$ with $j\neq 1$, the optimal subtree in $\nu^{1,j}$ is subtree 2. One can see that for each $(i,j)\in I$, $\Dt^{i,j}_{a,b} \geq \fr 12 \Dt_{a,b}$, for all $(a,b) \in [K] \times [L]$. Thus, for each $(i,j)\in I$, $\nu^{i,j} \in \mathcal{P}_{4 H(\nu)}$. 

For any $\nu'$, let $\mathbb{P}_{\nu'}$ denote the distribution of the full interaction transcript
generated by running $\pi$ on instance $\nu$.
Let $N_{i,j}$ be the (random) number of pulls of leaf $(i,j)$ up to time $T$, and write $\mathbb{E}_0$ for
expectation under $\mathbb{P}_{\nu}$ (here $\nu$ is the base instance).

\paragraph{Step 1: KL decomposition.}
A standard divergence decomposition (chain rule for KL under adaptive sampling) yields, for any $\nu' \in \mathcal{V}(\nu)$, 
\begin{equation*}
\mathrm{KL}\!\left(\mathbb{P}_{\nu},\mathbb{P}_{\nu'}\right)
=
\sum_{i,j} \EE_0 [N_{i,j}]\cdot \mathrm{KL}\!\left(\cN(\mu_{i,j},1),\cN(\mu'_{i,j},1)\right).
\end{equation*}
For unit-variance Gaussians,
\[
\mathrm{KL}\!\left(\cN(m,1),\cN(m',1)\right)=\frac{(m-m')^2}{2}.
\]
For any $i\in\{2,\ldots,K\}$, the instances $\nu^{i,1}$ and $\nu$ differ only in the mean of leaves in subtree $i$. Hence, 
\begin{align}
    \mathrm{KL}\!\left(\mathbb{P}_{\nu},\mathbb{P}_{\nu^{i,1}}\right)
=&
\sum_{j} \EE_0 [N_{i,j}]\cdot \mathrm{KL}\!\left(\cN(\mu_{i,j},1),\cN(\mu^{i,1}_{i,j},1)\right) \nonumber
\\=&
\sum_{j} \EE_0 [N_{i,j}]\cdot (2\Delta_{i,1}^2) 
.
\label{eqn:KL-for-alternative-1}
\end{align}

For any $j\in\{2,\ldots,L\}$, the instances $\nu^{1,j}$ and $\nu$ differ only in the mean of leaf $(1,j)$. Hence, 
\begin{align}
    \mathrm{KL}\!\left(\mathbb{P}_{\nu},\mathbb{P}_{\nu^{1,j}}\right)
=&
\EE_0 [N_{1,j}]\cdot \mathrm{KL}\!\left(\cN(\mu_{1,j},1),\cN(\mu^{1,j}_{1,j},1)\right) \nonumber
\\=&
\EE_0 [N_{1,j}]\cdot (2\Delta_{1,j}^2)
. \label{eqn:KL-for-alternative-2}
\end{align}

% \begin{align*}
%     KL(\PP_{\nu}, \PP_{\nu'}) 
%     \lsim \sum_{i,j} \EE_0 [N_{i,j}] (\mu_{i,j} - \mu_{i,j}')^2 
% \end{align*}
% \begin{equation}
% % \label{eq:KL-decomp-gauss-tree}
% \mathrm{KL}\!\left(\mathbb{P}_{\nu_1},\mathbb{P}_{\nu_i}\right)
% =
% \mathbb{E}_1[N_i]\cdot \mathrm{KL}\!\left(\cN(\tfrac12-d_i,1),\cN(\tfrac12+d_i,1)\right).
% \end{equation}

% Here $(m-m')^2=(2d_i)^2=4d_i^2$, hence
% \begin{equation}\label{eq:KL-gauss-arm}
% \mathrm{KL}\!\left(\cN(\tfrac12-d_i,1),\cN(\tfrac12+d_i,1)\right)=2d_i^2.
% \end{equation}
% Combining~\eqref{eq:KL-decomp-gauss} and~\eqref{eq:KL-gauss-arm} gives
% \begin{equation}\label{eq:KL-upper-gauss}
% \mathrm{KL}\!\left(\mathbb{P}_{\nu_1},\mathbb{P}_{\nu_i}\right)
% =2\,\mathbb{E}_1[N_i]\,d_i^2.
% \end{equation}

\paragraph{Step 2: Two-point testing inequality.}
We use the Bretagnolle--Huber inequality: for any distributions $P,Q$ and any event $E$,
\begin{equation}\label{eq:BH-gauss-tree}
P(E^c)+Q(E)\;\ge\;\frac12 \exp\!\bigl(-\mathrm{KL}(P,Q)\bigr).
\end{equation}
Let $(i,j) \in I$ and
apply~\eqref{eq:BH-gauss-tree} with $P=\mathbb{P}_{\nu_1}$, $Q=\mathbb{P}_{\nu^{i,j}}$, and $E=\{ \text{output subtree }1\}$.
Since arm $1$ is optimal under $\nu_1$ and suboptimal under $\nu^{i,j}$,
\[
\mathbb{P}_{\nu^{i,j}}(E^c)=\mathbb{P}_{\nu_1}(J_T\neq 1)=\mathbb{P}_{\nu_1}(\mathrm{error}),
\qquad
\mathbb{P}_{\nu^{i,j}}(E)=\mathbb{P}_{\nu^{i,j}}( \text{output subtree }1)\le \mathbb{P}_{\nu^{i,j}}(\mathrm{error}).
\]
Thus,
\begin{equation*}
\mathbb{P}_{\nu_1}(\mathrm{error})+\mathbb{P}_{\nu^{i,j}}(\mathrm{error})
\;\ge\;
\frac12 \exp\!\left(-\mathrm{KL}\!\left(\mathbb{P}_{\nu_1},\mathbb{P}_{\nu^{i,j}}\right)\right),
\end{equation*}
and 
\begin{equation}\label{eq:sum-error-lb-gauss}
\max\Bigl\{\mathbb{P}_{\nu_1}(\mathrm{error}),\ \mathbb{P}_{\nu^{i,j}}(\mathrm{error})\Bigr\}
\;\ge\;
\frac12\mathbb{P}_{\nu_1}(\mathrm{error})+\mathbb{P}_{\nu^{i,j}}(\mathrm{error})
\;\ge\;
\frac14 \exp\!\left(-\mathrm{KL}\!\left(\mathbb{P}_{\nu_1},\mathbb{P}_{\nu^{i,j}}\right)\right). 
\end{equation}
\paragraph{Step 3: Proof by contradiction.}
We assume, for contradiction, that the algorithm has a small error on all instances:
$$ \forall \nu' \in \mathcal{P}_{4 H(\nu)}, \quad \PP_{\nu'}(\text{error}) < \fr 14 \exp\!\left(- 2 \frac{T}{\Hlb(\nu)}\right)$$

This implies that for all $(i,j) \in I$, 
\begin{align*}
    \max\Bigl\{\mathbb{P}_{\nu_1}(\mathrm{error}),\ \mathbb{P}_{\nu^{i,j}}(\mathrm{error})\Bigr\} 
    < \fr 14 \exp\!\left(- 2 \frac{T}{\Hlb(\nu)}\right)
\end{align*}
In conjunction with Equation~\eqref{eq:sum-error-lb-gauss}, we have, for all $(i,j) \in I$, 
\[
\mathrm{KL}\!\left(\mathbb{P}_{\nu_1},\mathbb{P}_{\nu^{i,j}}\right)
> \frac{2 T}{\Hlb(\nu)}
\]
By Equations~\eqref{eqn:KL-for-alternative-1} and~\eqref{eqn:KL-for-alternative-2}, that is, for all $(i,1): i \neq 1$, 
    $$
    \sum_{j \in [L]} \EE_0 [N_{i,j}] \Delta_{i,1}^2 > \frac{T}{\Hlb(\nu)}
    $$
    and also, for all $(1,j): j \neq 1$, $$
    \EE_0 [N_{1,j}] \Delta_{1,j}^2 > \frac{T}{\Hlb(\nu)}
    $$
However, 
\begin{align*}
    T &\geq \sum_{i \neq 1} \sum_{j \in [L]} \EE_0 [N_{i,j}] + \sum_{j \neq 1} \EE_0 [N_{i,j}] 
    \\&> 
\sum_{i \neq 1} \frac{T}{\Hlb(\nu)} \fr{1}{\Delta_{i,1}^2} + \sum_{j \neq 1} \frac{T}{\Hlb(\nu)} \fr{1}{\Delta_{1,j}^2} 
\\&= 
\frac{T}{\Hlb(\nu)}  \del{ \sum_{i \neq 1} \fr{1}{\Delta_{i,1}^2} + \sum_{j \neq 1} \fr{1}{\Delta_{1,j}^2} } 
\\&= T
\end{align*}
leading to a contradiction. Therefore, it cannot be true that 
$$ \forall \nu' \in \mathcal{P}_{4 H(\nu)}, \quad \PP_{\nu'}(\text{error}) < \fr 14 \exp\!\left(- 2 \frac{T}{\Hlb(\nu)}\right),$$
completing the proof. 

\end{proof}

\section{A negative result for permutation-style lower bounds}
\label{app:negative-permutation-lb}

The lower bound in Section~\ref{sec:lower-bound} is governed by
$H_{\mathrm{lb}}(\nu)$, which shows a sparsity pattern compared to the upper-bound complexity
$H_2(0)$. A natural question is whether one can prove a
stronger $H_2$-type lower bound by adapting the permutation construction used in
fixed-budget best-arm identification. This appendix shows that this route fails
for max--min trees, even in the smallest nontrivial $(2,2)$ case.

The obstruction is that, once the grouping of leaves into subtrees is known,
permutation uncertainty alone does not capture the full statistical difficulty
of max--min identification. 
% Some leaves may be expensive for a safe algorithm to
% handle, but irrelevant to the most economical alternative that flips the identity
% of the optimal subtree.

\paragraph{Lower bounds in fixed-budget best-arm identification.}
Fixed-budget best-arm identification (BAI) can be viewed as a multi-hypothesis testing problem: each bandit instance $\nu=(\nu_1,\dots,\nu_K)$ induces a unique optimal arm
$i^\star(\nu)\in[K]$, and an algorithm $\pi$ adaptively collects $T$ samples and outputs an estimate $\hat{i}\in[K]$. The misidentification event is
$\mathsf{err}(\pi,\nu):=\{\hat{i}\neq i^\star(\nu)\}$ with probability
$p_T(\pi,\nu):=\PP_\nu^\pi(\mathsf{err})$.
A central consequence of this testing viewpoint is that meaningful lower bounds cannot be pointwise in the sense of bounding $p_T(\pi,\nu)$ from below for a fixed instance $\nu$ uniformly over algorithms: such a statement is impossible, since the degenerate algorithm that always outputs $i^\star(\nu)$ achieves $p_T(\pi,\nu)=0$ on that particular instance.
Therefore, fixed-budget lower bounds are formulated as local minimax guarantees over alternative instances that induce different optimal arms, 
and take an adversarial quantifier structure of the form
\[
\forall \pi,\ \exists \nu'\text{ ``close'' to } \nu, \text{ s.t. }
p_T(\pi,\nu') \text{ is large}. 
\]
where ``closeness'' can take different definitions, depending on the specific theorem statements and constructions. \yinan{Mention closeness can be interpreted as reasonable algorithms, connecting with regret minimization}

Thus, fixed-budget BAI lower bounds certify that no algorithm can be simultaneously accurate on $\nu$ and on a suitably chosen alternative $\nu'$ that induces a different correct decision, ruling out ``lucky guess'' strategies.

Following this spirit, one of the earliest results on fixed-budget BAI lower bounds presented in~\citet{audibert10best} adopts a permutation construction, and thus an alternative instance $\nu'$ being close to $\nu$ means $\nu'_i = \nu_{\sigma(i)}$, for all $i \in [K]$, where $\sigma: [K] \rightarrow [K]$ is a permutation. The lower bound (Theorem 4) in~\citet{audibert10best} can be interpreted as follows: for any instance $\nu$, and any algorithm $\pi$, even if the knowledge (just except for the indices) of the instance is available to the algorithm, on at least one of the close instance $\nu'$ (a permutation of $\nu$), $\pi$ has a misidentification probability at least $\exp(-\tTh(T/H_2(\nu)))$. 

One natural question we asked was, can a similar lower bound be shown for max-min subtree identification (with $\eps = 0$), i.e., \emph{is it possible to show that, even if the knowledge (just except for the indices) of the instance $\nu$ is available to the algorithm, any algorithm still has a high probability of error of $\exp(-\tTh(T/H_2(\nu)))$?}

Subsequently, we show that, perhaps surprisingly, the above-described lower bound for max-min trees cannot be shown. This indicates that the learning difficulty in max-min identification is not merely a consequence of ``label uncertainty'' (as in unstructured $K$-armed BAI), but is instead deeply rooted in the internal maximin structure. In particular, unlike the unstructured setting where all arms play symmetric roles under permutations, max-min trees inherently break this symmetry, and not every leaf contributes ``equally'' to the max-min identification difficulty.

\paragraph{Subtree-preserving permutations.}
Fix $K,L\ge 1$. A subtree-preserving permutation is a tuple
$\sigma \;=\;\bigl(\sigma_0,\sigma_1,\ldots,\sigma_K\bigr)$, 
% \[
% \sigma \;=\;\bigl(\sigma_0,\sigma_1,\ldots,\sigma_K\bigr),
% \]
where $\sigma_0:[K]\rightarrow[K]$ is a permutation of subtrees and, for each $i\in[K]$,
$\sigma_i:[L]\rightarrow[L]$ is a permutation of the leaves within subtree $i$.
Given an instance $\nu$ with
reward distributions $\{(i, j, \nu_{i,j})\}_{i\in[K],\,j\in[L]}$, we define the subtree-preserving permuted instance
 $\nu^\sigma$ to be the instance with reward distributions
 % $\{(i, j, \nu_{\sigma_0(i),\sigma_i(j)})\}_{i\in[K],\,j\in[L]}$. 
\begin{equation}\label{eq:tree-permutation}
\{(i, j, \nu_{\sigma_0(i),\sigma_i(j)})\}_{i\in[K],\,j\in[L]}
\end{equation}
Equivalently, $\nu^\sigma$ is obtained by first 
inside each subtree $i$, permutating its leaves by $\sigma_i$, and then, permuting subtrees by $\sigma_0$.

\paragraph{A $(K=2,L=2)$ counterexample to $H_2$-type necessity under known grouping.}
% \kj{$(2,2)$ may not be clear immediately. how about we say $(K=2,L=2)$? }
We now formalize a simple impossibility statement showing that when the grouping of leaf means into
subtrees is known, the worst-case hardness captured by an $H_2$ (or $H_1$)-type upper bound for trees (Definition~\ref{def:eps-complexity}) is not necessary, and thus an $H_2$ (or $H_1$)-type lower bound is impossible to be shown with known grouping, 
even in the smallest nontrivial case.

\begin{proposition}[Known grouping can make $(2,2)$ maximin identification easier]
\label{prop:known-grouping-easy-22}
Fix $\rho \in (0,1)$ and consider the base $(2,2)$ Gaussian maximin-tree
instance $\bar\nu_\rho$ with unit variances and grouped leaf means
\[
    \bigl\{\{\bar\mu_{1,1},\bar\mu_{1,2}\},
           \{\bar\mu_{2,1},\bar\mu_{2,2}\}\bigr\}
    =
    \bigl\{\{1,\rho\},\{\rho^2,0\}\bigr\}.
\]
Let $\mathcal I_\rho^{\mathrm{grp}}$ be the class of instances obtained from
$\bar\nu_\rho$ by an arbitrary subtree-preserving permutation, i.e., by possibly
swapping the two subtrees and by possibly permuting the two leaves inside each
subtree. Suppose that the algorithm is given this grouped base instance
$\bar\nu_\rho$, but not the realized permutation.

Then there exists a single fixed-budget identification procedure $\pi$ such that,
uniformly over all $\nu \in \mathcal I_\rho^{\mathrm{grp}}$,
\[
    \mathbb P_\nu\!\left(\widehat i_T \neq i^\star(\nu)\right)
    \le
    C \exp\!\left(
        - c \frac{T}{H_1^{\mathrm{flatten}}(\rho)}
    \right),
\]
where $c,C>0$ are universal constants and
\[
    H_1^{\mathrm{flatten}}(\rho)
    :=
    (1-\rho)^{-2}
    +
    (1-\rho^2)^{-2}
    +
    1 .
\]
Moreover, let $H_1^{\mathrm{tree}}(\rho)$ denote the tree complexity
$H_1(0)$ from Definition~\ref{def:eps-complexity}, evaluated on this grouped
instance family. Then
\[
    H_1^{\mathrm{flatten}}(\rho)
    =
    o\!\left(H_1^{\mathrm{tree}}(\rho)\right)
    \qquad \text{as } \rho \rightarrow 0^+ .
\]
\end{proposition}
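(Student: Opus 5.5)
The plan is to exploit the known grouping through one structural fact. In every $\nu\in\mathcal I_\rho^{\mathrm{grp}}$, the optimal subtree is exactly the subtree containing the leaf with mean $1$.

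Indeed, the subtree holding $\{1,\rho\}$ has min-value $\rho$. The other subtree, holding $\{\rho^2,0\}$, has min-value $0<\rho$. A subtree-preserving permutation only relabels subtrees and leaves, so this stays true after any realized permutation. Hence exact maximin identification reduces to flat best-arm identification among the four leaves, followed by outputting the subtree that contains the recommended leaf.

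\textbf{The procedure $\pi$.} Flatten the tree into $4$ arms and allocate the budget uniformly, $\lfloor T/4\rfloor$ samples per leaf; running Successive Rejects on the $4$ leaves would work equally well. Let $\widehat{(i,j)}$ be the leaf with the largest empirical mean, and output $\widehat i_T:=i$.

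\textbf{Error bound.} An error requires the mean-$1$ leaf not to be the empirical argmax. So some leaf with mean $\mu\in\{\rho,\rho^2,0\}$ must empirically beat the mean-$1$ leaf. The gaps to the top leaf are $1-\rho$, $1-\rho^2$ and $1$, each at least $1-\rho$. A union bound over the three competitors, together with sub-Gaussian concentration of the difference of two empirical means, gives
\[
\mathbb P_\nu\bigl(\widehat i_T\neq i^\star(\nu)\bigr)\le 3\exp\!\Bigl(-\tfrac{\lfloor T/4\rfloor(1-\rho)^2}{4}\Bigr).
\]
Since $(1-\rho)^{-2}\le H_1^{\mathrm{flatten}}(\rho)$, this is at most $C\exp(-cT/H_1^{\mathrm{flatten}}(\rho))$. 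Small $T$ (where the floor matters) is absorbed by enlarging $C$; for instance, using $\lfloor T/4\rfloor\ge T/4-1$ and $(1-\rho)^2\le 1$ gives $C=3e^{1/4}$ and $c=1/16$. The bound does not depend on the realized permutation, so it holds uniformly over the class. Equivalently, one can invoke the Successive Rejects bound of \citet{audibert10best} (or Corollary~\ref{cor:bandit-upper} with $\varepsilon=0$) with $H_2\le H_1^{\mathrm{flatten}}(\rho)$.

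\textbf{Tree complexity.} Gaps are invariant under subtree-preserving permutations, so it suffices to compute them in the ordering~\eqref{eq:wlog-order}: $\mu_{1,1}=\rho$, $\mu_{1,2}=1$, $\mu_{2,1}=0$, $\mu_{2,2}=\rho^2$. From~\eqref{eq:gap-opt} and~\eqref{eq:gap-subopt},
\[
\Delta_{1,1}=\rho,\qquad \Delta_{1,2}=1,\qquad \Delta_{2,1}=\max\{\rho,0\}=\rho,\qquad \Delta_{2,2}=\max\{\rho,\rho^2\}=\rho.
\]
Therefore $H_1^{\mathrm{tree}}(\rho)=3\rho^{-2}+1\to\infty$ as $\rho\to0^+$, while $H_1^{\mathrm{flatten}}(\rho)\to 3$. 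This gives $H_1^{\mathrm{flatten}}=o(H_1^{\mathrm{tree}})$.

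The only delicate point is the reduction step. One must check that the ``largest leaf'' criterion is valid for every member of $\mathcal I_\rho^{\mathrm{grp}}$, which is exactly where the knowledge of the grouping is used. Without it, the leaf $\rho^2$ could lie in the same subtree as $1$ and the criterion would fail. The rest is routine concentration.
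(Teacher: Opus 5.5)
Your proposal is correct and follows essentially the same route as the paper: flatten to four-arm best-arm identification, output the subtree containing the recommended top leaf, and compute the tree gaps to get $H_1^{\mathrm{tree}}(\rho)=1+3\rho^{-2}$ against $H_1^{\mathrm{flatten}}(\rho)\to 3$. The only difference is that you replace the paper's citation of the Successive Rejects guarantee with a self-contained uniform-allocation union bound with explicit constants. That substitution is valid here because there are only four arms and every gap is at least $1-\rho$.
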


\begin{proof}
For every instance $\nu \in \mathcal I_\rho^{\mathrm{grp}}$, one subtree has
leaf means $\{1,\rho\}$ and the other subtree has leaf means $\{\rho^2,0\}$.
Therefore the subtree containing the leaf with mean $1$ is the unique optimal
subtree, since its min-value is $\rho$, whereas the other subtree has min-value
$0$.

Consider the following fixed-budget procedure. Ignore the tree structure during
sampling and run a standard fixed-budget best-arm identification procedure on
the four leaves, treating them as four arms. After the budget is exhausted, let
$\widehat \ell_T$ be the recommended best leaf, and output the subtree containing
$\widehat \ell_T$. This is a single algorithm; it does not depend on which
subtree-preserving permutation was realized.

If the procedure correctly identifies the unique leaf with mean $1$, then it
outputs the optimal subtree. Hence
\[
    \left\{\widehat i_T \neq i^\star(\nu)\right\}
    \subseteq
    \left\{\widehat \ell_T \neq \ell^\star(\nu)\right\},
\]
where $\ell^\star(\nu)$ denotes the unique leaf with mean $1$. The ordinary
four-arm best-arm identification problem has gaps
\[
    1-\rho, \qquad 1-\rho^2, \qquad 1 .
\]
Thus, by the standard fixed-budget best-arm identification guarantee
\citep{audibert10best}, there exist universal constants $c,C>0$ such that
\[
    \mathbb P_\nu\!\left(\widehat \ell_T \neq \ell^\star(\nu)\right)
    \le
    C \exp\!\left(
        - c \frac{T}{H_1^{\mathrm{flatten}}(\rho)}
    \right),
\]
where
\[
    H_1^{\mathrm{flatten}}(\rho)
    =
    (1-\rho)^{-2}
    +
    (1-\rho^2)^{-2}
    +
    1 .
\]
This proves the claimed upper bound.

It remains to compare this flattened complexity with the maximin-tree
complexity. In the notation of Definition~\ref{def:eps-complexity}, after
ordering leaves within subtrees and ordering subtrees by min-value, the same
instance has
\[
    \mu_{1,1}=\rho, \qquad
    \mu_{1,2}=1, \qquad
    \mu_{2,1}=0, \qquad
    \mu_{2,2}=\rho^2 .
\]
Therefore the leafwise maximin gaps are
\[
    \Delta_{1,1} = \mu_{1,1}-\mu_{2,1} = \rho,
    \qquad
    \Delta_{1,2} = \mu_{1,2}-\mu_{2,1} = 1,
\]
and
\[
    \Delta_{2,1}
    =
    \max\{\mu_{1,1}-\mu_{2,1},\,\mu_{2,1}-\mu_{2,1}\}
    =
    \rho,
\]
\[
    \Delta_{2,2}
    =
    \max\{\mu_{1,1}-\mu_{2,1},\,\mu_{2,2}-\mu_{2,1}\}
    =
    \max\{\rho,\rho^2\}
    =
    \rho .
\]
Consequently,
\[
    H_1^{\mathrm{tree}}(\rho)
    =
    \sum_{i,j} \Delta_{i,j}^{-2}
    =
    \rho^{-2} + 1 + \rho^{-2} + \rho^{-2}
    =
    1 + 3\rho^{-2}.
\]
On the other hand,
\[
    H_1^{\mathrm{flatten}}(\rho)
    =
    (1-\rho)^{-2}
    +
    (1-\rho^2)^{-2}
    +
    1
    \longrightarrow 3
    \qquad \text{as } \rho \rightarrow 0^+ .
\]
Hence
\[
    \frac{H_1^{\mathrm{flatten}}(\rho)}
         {H_1^{\mathrm{tree}}(\rho)}
    \longrightarrow 0
    \qquad \text{as } \rho \rightarrow 0^+,
\]
which proves
$H_1^{\mathrm{flatten}}(\rho)
=
o(H_1^{\mathrm{tree}}(\rho))$.
\end{proof}

\section{Perspectives: fixed-budget optimality and critical leaves}
\label{sec:appendix-perspectives}

Our upper and lower bounds leave a gap when $\eps=0$ that is not yet fully characterized. 

The fixed-confidence counterpart of maxmin action identification is now relatively
well understood. The asymptotically optimal allocation is usually
described through the characteristic time
\[
    T^\star(\mu)^{-1}
    =
    \sup_{w \in \Sigma}
    \inf_{\lambda \in \Alt(\mu)}
    \sum_{a} w_a d(\mu_a,\lambda_a),
\]
where $\Alt(\mu)$ denotes the set of alternative instances with a different
correct answer. For maxmin game trees, \citet{kaufmann2021mixture} show in
Section~5.2.2 that the generic GLR stopping rule, combined with Tracking, gives
an asymptotically optimal fixed-confidence procedure for depth-two game trees
under the required regularity assumptions on the oracle weights. In that setting,
the problem has rank $L+1$, and the oracle weights are obtained from the
variational problem above; they are described as computable by disciplined convex
optimization tools, but only numerically computable in general.

Our results address a different criterion. Fixed-confidence guarantees concern a
data-dependent stopping time and a prescribed error probability $\delta$, whereas
fixed-budget guarantees concern the error probability after a deterministic
budget $T$. Therefore, the fixed-confidence optimality result does not directly
provide a fixed-budget error exponent

Nevertheless, the fixed-confidence analysis provides useful intuition about why
maximin identification is structurally different from ordinary best-arm
identification. In Section~6 of \citet{garivier2016maximin}, the authors analyze
the simplest nontrivial $(2,2)$ case and show that the optimal allocation can
depend delicately on the relative positions of the leaf means. Translating their
notation to ours, let
\[
    \mu_1 := \mu_{1,1}, \qquad
    \mu_2 := \mu_{1,2}, \qquad
    \mu_3 := \mu_{2,1}, \qquad
    \mu_4 := \mu_{2,2}.
\]
In the regime $\mu_4 > \mu_2$, their fixed-confidence oracle allocation satisfies
$w_4^\star(\mu)=0$. Hence an asymptotically optimal strategy may sample leaf
$(2,2)$ only a vanishing fraction of the time, even though this leaf is part of
the tree. This illustrates a phenomenon that is central to our lower-bound
discussion: some leaves may be irrelevant to the closest way of confusing the
identity of the optimal subtree.

A closely related phenomenon appears in our fixed-budget lower bound. For a base
instance $\nu$, the proof constructs alternative instances that change the
identity of the optimal subtree while incurring small information cost. Such a
closest confusing alternative need not perturb every leaf. In the $(2,2)$
example above, the alternative may leave leaf $(2,2)$ unchanged, so the
corresponding lower-bound exponent does not involve the gap associated with that
leaf. From an information-theoretic point of view, this leaf is not needed to
create the most economical confusion.

This does not immediately imply a matching upper bound. In an upper-bound proof,
the algorithm must operate without knowing in advance which leaves are
structurally critical for the current instance. In particular, it must avoid a
failure mode specific to max--min trees: if the true minimizing leaf of a
suboptimal subtree is removed while the subtree remains active, then the
surviving minimum of that subtree is artificially increased, and later phases may
compare against a distorted max--min value. Therefore a sound fixed-budget
algorithm may need to spend samples to distinguish whether a leaf is irrelevant
or whether it is the minimizer that certifies the subtree's value. This is the
reason our elimination rule is designed to be tree-safe. 
% rather than simply to
% track the leaves appearing in a particular closest alternative.

Recent work on fixed-budget best-arm identification suggests that this
phenomenon should not be surprising. Even in ordinary unstructured BAI, sharp
fixed-budget optimality is more delicate than fixed-confidence optimality.
\citet{komiyama2022minimax} study minimax normalized error exponents and show
that one natural optimal rate, denoted $R^{\mathrm{go}}$, is tied to an oracle
allocation depending on the final empirical distribution; they emphasize that
achieving this rate by an actual adaptive algorithm is nontrivial. Their delayed
optimal tracking construction attains a related asymptotic rate, but is primarily
theoretical because it requires optimizing over high-dimensional allocation
functions. \citet{degenne2023existence} further shows that some fixed-budget
identification problems do not admit a single complexity, in the sense of a lower
bound on the error exponent that is attained by one algorithm on all instances;
this failure already occurs in simple Bernoulli best-arm identification with two
arms.

Taken together, these observations suggest that closing the remaining gap for
fixed-budget max--min identification may require a sharper alignment between two
objects: on the algorithmic side, sampling rules that adaptively discover which
leaves are critical without corrupting the max--min structure; and on the
information-theoretic side, lower bounds based on the truly closest confusing
alternatives. Our results provide one step in this direction by giving an
explicit $\varepsilon$-agnostic fixed-budget upper bound and a complementary
lower-bound analysis that isolates a subset of structurally critical leaves.

\section{Experiments}
\label{sec:experiments}

% It confirms that our theory correctly predicts the log error probability

% We evaluate the proposed SR-MCTS algorithm on synthetic depth-2 max--min trees. Each leaf
% $(i,j)$ generates independent Gaussian samples
% \[
%     X_{i,j}\sim \mathcal N(\mu_{i,j},\sigma^2),
% \]
% with common variance $\sigma^2$ and instance-dependent means. The value of subtree $i$ is
% $v_i=\min_{j\in[L]}\mu_{i,j}$, and the learner outputs a subtree $\widehat i_T$ after a fixed
% budget $T$. We report the empirical probability of recommending a non-$\varepsilon$-good subtree,
% \[
%     \Pr(\widehat i_T\notin G_\varepsilon),
%     \qquad
%     G_\varepsilon=\{i\in[K]:v_i\ge v^\star-\varepsilon\},
%     \qquad
%     v^\star=\max_i v_i .
% \]
% For $\varepsilon$-good experiments, $\varepsilon$ is used only for evaluation; the algorithms do not
% receive $\varepsilon$ as input.

In this section, we evaluate the performance of SR-MCTS against three baselines: uniform sampling, Bottom-up SAR, and SARCompare, using a maxmin tree setting. The results show that SR-MCTS consistently outperforms these methods in both best-subtree identification and $\varepsilon$-good subtree identification. We further examine the sample allocation patterns of each method and show that SR-MCTS exhibits the desired adaptive allocation behavior. Finally, we provide empirical evidence supporting our theoretical bound. All simulations were run on a standard CPU machine; no GPU was used.

\textfloatsep=.5em

\paragraph{Baselines.}
% We compare against uniform sampling, a Bottom-up SAR baseline, and SAR+Compare.
The Bottom-up SAR baseline applies the multi-bandit Successive Accepts and Rejects
algorithm for multi-bandit best arm identification in~\citet{bubeck13multiple} to the negated rewards within
subtrees, thereby identifying one candidate minimum leaf per subtree. SAR+Compare first
uses this bottom-up stage with budget fraction $\alpha$, and then uses Successive Rejects~\citep{audibert10best}
to compare the selected candidate minima across subtrees. We set $\alpha = 0.8$ for all experiments.

% We compare against uniform sampling, bottom-up SAR, and SAR+Compare. Bottom-up
% SAR first identifies one candidate minimizing leaf in each subtree and then chooses the subtree
% with the largest estimated candidate value. SAR+Compare uses a fraction $\alpha$ of the budget
% for this bottom-up stage and the remaining budget to compare the resulting candidate minima. We
% use $\alpha = 0.8$. \yinan{To be confirmed}

% Throughout, ties are broken by a fixed deterministic rule. For $M\ge 2$, define
% \[
%     \overline{\log}(M):=\frac12+\sum_{r=2}^M\frac1r .
% \]

\begin{algorithm}[H]
\caption{\textsc{MultiBanditSAR} Subroutine}
\label{alg:multibandit-sar-subroutine}
\begin{algorithmic}[1]
\REQUIRE A collection of bandit problems, budget $B$
\STATE Apply the Successive Accepts and Rejects algorithm of~\citet{bubeck13multiple}.
\STATE In each bandit problem, return one estimated best arm and its empirical mean.
\end{algorithmic}
\end{algorithm}

\begin{algorithm}[H]
\caption{\textsc{SR} Subroutine}
\label{alg:sr-subroutine}
\begin{algorithmic}[1]
\REQUIRE A set of arms, budget $B$
\STATE Apply Successive Rejects to the arms using budget $B$.
\STATE Return the final surviving arm and its empirical mean.
\end{algorithmic}
\end{algorithm}

\begin{algorithm}[H]
\caption{Uniform Sampling Baseline}
\label{alg:uniform-baseline}
\begin{algorithmic}[1]
\REQUIRE Budget $T$, depth-2 tree with $K$ subtrees and $L$ leaves per subtree
\STATE Sample all $KL$ leaves as evenly as possible using budget $T$.
\STATE Compute the empirical min-value of each subtree:
\[
    \widehat v_i = \min_{j\in[L]} \widehat\mu_{i,j}.
\]
\STATE \textbf{return}
\[
    \widehat i_T \in \arg\max_{i\in[K]} \widehat v_i .
\]
\end{algorithmic}
\end{algorithm}

\begin{algorithm}[H]
\caption{Bottom-up SAR Baseline}
\label{alg:bottom-up-sar}
\begin{algorithmic}[1]
\REQUIRE Budget $T$, depth-2 tree with $K$ subtrees and $L$ leaves per subtree
\STATE Treat each subtree as one bandit problem with arms given by its leaves.
\STATE Apply \textsc{MultiBanditSAR} with budget $T$ to the negated rewards $-X_{i,j}$.
\STATE Let $\widehat j(i)$ be the selected leaf in subtree $i$.
\STATE Estimate each subtree by the empirical mean of its selected candidate minimum:
\[
    \widehat m_i = \widehat\mu_{i,\widehat j(i)} .
\]
\STATE \textbf{return}
\[
    \widehat i_T \in \arg\max_{i\in[K]} \widehat m_i .
\]
\end{algorithmic}
\end{algorithm}

\begin{algorithm}[H]
\caption{SAR+Compare Baseline}
\label{alg:sar-compare}
\begin{algorithmic}[1]
\REQUIRE Budget $T$, split parameter $\alpha\in(0,1)$, depth-2 tree with $K$ subtrees and $L$ leaves
\STATE $T_{\mathrm{min}}\leftarrow \lfloor \alpha T\rfloor$ and
$T_{\mathrm{cmp}}\leftarrow T-T_{\mathrm{min}}$.
\STATE Apply \textsc{MultiBanditSAR} with budget $T_{\mathrm{min}}$ to the negated rewards $-X_{i,j}$.
\STATE Let $\widehat j(i)$ be the selected candidate minimum in subtree $i$.
\STATE Apply \textsc{SR} with budget $T_{\mathrm{cmp}}$ to the candidate leaves
\[
    \{(i,\widehat j(i)):i\in[K]\},
\]
using the original rewards.
\STATE \textbf{return} the subtree containing the surviving candidate leaf.
\end{algorithmic}
\end{algorithm}

\paragraph{Construction of $(K,L)$-Maxmin Trees.} Since the tree value is determined by the minimum leaf value within each subtree, we directly control the minimum values of the subtrees. Specifically, we set these minimum values to be evenly spaced between $0$ and $-0.18$. For the remaining leaf nodes, we consider two variants. In a structured version, for each subtree $i$, we sample or assign its leaf values from the interval between its minimum value and that value plus $0.04$. In a random version, we sample the remaining leaf values uniformly from the interval between the subtree minimum value and $1$. For all constructions, we set $K = 10$ and $L = 10$. Figure~\ref{fig:tree_means} visualizes the mean values of all leaf nodes for both instances. We use both the structured and random instances in Experiments 1 and 2, and use only the structured instance in Experiments 3 and 4. The default budget is 2000. For the first experiment, we vary the budget in the range of 2000 to 10000. For all experiments, we run $1000$ independent trials and report standard errors whenever applicable. 

\begin{figure}[H]
    \begin{center}
      \includegraphics[width=0.6\linewidth]{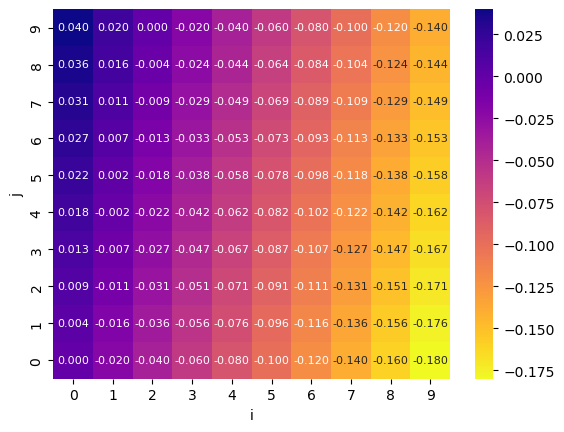}
      \includegraphics[width=0.6\linewidth]{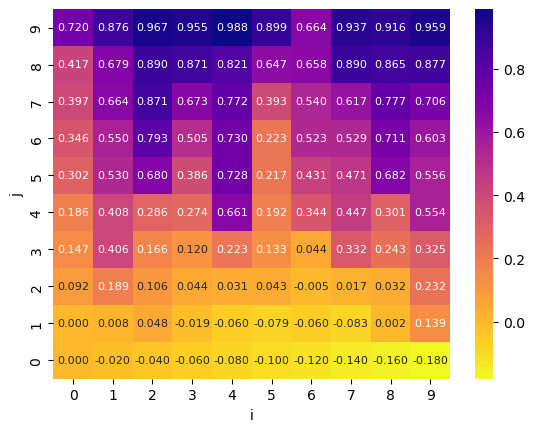}
    \end{center}
  \caption{Visualization of the mean values in a max-min tree. We use the convention $(i,j)$ to denote a leaf node, where $i$ indexes the subtree and $j$ indexes the leaf within that subtree. Top: structured instance. Bottom: random instance.}
  \label{fig:tree_means}   
\end{figure}

\paragraph{Experiment 1: Best-Subtree Identification.} The first experiment evaluates best-subtree identification, where the goal is to correctly identify the subtree with the largest minimum value. Figure~\ref{fig:probmis_vs_budget} illustrates the probability of misidentification as a function of the sampling budget $T$. As expected, the probability of misidentification decreases for all methods as more samples become available. Notably, SR-MCTS consistently statistically outperforms the competing methods across the entire range of budgets. These results demonstrate that SR-MCTS is highly sample-efficient, identifying the optimal subtree more reliably with fewer samples.

\begin{figure}[H]
    \begin{center}
      \includegraphics[width=0.6\linewidth]{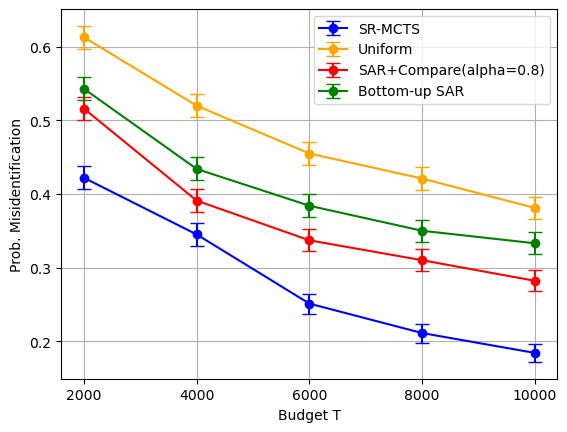}
      \includegraphics[width=0.6\linewidth]{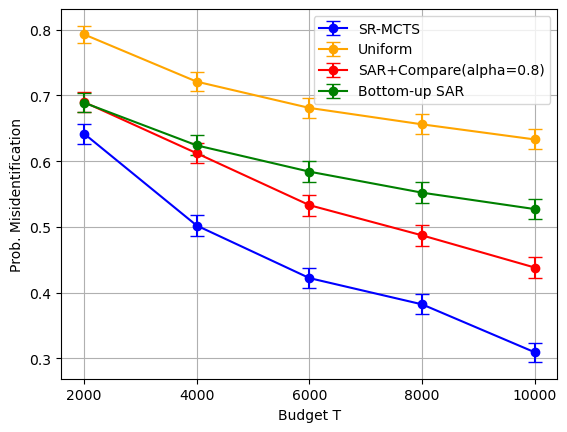}
    \end{center}
  \caption{Probability of misidentification for all methods as the budget $T$ increases. Top: structured instance. Bottom: random instance.}
  \label{fig:probmis_vs_budget}
\end{figure}

\paragraph{Experiment 2: $\varepsilon$-Good Subtree Identification.} The second experiment considers $\varepsilon$-good subtree identification, where the goal is to identify any subtree whose minimum value is within $\varepsilon$ of the optimal subtree. Recall from the preliminaries that subtree $i$ is $\varepsilon$-good if $\mu_{i,1} \ge \mu_{1,1} - \varepsilon$. We note that all algorithms are $\varepsilon$-agnostic: the value of $\varepsilon$ is not provided as an input. Nevertheless, the algorithms can still be evaluated under this relaxed objective by checking whether the returned subtree is $\varepsilon$-good. As $\varepsilon$ increases, the problem becomes easier because more subtrees satisfy the $\varepsilon$-good condition. Consequently, we expect the probability of misidentification to decrease for larger values of $\varepsilon$. In our max-min tree construction, when $\varepsilon = 0.08$, approximately half of the subtrees are $\varepsilon$-good, making the identification problem substantially easier. This trend is observed for all methods in Figure~\ref{fig:probmis_vs_eps}. Across all values of $\varepsilon$, SR-MCTS consistently achieves a lower probability of misidentification, indicating that it is more effective than the competing methods at identifying near-optimal subtrees.

\begin{figure}[H]
    \begin{center}
      \includegraphics[width=0.7\linewidth]{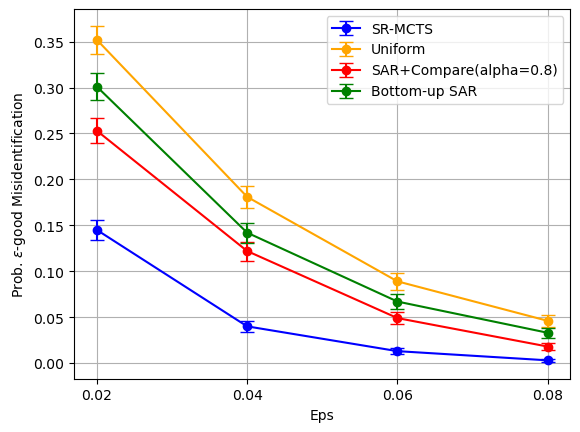}
      \includegraphics[width=0.7\linewidth]{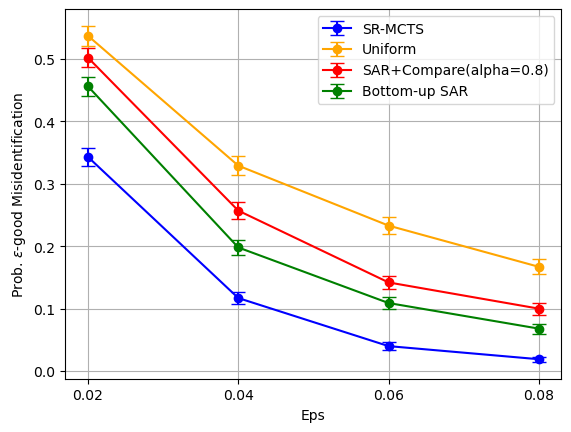}
    \end{center}
  \caption{Probability of $\eps$-good misidentification for all methods as $\eps$ increases. Top: structured instance. Bottom: random instance.}
  \label{fig:probmis_vs_eps}
\end{figure}

\paragraph{Experiment 3: Sample Allocation Patterns.} The third experiment examines how samples are allocated across leaf nodes by SR-MCTS, SAR+Compare, and Bottom-up SAR as shown Figure~\ref{fig:nsamples_srmcts},~\ref{fig:nsamples_sarcompare}, and ~\ref{fig:nsamples_bottomupsar} respectively. The heatmaps show the average number of samples assigned to each leaf node, where $i$ indexes the subtree and $j$ indexes the leaf within that subtree. SR-MCTS exhibits a noticeably different allocation pattern compared with SAR+Compare and Bottom-up SAR. Both SAR+Compare and Bottom-up SAR allocate samples more uniformly across subtrees, with slightly higher sampling effort on the minimum-valued leaves within each subtree. In contrast, SR-MCTS concentrates more samples on critical regions of the tree, especially on leaves that are most informative for distinguishing promising subtrees from suboptimal ones. This adaptive behavior is desirable in the max-min tree setting. Once a subtree is clearly suboptimal, SR-MCTS can reduce sampling effort on that subtree and reallocate samples to more competitive regions. For example, bad subtrees whose maximum leaf value is already below the minimum value of the optimal subtree are likely to be eliminated early. As a result, SR-MCTS spends fewer samples on clearly inferior subtrees and focuses more on the leaf nodes that are most relevant for identifying the best subtree.

\begin{figure}[H]
    \begin{center}
      \includegraphics[width=0.5\linewidth]{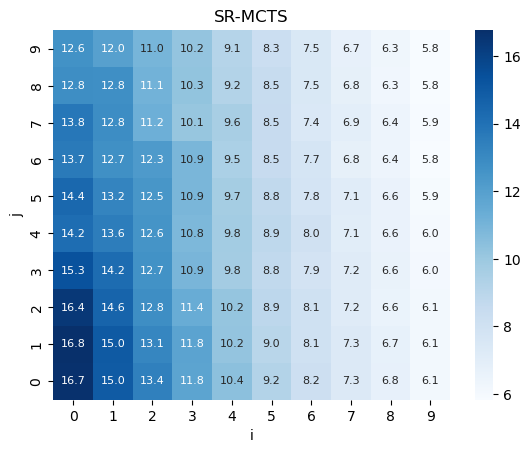}
    \end{center}
  \caption{Heatmap of SR-MCTS Sampling Behavior}
  \label{fig:nsamples_srmcts}
\end{figure}

\begin{figure}[H]
    \begin{center}
      \includegraphics[width=0.5\linewidth]{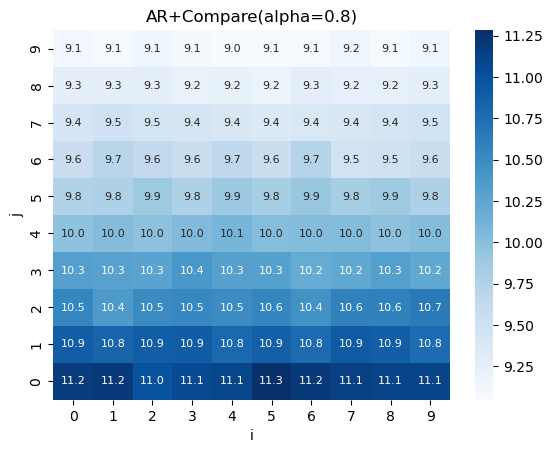}
    \end{center}
  \caption{Heatmap of SAR+Compare Sampling Behavior }
  \label{fig:nsamples_sarcompare}
\end{figure}

\begin{figure}[H]
    \begin{center}
      \includegraphics[width=0.5\linewidth]{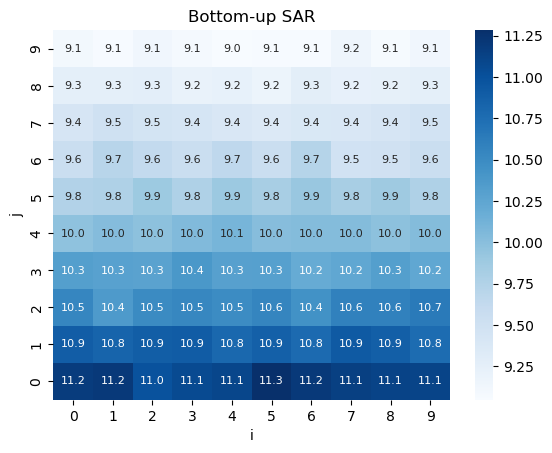}
    \end{center}
  \caption{Heatmap of Bottom-up SAR Sampling Behavior}
  \label{fig:nsamples_bottomupsar}
\end{figure}

\paragraph{Experiment 4: Empirical Validation of Theoretical Bounds.} Lastly, we empirically examine whether the dependence on $H_2(\varepsilon)$ in our upper-bound analysis, defined in Eq.~\ref{eq:H-defs}, is reflected in practice. Figure~\ref{fig:h2_theoretical} plots the log probability of misidentification against the normalized theoretical scaling term
$\frac{-T + KL}{\log(KL)H_2(\varepsilon)}$, ignoring constant factors. The approximately linear trend observed in the plot is consistent with our theoretical analysis and supports the interpretation of $H_2(\varepsilon)$ as a key problem-dependent complexity term governing the difficulty of max-min tree identification.

\begin{figure}[H]
    \begin{center}
      \includegraphics[width=0.7\linewidth]{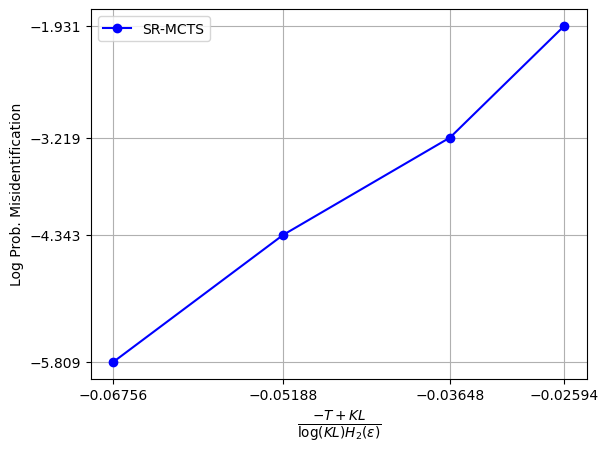}
    \end{center}
  \caption{Log probability of misidentification for SR-MCTS as a function of the theoretical scaling term $\frac{-T+KL}{\log(KL) H_2(\varepsilon)}$}
  \label{fig:h2_theoretical}
\end{figure}

% Figure 
% In the first experiment, we demonstrate 

% \begin{figure}[H]
%   \begin{subfigure}{0.5\linewidth}
%     \begin{center}
%       \includegraphics[width=1.0\linewidth]{figures/probmis_vs_budget.png}
%       % \caption{training}
%     \end{center}
%   \end{subfigure}
%   \begin{subfigure}{0.5\linewidth}
%     \begin{center}
%       \includegraphics[width=1.0\linewidth]{figures/propmis_vs_eps.png}
%       % \caption{}
%     \end{center}
%   \end{subfigure}
%   \caption{.}
%   \label{fig:uniform_mse_vs_num_actions}    
% \end{figure}

%%%%%%%%%%%%%%%%%%%%%%%%%%%%%%%%%%%%%%%%%%%%%%%%%%%%%%%%%%%%

% \appendix

% \section{Technical appendices and supplementary material}
% Technical appendices with additional results, figures, graphs, and proofs may be submitted with the paper submission before the full submission deadline (see above). You can upload a ZIP file for videos or code, but do not upload a separate PDF file for the appendix. There is no page limit for the technical appendices. 

% Note: Think of the appendix as ``optional reading'' for reviewers. The paper must be able to stand alone without the appendix; for example, adding critical experiments that support the main claims to an appendix is inappropriate. 

%%%%%%%%%%%%%%%%%%%%%%%%%%%%%%%%%%%%%%%%%%%%%%%%%%%%%%%%%%%%

% \newpage
% \input{checklist.tex}

\end{document}